\documentclass[11pt]{article}

\usepackage[T1]{fontenc}
\usepackage[utf8]{inputenc}
\usepackage[margin=1in]{geometry}
\usepackage{setspace}
\usepackage[sc,osf]{mathpazo}

\usepackage{amsmath}
\usepackage{amssymb}
\usepackage{amsfonts}

\usepackage{amsthm}
\usepackage{mathtools}
\usepackage{bm}

\usepackage{graphicx}
\usepackage{subcaption}
\usepackage{booktabs}
\usepackage{tabularx}
\usepackage{multirow}
\usepackage{array}
\usepackage{float}
\usepackage{enumitem}
\usepackage[dvipsnames]{xcolor}
\usepackage{tikz}
\usetikzlibrary{arrows.meta,positioning,decorations.markings,patterns,calc}

\usepackage{authblk}

\usepackage[authoryear,round]{natbib}
\usepackage[colorlinks=true,allcolors=green!60!black]{hyperref}
\usepackage{microtype}

\newcommand{\Pbb}{\mathbb{P}}
\newcommand{\Ebb}{\mathbb{E}}
\newcommand{\Rbb}{\mathbb{R}}
\newcommand{\cH}{\mathcal{H}}
\newcommand{\cA}{\mathcal{A}}

\newcommand{\cC}{\mathcal{C}}
\newcommand{\cD}{\mathcal{D}}

\newcommand{\cF}{\mathcal{F}}
\newcommand{\cG}{\mathcal{G}}
\newcommand{\cP}{\mathcal{P}}
\newcommand{\cR}{\mathcal{R}}
\newcommand{\cX}{\mathcal{X}}
\newcommand{\cY}{\mathcal{Y}}

\newcommand{\dx}{\,\mathrm{d}}
\newcommand{\KL}{D_{\mathrm{KL}}}

\theoremstyle{plain}
\newtheorem{theorem}{Theorem}[section]
\newtheorem{proposition}[theorem]{Proposition}
\newtheorem{lemma}[theorem]{Lemma}
\newtheorem{corollary}[theorem]{Corollary}

\theoremstyle{definition}
\newtheorem{definition}[theorem]{Definition}
\newtheorem{example}[theorem]{Example}

\theoremstyle{remark}
\newtheorem{remark}[theorem]{Remark}

\title{\textbf{Bayes with No Shame:\\[4pt]
Admissibility Geometries of Predictive Inference}}

\author[1]{Nicholas~G.\ Polson}
\author[2]{Daniel Zantedeschi}

\affil[1]{Booth School of Business, University of Chicago, Chicago, IL 60637\\
  \texttt{ngp@chicagobooth.edu}}
\affil[2]{Muma College of Business, University of South Florida, Tampa, FL 33620\\
  \texttt{danielz@usf.edu}}

\date{August 2026}

\hypersetup{
  pdftitle={Bayes with No Shame: Admissibility Geometries of Predictive Inference},
  pdfauthor={Nicholas G. Polson and Daniel Zantedeschi},
  pdfsubject={Predictive inference and criterion-relative admissibility},
  pdfkeywords={admissibility, Blackwell dominance, martingales, e-processes,
    conformal prediction, sequential inference}
}

\begin{document}
\maketitle

\begin{abstract}\noindent
Predictive systems may combine a predictor, a sequential monitor,
a prediction set, and an online strategy, each governed by a
different optimality criterion.  We study when a guarantee for one
component can be transferred to another.
Four admissibility geometries shape sequential and distribution-free
inference: Blackwell risk dominance over convex risk sets,
anytime-valid admissibility within the $e$-process class,
fixed-level marginal coverage with expected-length efficiency
within a declared rank-indexed family under exchangeability, and choice-based
approachability (CApp) boundary-feasibility, under which each
parameter's time-averaged risk reaches its oracle boundary value
in Ces\`{a}ro average.
Each geometry carries a distinct certificate of optimality: a
supporting-hyperplane prior, a nonnegative martingale in the
point-null AV setting, a conformal exchangeability rank, and a Ces\`{a}ro steering
argument, respectively.  We embed all four in a common ambient
space, the space $\Sigma$ of \emph{predictive systems}: tuples
$\Delta=(\delta,E,\hat C,\sigma)$ of predictor, $e$-process,
prediction set, and online strategy.  Within $\Sigma$ we prove a
\emph{witness-based non-nesting theorem}: for each ordered pair of
the four criterion classes, an explicit predictive system,
active in both relevant coordinates, lies in one class but not
the other (Theorems~\ref{thm:separation} and~\ref{thm:extended-separation}).
The result establishes non-nesting across different object spaces
and partial orders; it does not claim that the four criteria are
practically incompatible.

Bayesian posterior predictive means under a single prior are
martingales under their prior predictive law.  For a point null,
anytime-valid admissibility within $e$-processes is equivalent to
the nonnegative martingale property \citep{ramdas2022}.  By
contrast, self-consistency under a predictor's own law does not
imply Blackwell admissibility, as shown by the Bernoulli log-loss
counterexample (Theorem~\ref{thm:insuff}); neither coverage validity
nor CApp boundary-feasibility requires a martingale structure.

All four criteria fit a common design template: specify the
criterion's feasibility constraint, then optimize Bayesian
integrated risk within it.  Their decision spaces, partial orders,
and risk functionals nevertheless remain distinct.  Admissibility
is criterion-relative.
\end{abstract}

\paragraph{Keywords.}
admissibility; Blackwell dominance; martingales; e-processes;
conformal prediction; sequential inference; criterion separation.

\paragraph{MSC 2020.}
62C15; 62F03; 62L10; 60G42.

\section{Introduction}\label{sec:intro}

A classical example in Blackwell-style statistical decision
theory \citep{blackwell1954} concerns prediction of the next outcome
of a Bernoulli sequence under log loss.  The natural plug-in rule
$\hat p_n=S_n/n$ looks beyond reproach: it is a martingale under
its own predictive law, it concentrates on the true~$\theta$ by
the law of large numbers, and it is the maximum likelihood
estimator at every sample size.  Yet for every $\theta\in(0,1)$
and every $n\ge 1$ it is strictly dominated by the Bayes
predictive $\hat p_n^B=(S_n+\tfrac12)/(n+1)$.  The plug-in
assigns probability zero to events that occur with positive
probability, incurring infinite log loss on the event
$\{S_n\in\{0,n\}\}$ and hence infinite risk; its martingale
coherence does not rescue it
from inadmissibility.  Blackwell's point was geometric: the
plug-in risk vector sits above the lower boundary of the convex
risk set, and a supporting-hyperplane prior, the Jeffreys
$\mathrm{Beta}(\tfrac12,\tfrac12)$, exposes a strictly
dominating Bayes alternative on that boundary.

The same distinction appears in modern predictive systems.
Bayesian prediction is evaluated through proper scoring rules and
posterior predictives; conformal prediction
\citep{vovk2005} achieves distribution-free coverage without
optimizing any loss; $e$-processes and safe anytime-valid testing
\citep{shafer2019,ramdas2023,grunwald2024} control type-I error at every
stopping time through the $e$-process condition; for point-null
admissibility, the corresponding condition is the nonnegative
martingale property; and online calibration and defensive forecasting
\citep{foster1998, vovk2005defensive} achieve calibration in the
Ces\`{a}ro sense through fixed-point arguments, without optimizing
any per-round loss function.  These procedures answer different
questions, and a guarantee established under one criterion need
not hold under another.

\paragraph{Practical motivation: modular predictive systems.}
A foundation model may be trained under a
proper scoring rule, monitored through sequential safety indicators,
wrapped by a conformal uncertainty layer, and evaluated for
calibration after deployment.  The relevant questions are which
criterion supports each optimality claim and which guarantees fail
to transfer between modules.

We use ``no-shame'' throughout as a mnemonic for
Blackwell-admissible, and the term is chosen deliberately.
\citet{williams1993shame} distinguishes guilt, which answers to an
external rule, from shame, which is the recognition of falling
below a standard one endorses oneself.  Blackwell dominance is
shame in precisely this sense: a dominated rule is condemned by no
outside authority, only by the loss function the analyst
declared.  The metaphor does rhetorical work because it locates
the criticism \emph{inside} the criterion; for the same reason
it points to pluralism rather than to a ranking.  A verdict
that is internal to a standard offers no standard-free court of
appeal: the analyst who endorses anytime validity, or marginal
coverage, or long-run calibration, answers to that standard and
not automatically to the others.  The paper's formal content is
order-theoretic; the metaphor plays no role in the results, and
readers may substitute ``Blackwell-admissible'' wherever
``no-shame'' appears.  The separation theorems
(Theorems~\ref{thm:separation} and~\ref{thm:extended-separation})
formalize the same point: admissibility is criterion-relative.

The four criteria, each defined relative to a different
performance objective and a different certificate of optimality,
are as follows.  \emph{Blackwell admissibility}
(Sections~\ref{sec:primitives}--\ref{sec:geometry}): no competing
rule has uniformly lower risk over~$\Theta$; its certificate is a
supporting-hyperplane prior.  \emph{Anytime-valid admissibility}
(Section~\ref{sec:separation}): for a point null, the e-process is a
nonnegative martingale under the null, equivalent to admissibility
within $\cC_{\mathrm{AV}}$; its certificate is a nonnegative martingale.
\emph{Marginal coverage validity} (Section~\ref{sec:separation}):
$\Pbb(Y_{n+1}\in\hat C_n(X_{n+1}))\ge 1-\alpha$ under
exchangeability, with expected-length efficiency within the
fixed-level feasible class; its certificate is a conformal
exchangeability rank.  \emph{CApp boundary-feasibility}
(Section~\ref{sec:constructive}): the time-averaged risk converges
to the oracle boundary value $\partial_-\cR(\theta)$ for every
$\theta$;
its certificate is a Ces\`{a}ro steering argument.

Our contributions are as follows.

\begin{enumerate}[label=\textup{(\arabic*)},leftmargin=*]
\item \emph{Witness-based non-nesting in a common space.}  We
introduce the space $\Sigma$ of predictive systems
$\Delta=(\delta,E,\hat C,\sigma)$ (Definition~\ref{def:predictive-system})
as a common ambient object for the four criteria, and prove that
the four corresponding admissibility classes $\mathfrak{B}$,
$\mathfrak{A}$, $\mathfrak{C}$, $\mathfrak{D}\subseteq\Sigma$ are
pairwise non-nested by exhibiting, for every ordered pair of
classes, an explicit predictive system that is active in both
relevant coordinates and lies in one class but not the other
(Theorems~\ref{thm:separation} and~\ref{thm:extended-separation}).
This product-space formulation makes the assignment of guarantees
to modules explicit.
\item \emph{Distinguished martingale claims.}  We separate three
notions of coherence (Definition~\ref{def:three-coherence}):
fixed-sample Blackwell admissibility, single-prior sequential
Bayes coherence, and self-consistency under a data-dependent
predictive law.  Single-prior Bayes implies martingale coherence
under the prior predictive (Proposition~\ref{prop:bayes-martingale}),
and, for the point-null setting used in the separation results,
the nonnegative martingale property within $\cC_{\mathrm{AV}}$
is equivalent to admissibility \citep{ramdas2022}.  The Bernoulli
log-loss counterexample (Theorem~\ref{thm:insuff}) supplies a
self-consistent predictor that is not Blackwell admissible; this
witness, not a general theorem, establishes that self-consistency
is not sufficient for Blackwell admissibility.
\item \emph{Constrained Bayes as a design template.}  For each
criterion we specify a decision space $\cD_C$, feasibility set
$\cF_C\subseteq\cD_C$, and risk functional $\cR_C$, then minimize
Bayesian integrated risk over $\cF_C$
(Definition~\ref{def:cb}).  The common form does not identify the
four decision spaces or their partial orders.
\end{enumerate}

Because the four procedures have different types, their
non-nesting may appear definitional.  The separation theorems use
the same Bernoulli experiment and active coordinates to show that
none of the criterion orders subsumes another.

\paragraph{Roadmap.}
Sections~\ref{sec:primitives}--\ref{sec:geometry} develop the
Blackwell geometry on the convex risk set, including the
supporting-hyperplane identification and the full-support
Bayes-implies-no-shame corollaries.
Section~\ref{sec:martingale} introduces the martingale layer and
the three coherence notions, and establishes the plug-in
counterexample.  Section~\ref{sec:separation} introduces the
predictive-system meta-space $\Sigma$, defines fixed-level
coverage admissibility, presents the constrained Bayes schema, and
proves the first separation theorem
(Theorem~\ref{thm:separation}).  Section~\ref{sec:constructive}
develops the constructive--Ces\`{a}ro distinction and the extended
separation (Theorem~\ref{thm:extended-separation}) including the
misspecified-prior witness.  Section~\ref{sec:bernoulli} collects
the Bernoulli laboratory and
Section~\ref{sec:numerical} reports three Monte Carlo
illustrations.  Section~\ref{sec:applications} reframes
forecasting, sequential testing, and conformal wrappers as
motivations rather than consequences;
Section~\ref{sec:conclusion} closes with the discussion.

\section{Primitive Objects}\label{sec:primitives}

The decision-theoretic framework requires five objects: a parameter
space, an action space, a loss function, a sample space, and a
statistical model.  We adopt the extended-real formulation that
allows $+\infty$ risk, accommodating proper scoring rules such as
log loss from the outset.

\begin{definition}[Statistical decision problem]\label{def:sdp}
A \emph{statistical decision problem} is a tuple
$(\Theta,\cA,L,\mathcal{X},\mathcal{P})$ where:
\begin{enumerate}[label=(\roman*),nosep]
\item $\Theta\subset\Rbb^d$ is the \emph{parameter space}, compact
  and metrizable.
\item $\cA\subset\Rbb^m$ is the \emph{action space}, compact and
  metrizable.
\item $\mathcal{X}$ is the \emph{sample space}, a Polish space, and
  $\mathcal{P}=\{P_\theta:\theta\in\Theta\}$ is the statistical model.
\item $L:\Theta\times\cA\to[0,\infty]$ is the \emph{loss function},
  satisfying:
  \begin{enumerate}[label=(\alph*),nosep]
  \item $L(\theta,a)$ is measurable in $\theta$ for every $a$;
  \item $L(\theta,\cdot)$ is lower semicontinuous for every $\theta$;
  \item $L$ is bounded below (by zero, without loss of generality).
  \end{enumerate}
\end{enumerate}
Risks are allowed to take the value $+\infty$; dominance is defined
in the extended-real sense via the coordinatewise ordering on
$[0,\infty]^\Theta$.
\end{definition}

\paragraph{Scope of compactness.}
Compactness of $\Theta$ is used for the geometric, complete-class,
and $\Sigma$-level results below.  The definitions of risk and
dominance, the measure-relative martingale statements, and CApp
boundary-feasibility extend verbatim to noncompact parameter spaces.
Results stated on the full Bernoulli model $\Theta=(0,1)$ are
standalone pointwise results and do not invoke the compactness-based
geometry; geometric and $\Sigma$-level conclusions are instantiated
on explicitly compact Bernoulli submodels.

\begin{definition}[Decision rules]\label{def:rules}
Given data $X^n=(X_1,\ldots,X_n)\in\mathcal{X}^n$, a
\emph{(randomized) decision rule} is a measurable map
$\delta:\mathcal{X}^n\to\Delta(\cA)$, where $\Delta(\cA)$ denotes
the probability measures on $\cA$.  The class $\cD$ of all decision
rules is convex: for $\lambda\in[0,1]$, the mixture
$\delta_\lambda=\lambda\delta_1+(1-\lambda)\delta_2$ is defined by
drawing from $\delta_1$ or $\delta_2$ with probabilities $\lambda$
and $1-\lambda$ independently of $X^n$.
\end{definition}

\paragraph{Why extended-real risk.}
Proper scoring rules \citep{gneiting2007} used in this paper,
chiefly the log
loss $L(\theta,p)=-\theta\log p-(1-\theta)\log(1-p)$, take the
value $+\infty$ at boundary predictions $p\in\{0,1\}$ when
$\theta\in(0,1)$.  Allowing $R(\theta,\delta)\in[0,\infty]$ from
the outset is therefore essential, not a technicality: the
plug-in counterexample of Section~\ref{sec:martingale} hinges
on a predictor whose loss is $+\infty$ with positive probability,
and hence whose risk is $+\infty$; truncating to bounded loss would erase the
phenomenon.

\begin{definition}[Risk function]\label{def:risk}
The \emph{risk function} of a (randomized) decision rule
$\delta\in\cD$ is
\[
  R(\theta,\delta)
  \;=\; \Ebb_\theta\!\left[\,\int_\cA L(\theta,a)\,\dx\delta(X^n)(a)\right]
  \;\in\; [0,\infty],
  \qquad \theta\in\Theta,
\]
where the inner integral marginalizes over the action $a$ drawn
from the conditional distribution $\delta(X^n)\in\Delta(\cA)$, and
the outer expectation is over $X^n\sim P_\theta$.  For
deterministic rules $\delta(X^n)=a(X^n)$ this reduces to
$R(\theta,\delta)=\Ebb_\theta[L(\theta,a(X^n))]$.
Bayes risks are well-defined in $[0,\infty]$.  Convexity follows
from randomization.  Lower semicontinuity under the topology used
in Section~\ref{sec:geometry} is part of the standing regularity
assumptions stated there, and follows by
direct verification in the finite-sample settings used for the
explicit witnesses.
\end{definition}

\begin{example}[Bernoulli log-loss, our running illustration]%
\label{ex:bern-running}
Throughout the paper, we return to the following Bernoulli
laboratory.  On compact submodels it is an instance of
Definitions~\ref{def:sdp}--\ref{def:risk}.  Let
$X_i\overset{\mathrm{iid}}{\sim}\mathrm{Bernoulli}(\theta)$ with
$\theta\in\Theta\subseteq(0,1)$.  The action space is
$\cA=[0,1]$ (a forecast probability).  The loss is the binary
log loss
\[
  L(\theta,p) = -\theta\log p - (1-\theta)\log(1-p),
  \qquad p\in[0,1],
\]
extended by $L(\theta,0)=L(\theta,1)=+\infty$ when $\theta\in(0,1)$.
For a deterministic predictor $\hat p_n=\hat p_n(X^n)$, the
risk is
\[
  R(\theta,\hat p_n) = \Ebb_\theta\bigl[L(\theta,\hat p_n(X^n))\bigr]
  \;=\; \Ebb_\theta\bigl[\KL\bigl(\mathrm{Bern}(\theta)\,\|\,
       \mathrm{Bern}(\hat p_n(X^n))\bigr)\bigr] + H(\theta),
\]
where $H(\theta)=-\theta\log\theta-(1-\theta)\log(1-\theta)$ is
the binary entropy and $\KL$ is the Kullback--Leibler divergence.
The entropy $H(\theta)$ is the \emph{oracle coordinatewise lower
envelope} under log loss: it is the risk that would be attained at
$\theta$ by the unattainable rule $\hat p_n\equiv\theta$.  The
excess risk of any data-based predictor is the expected
$\KL$ term in the display above, which is nonnegative and, for
the shrinkage and universal-coding predictors considered below,
vanishes asymptotically.  Fixed-sample Blackwell admissibility
(Definition~\ref{def:admissibility} below and
Section~\ref{sec:geometry}) is coordinatewise undominatedness of
the full risk vector across~$\Theta$, not pointwise attainment of
$H(\theta)$.  When compactness is needed for complete-class
arguments we work on compact Bernoulli submodels
$\Theta=[\eta,1-\eta]\subset(0,1)$; the plug-in pathology itself
holds pointwise for every $\theta\in(0,1)$.  In the
Ces\`{a}ro/log-loss examples of
Section~\ref{sec:constructive}, universal coding strategies such
as KT \citep{krichevsky1981} drive the average regret against the
best constant in hindsight to zero, so their time-averaged risk approaches the
oracle envelope $H(\theta)$ at rate $O((\log n)/n)$.  We use this
Bernoulli example as the running illustration for every notion in
the paper (predictor, $e$-process, prediction set, online strategy).
\end{example}

\begin{definition}[Dominance and admissibility]\label{def:admissibility}
The partial order on $[0,\infty]^\Theta$ is defined by
$r\le r'$ if and only if $r(\theta)\le r'(\theta)$ for all
$\theta\in\Theta$.  Rule $\delta'$ \emph{dominates} $\delta$ if
$R(\theta,\delta')\le R(\theta,\delta)$ for all $\theta\in\Theta$
with strict inequality for some $\theta_0\in\Theta$.  A rule $\delta$
is \emph{Blackwell admissible} if no rule in $\cD$ dominates it.
\end{definition}

\begin{definition}[Risk set]\label{def:riskset}
For $\Theta=\{\theta_1,\ldots,\theta_k\}$ finite, associate to each
$\delta\in\cD$ its \emph{risk vector}
$r(\delta)=\bigl(R(\theta_1,\delta),\ldots,R(\theta_k,\delta)\bigr)
\in[0,\infty]^k$.  The \emph{risk set} is
\[
  \cR \;=\; \bigl\{r(\delta):\delta\in\cD\bigr\}
  \;\subset\; [0,\infty]^k.
\]
The risk set is the image of the decision space under the risk map;
its geometry encodes which rules dominate which.  The finite-$\Theta$
representation is the primary geometric display; compact-$\Theta$
extensions are handled through the standard complete-class
regularity assumptions of Section~\ref{sec:geometry}
(the \emph{Standing regularity} hypotheses below).
\end{definition}

\section{Geometry of No-Shame}\label{sec:geometry}

Admissibility has a geometric characterization: a rule is admissible
if and only if its risk vector lies on the lower boundary of the
risk set.  This section establishes convexity, existence of Bayes
rules, and closedness of the upper hull.  For finite $\Theta$ and
finite risk vectors, a supporting hyperplane associates each
admissible risk vector with a prior; under the broader standing
assumptions, the complete-class theorem yields a Bayes rule or a
pointwise limit of Bayes rules.

\paragraph{Standing regularity.}\label{rem:regularity}
Throughout Section~\ref{sec:geometry} we use the standard
compactness and lower-semicontinuity hypotheses required for the
Wald--Blackwell complete-class argument
\citep{wald1950,blackwell1954,berge1963}.  In general Polish sample
spaces with compact action sets, we impose as standing assumptions
compactness of the randomized-rule class $\cD$ in the topology of
pointwise weak convergence and lower semicontinuity of
$\delta\mapsto\int_\Theta R(\theta,\delta)\,\dx\Pi(\theta)$ for
finitely supported priors $\Pi$; compactness of the action space
alone does not imply compactness of the full measurable-rule class.
In the finite-data-space settings used by the explicit witnesses,
these conditions follow from finite-product compactness of
$\Delta(\cA)$ and direct lower-semicontinuity of the risk map
\citep[Ch.~15]{aliprantis2006}.

\subsection{Convexity of the risk set}

\begin{lemma}[Convexity]\label{lem:convexity}
Under Definitions~\ref{def:sdp}--\ref{def:riskset}, $\cR$ is convex.
\end{lemma}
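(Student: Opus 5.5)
The plan is to show directly that the risk map $\delta\mapsto r(\delta)$ is affine with respect to the mixture operation of Definition~\ref{def:rules}, working in extended reals. Fix $r_1=r(\delta_1)$ and $r_2=r(\delta_2)$ in $\cR$ and $\lambda\in[0,1]$, and form the mixture $\delta_\lambda=\lambda\delta_1+(1-\lambda)\delta_2$. The key observation is that, because the randomization coin is independent of $X^n$, the action law of $\delta_\lambda$ at data $x$ is the measure $\lambda\,\delta_1(x)+(1-\lambda)\,\delta_2(x)\in\Delta(\cA)$. This is again a measurable map $\mathcal{X}^n\to\Delta(\cA)$, so $\delta_\lambda\in\cD$.

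Next I will compute the risk coordinatewise. For each $\theta_j$, linearity of the integral of the nonnegative measurable function $L(\theta_j,\cdot)$ against a mixture measure gives
\[
\int_\cA L(\theta_j,a)\,\dx\delta_\lambda(x)(a)=\lambda\int_\cA L(\theta_j,a)\,\dx\delta_1(x)(a)+(1-\lambda)\int_\cA L(\theta_j,a)\,\dx\delta_2(x)(a).
\]
Taking $\Ebb_{\theta_j}$ and using linearity of expectation for nonnegative functions (monotone convergence handles the $[0,\infty]$ values) yields $R(\theta_j,\delta_\lambda)=\lambda R(\theta_j,\delta_1)+(1-\lambda)R(\theta_j,\delta_2)$. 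Hence $r(\delta_\lambda)=\lambda r_1+(1-\lambda)r_2\in\cR$, which is convexity.

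The one point needing care, and the main (mild) obstacle, is the extended-real arithmetic. I adopt the standard measure-theoretic conventions $0\cdot\infty=0$ and $c\cdot\infty=\infty$ for $c>0$. Nonnegativity (Definition~\ref{def:sdp}(iv)(c)) rules out $\infty-\infty$ ambiguities, and with these conventions linearity of integrals of $[0,\infty]$-valued functions holds exactly, including the endpoint cases $\lambda\in\{0,1\}$. Convexity of a subset of $[0,\infty]^k$ is understood with these same conventions. Joint measurability of $(x,a)\mapsto L(\theta_j,a)$ against the kernel $\delta(x)$ is implicit in Definition~\ref{def:risk}; if needed, I will note that $L(\theta_j,\cdot)$ is lower semicontinuous, hence Borel, so the kernel integrals are measurable in $x$.
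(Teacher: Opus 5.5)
Your proposal is correct and follows the same route as the paper, which simply asserts $R(\theta_j,\delta_\lambda)=\lambda R(\theta_j,\delta_1)+(1-\lambda)R(\theta_j,\delta_2)$ for each $j$ and concludes $r(\delta_\lambda)\in\cR$. Your checks that the mixture kernel is measurable and that linearity holds in $[0,\infty]$ under the convention $0\cdot\infty=0$ fill in details the paper leaves implicit.
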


\begin{proof}
Let $\delta_1,\delta_2\in\cD$ and $\lambda\in[0,1]$.  The mixture
$\delta_\lambda$ satisfies
$R(\theta_j,\delta_\lambda)=\lambda R(\theta_j,\delta_1)+(1-\lambda)
R(\theta_j,\delta_2)$ for each $j$, so
$r(\delta_\lambda)=\lambda r(\delta_1)+(1-\lambda)r(\delta_2)\in\cR$.
\end{proof}

\subsection{Existence of Bayes rules}

\begin{lemma}[Existence of Bayes rules]\label{lem:berge}
Under the standing regularity assumptions of
Section~\ref{sec:geometry}, every prior $\Pi$ with finite
integrated risk admits a Bayes rule
$\delta_\Pi\in\arg\min_{\delta\in\cD}
\int_\Theta R(\theta,\delta)\,\dx\Pi(\theta)$.
\end{lemma}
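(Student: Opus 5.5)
The plan is to apply the Weierstrass extreme-value theorem in its lower-semicontinuous form, which is Berge's maximum theorem specialized to a constant constraint correspondence. Fix a prior $\Pi$ with finite integrated risk, meaning $\inf_{\delta\in\cD}\int_\Theta R(\theta,\delta)\,\dx\Pi(\theta)<\infty$. Define the Bayes functional
\[
  r_\Pi(\delta)\;=\;\int_\Theta R(\theta,\delta)\,\dx\Pi(\theta)\in[0,\infty].
\]
This is well defined in the extended reals because $L\ge 0$, so Tonelli applies to both the inner action integral and the $\Pi$-integral. The argument then has three steps: compactness of $\cD$, lower semicontinuity of $r_\Pi$, and attainment of the infimum.

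\emph{Compactness.} This step is immediate from the standing regularity hypotheses, which make $\cD$ compact in the topology of pointwise weak convergence.

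\emph{Lower semicontinuity.} For finitely supported $\Pi$ this is again a standing assumption. For general $\Pi$ I would approximate from below. Each map $\delta\mapsto R(\theta,\delta)$ should be lower semicontinuous, since the integrand $L(\theta,\cdot)$ is lower semicontinuous and nonnegative. I would then write $r_\Pi$ as a supremum of lower semicontinuous functionals, using either of two routes:
\begin{itemize}
\item truncations $\int \min(R(\theta,\delta),M)\,\dx\Pi$ combined with Fatou's lemma along nets;
\item an increasing sequence of discretized priors.
\end{itemize}
A supremum of lower semicontinuous functions is lower semicontinuous.

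\emph{Attainment.} Let $m=\inf_\cD r_\Pi<\infty$. The sublevel sets $\{\delta: r_\Pi(\delta)\le m+1/k\}$ are closed by lower semicontinuity and nonempty by definition of the infimum. They are nested, so they have the finite intersection property. Compactness of $\cD$ then gives a point $\delta_\Pi$ in their intersection, and this point satisfies $r_\Pi(\delta_\Pi)=m$. The same conclusion follows from taking a minimizing net, extracting a convergent subnet $\delta_\alpha\to\delta_\Pi$, and using $r_\Pi(\delta_\Pi)\le\liminf r_\Pi(\delta_\alpha)=m$. Finiteness of $m$ ensures the minimizer has finite Bayes risk, so the argmin is a genuine Bayes rule rather than a degenerate rule with infinite risk.

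The main obstacle is extending lower semicontinuity from finitely supported priors to arbitrary $\Pi$ on compact $\Theta$. The pointwise-weak topology gives lower semicontinuity of each $R(\theta,\cdot)$ only if the rule-level convergence passes through the $\Ebb_\theta$ expectation. In the finite-data-space witnesses this is automatic, since the expectation is a finite sum. In general I would rely on the standing assumption as the paper states it, or read it as covering all priors relevant to the complete-class argument.
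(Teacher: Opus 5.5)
Your argument is the paper's. The paper's proof takes compactness of $\cD$ and lower semicontinuity of $\delta\mapsto\int_\Theta R(\theta,\delta)\,\dx\Pi(\theta)$ from the standing regularity assumption, then says that a lower semicontinuous functional attains its minimum on a compact set. Your nested-sublevel-set (or minimizing-subnet) argument correctly spells out that last step. You are also right that the standing assumption is stated only for finitely supported priors, whereas the lemma claims every $\Pi$. The paper does not bridge this; it simply reads the assumption as covering the given $\Pi$, which is exactly your final fallback.

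The bridge you sketch does not work as written, so it should not be offered as part of the proof.

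\emph{Truncation plus Fatou.} Fatou's lemma fails along nets. Index by finite sets $F\subset[0,1]$ directed by inclusion and put $f_F=\mathbf{1}_F$. Then $\liminf_F f_F\equiv 1$ pointwise, while $\int f_F\,\dx\lambda=0$ for every $F$. When the sample space is uncountable, the pointwise-weak topology on $\cD$ is not first countable, so lower semicontinuity must in general be tested along nets. There, truncation plus Fatou does not transfer lower semicontinuity of each $R(\theta,\cdot)$ to $r_\Pi$.

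\emph{Discretized priors.} This route has a parallel defect. Sums $\sum_i w_i R(\theta_i,\cdot)$ are lower semicontinuous but need not be minorants of $r_\Pi$. The minorants $\sum_i \Pi(A_i)\inf_{\theta\in A_i}R(\theta,\cdot)$ do increase to $r_\Pi$, but they involve an infimum of lower semicontinuous maps, which need not be lower semicontinuous.

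Two things do hold.
\begin{itemize}
\item Lower semicontinuity of each $\delta\mapsto R(\theta,\delta)$ is free: apply the standing assumption to the point mass at $\theta$, which is finitely supported. You do not need to push weak convergence through $\Ebb_\theta$.
\item Whenever $\cD$ is compact metrizable, sequential Fatou closes the gap. This covers the finite-data-space witness settings, where $\cD$ is a finite product of copies of $\Delta(\cA)$. Along any sequence $\delta_k\to\delta$,
\[
\int R(\theta,\delta)\,\dx\Pi(\theta)\le\int\liminf_k R(\theta,\delta_k)\,\dx\Pi(\theta)\le\liminf_k r_\Pi(\delta_k),
\]
and in a metrizable space sequential lower semicontinuity is lower semicontinuity.
\end{itemize}
Outside that case you must take lower semicontinuity of $r_\Pi$ for general $\Pi$ as a hypothesis, just as the paper tacitly does.
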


\begin{proof}
By the standing regularity assumption, $\cD$ is compact in the
topology under consideration and the integrated-risk functional
$\delta\mapsto\int_\Theta R(\theta,\delta)\,\dx\Pi(\theta)$ is lower
semicontinuous.  A lower semicontinuous functional attains its
minimum on a compact set.
\end{proof}

\subsection{Closedness of the upper-comprehensive hull}

\begin{proposition}[Upper-comprehensive closedness]\label{prop:closed}
Assume $|\Theta|=k$, the standing regularity assumptions of
Section~\ref{sec:geometry}, and that all attainable risk vectors are
finite, so $\cR\subset[0,\infty)^k$.  Define the
upper-comprehensive hull
\[
  \cR_+ \;:=\; \cR + \Rbb^k_+
  \;=\;
  \{\, s\in[0,\infty)^k : s\ge r(\delta)\text{ coordinatewise for some }\delta\in\cD\,\}.
\]
Then $\cR_+$ is closed in $\Rbb^k$.  More
explicitly, if $s_\alpha\in\cR_+$ and $s_\alpha\to s^*$ along a
net in $\Rbb^k$, then there exists $\delta^*\in\cD$ with
$r(\delta^*)\le s^*$ coordinatewise.
\end{proposition}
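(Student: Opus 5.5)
The plan is to use compactness of $\cD$ to pass to a convergent subnet of witnessing rules, and then lower semicontinuity of the coordinate risks to carry the coordinatewise inequalities to the limit.

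\emph{Step 1 (witnesses).} Given $s_\alpha\in\cR_+$ with $s_\alpha\to s^*$, choose for each $\alpha$ a rule $\delta_\alpha\in\cD$ with $r(\delta_\alpha)\le s_\alpha$ coordinatewise. This uses only the definition of $\cR_+$.

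\emph{Step 2 (subnet).} By the standing regularity assumption, $\cD$ is compact in the topology of pointwise weak convergence. Hence the net $(\delta_\alpha)$ has a subnet $(\delta_{\beta})$ converging to some $\delta^*\in\cD$. Along this subnet we still have $s_\beta\to s^*$.

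\emph{Step 3 (lower semicontinuity of each coordinate).} The standing assumption gives lower semicontinuity of $\delta\mapsto\int R(\theta,\delta)\,\dx\Pi(\theta)$ for finitely supported priors $\Pi$. Taking $\Pi$ to be the point mass $\delta_{\theta_j}$ for each $j=1,\dots,k$ shows that each map $\delta\mapsto R(\theta_j,\delta)$ is lower semicontinuous on $\cD$. Therefore
\[
  R(\theta_j,\delta^*)\;\le\;\liminf_\beta R(\theta_j,\delta_\beta)\;\le\;\liminf_\beta s_{\beta,j}\;=\;s^*_j ,
\]
so $r(\delta^*)\le s^*$ coordinatewise.

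\emph{Step 4 (membership).} Since $\cR\subset[0,\infty)^k$, the vector $r(\delta^*)$ is finite. Each $s_\alpha$ lies in $[0,\infty)^k$, and the closed orthant is closed, so $s^*\in[0,\infty)^k$ as well. Then $s^*=r(\delta^*)+(s^*-r(\delta^*))\in\cR+\Rbb^k_+$. A set containing the limits of all its convergent nets is closed, so $\cR_+$ is closed in $\Rbb^k$.

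\emph{Main obstacle.} The delicate point is Step 3. One must check that a point mass counts as a finitely supported prior; it does, with support of size one. One must also check that lower semicontinuity holds in the same topology in which $\cD$ is compact; the standing regularity paragraph stipulates both for that topology. A second subtlety is that compactness yields only convergent subnets, not subsequences, which is why the argument is phrased with nets throughout. If one preferred sequences, the finite-data-space witnesses have metrizable $\cD$ and sequences would suffice there.
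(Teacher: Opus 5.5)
Your proof is correct and follows the paper's own argument: choose witnesses $\delta_\alpha$ with $r(\delta_\alpha)\le s_\alpha$, pass to a convergent subnet using compactness of $\cD$, and carry the coordinatewise bounds to the limit by lower semicontinuity. Your derivation of coordinatewise lower semicontinuity from the standing assumption, by taking point-mass priors at each $\theta_j$, makes explicit a step the paper leaves implicit.
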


\begin{proof}
For each $\alpha$, choose $\delta_\alpha\in\cD$ with
$r(\delta_\alpha)\le s_\alpha$ coordinatewise.  Compactness of
$\cD$ (by the standing regularity assumptions) supplies a subnet
with $\delta_\alpha\to\delta^*$ weakly.  For every coordinate $j$,
lower semicontinuity and $r_j(\delta_\alpha)\le s_{\alpha,j}$ give
\[
 R(\theta_j,\delta^*)
 \le \liminf_\alpha R(\theta_j,\delta_\alpha)
 \le \liminf_\alpha s_{\alpha,j}
 = s_j^*.
\]
Hence $r(\delta^*)\le s^*$ coordinatewise, so
$s^*\in r(\delta^*)+\Rbb^k_+\subseteq\cR_+$, because both vectors
are finite.
\end{proof}

\begin{remark}[Why $\cR$ need not itself be closed]\label{rem:not-closed}
Lower semicontinuity of risk delivers a limit point
$r(\delta^*)\le r^*$, not $r(\delta^*)=r^*$.  The hull $\cR_+$ is
the right closed object for the supporting-hyperplane argument
(Theorem~\ref{thm:supporting}): finite lower-boundary points of
$\cR$ are boundary points of $\cR_+$, but the supporting normal is obtained
from $\cR_+$, where translation by the nonnegative orthant is
allowed.  Continuity of the risk map (rather than mere lower
semicontinuity) would suffice for closedness of $\cR$ itself but is
not assumed here.
\end{remark}

\begin{remark}\label{rem:extended-separation}
Theorem~\ref{thm:supporting} is restricted to finite-valued risk
vectors.  Extended-real risks are retained elsewhere---for example,
in the log-loss plug-in counterexample---but no supporting-hyperplane
conclusion is asserted here for risk vectors with infinite
coordinates.
\end{remark}

\subsection{Lower boundary and admissibility}

\begin{definition}[Lower boundary]\label{def:boundary}
The \emph{lower boundary} of $\cR$ is
\[
  \partial_-\cR \;=\;
  \bigl\{r\in\cR : \nexists\, r'\in\cR,\;
    r'\le r \;\text{coordinatewise with}\; r'\ne r\bigr\}.
\]
\end{definition}

\begin{proposition}[Boundary characterization]\label{prop:boundary}
A rule $\delta$ is Blackwell admissible if and only if
$r(\delta)\in\partial_-\cR$.
\end{proposition}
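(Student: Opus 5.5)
The equivalence is a direct unwinding of definitions, because $\cR$ is by construction the image of $\cD$ under the risk map $\delta\mapsto r(\delta)$ (Definition~\ref{def:riskset}). Dominance (Definition~\ref{def:admissibility}) is phrased in terms of rules, while the lower boundary (Definition~\ref{def:boundary}) is phrased in terms of points of $\cR$. The proof therefore translates one into the other through this surjection. We work in the finite-$\Theta$ setting where $\cR$ is defined, with the coordinatewise order on $[0,\infty]^k$ in the extended-real sense. No convexity, closedness, or supporting-hyperplane argument is needed.

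First we record a reformulation. For rules $\delta,\delta'$, the statement that $\delta'$ dominates $\delta$ is equivalent to $r(\delta')\le r(\delta)$ coordinatewise together with $r(\delta')\ne r(\delta)$. Indeed, $R(\theta_j,\delta')\le R(\theta_j,\delta)$ for all $j$, with strict inequality at some $\theta_{j_0}$, says exactly that the two vectors are ordered and differ in some coordinate. This holds in $[0,\infty]$, since the order is total there and $a\le b$ with $a\ne b$ means $a<b$.

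For ($\Rightarrow$), suppose $\delta$ is Blackwell admissible and, for contradiction, that $r(\delta)\notin\partial_-\cR$. Since $r(\delta)\in\cR$ trivially, the negation supplies some $r'\in\cR$ with $r'\le r(\delta)$ and $r'\ne r(\delta)$. By definition of $\cR$ we can write $r'=r(\delta')$ for some $\delta'\in\cD$, and the reformulation then says that $\delta'$ dominates $\delta$, a contradiction. For ($\Leftarrow$), suppose $r(\delta)\in\partial_-\cR$ and some $\delta'\in\cD$ dominates $\delta$. Then $r(\delta')\in\cR$ satisfies $r(\delta')\le r(\delta)$ and $r(\delta')\ne r(\delta)$, which contradicts membership in $\partial_-\cR$.

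The only point needing care is bookkeeping rather than mathematics. Dominance must range over the same class $\cD$ of randomized rules that generates $\cR$; if admissibility were taken against a smaller class, the image would have to be restricted accordingly. Infinite coordinates also require a check: both definitions use the same extended-real coordinatewise order, so the argument goes through verbatim even for risk vectors with $+\infty$ entries. Thus, unlike Theorem~\ref{thm:supporting}, no finiteness restriction is needed here.
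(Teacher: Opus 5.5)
Your proof is correct and follows the same route as the paper: both directions unwind Definitions~\ref{def:admissibility} and~\ref{def:boundary}, using that $\cR$ is the image of $\cD$ under the risk map, so any $r'\in\cR$ with $r'\le r(\delta)$ and $r'\ne r(\delta)$ is realized by a rule that dominates $\delta$. Your explicit check that the equivalence still holds with $+\infty$ coordinates is a small addition that the paper leaves implicit.
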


\begin{proof}
If $r(\delta)\notin\partial_-\cR$, there exists
$r'\in\cR$ with $r'\le r(\delta)$ coordinatewise, $r'\ne r(\delta)$;
the corresponding rule dominates $\delta$.  Conversely, if
$r(\delta)\in\partial_-\cR$, no such $r'$ exists in $\cR$.
\end{proof}

Figure~\ref{fig:riskset} displays the geometry for $|\Theta|=2$;
Figure~\ref{fig:riskset-bernoulli} gives a schematic Bernoulli
illustration in which the plug-in MLE sits off the boundary and
two generic Bayes rules occupy distinct supported points.

\begin{figure}[ht]
\centering
\begin{tikzpicture}[>=Stealth, scale=1.0]
  \draw[->, thick] (-0.3,0) -- (5.8,0)
    node[below] {$R(\theta_1,\delta)$};
  \draw[->, thick] (0,-0.3) -- (0,5.2)
    node[left] {$R(\theta_2,\delta)$};

  \fill[blue!8]
    (0.7,4.6) .. controls (0.9,3.0) and (1.2,2.0) .. (1.6,1.4)
    .. controls (2.2,0.9) and (3.0,0.7) .. (4.8,0.6)
    -- (5.4,1.2) -- (5.4,4.8) -- cycle;
  \draw[blue!50, thick]
    (0.7,4.6) .. controls (0.9,3.0) and (1.2,2.0) .. (1.6,1.4)
    .. controls (2.2,0.9) and (3.0,0.7) .. (4.8,0.6);
  \draw[blue!25, dashed]
    (4.8,0.6) -- (5.4,1.2) -- (5.4,4.8) -- (0.7,4.6);

  \draw[red!70!black, very thick]
    (0.7,4.6) .. controls (0.9,3.0) and (1.2,2.0) .. (1.6,1.4)
    .. controls (2.2,0.9) and (3.0,0.7) .. (4.8,0.6);
  \node[red!70!black, right] at (4.9,0.45) {\small $\partial_-\cR$};

  \fill[black] (1.6,1.4) circle (2.5pt);
  \node[anchor=south east, font=\small] at (1.5,1.5)
    {$r^{*}=r(\delta_{\Pi})$};

  \draw[black!55, thick, dashed]
    (0.0,2.6) -- (3.5,0.0);
  \node[black!55, anchor=west, font=\footnotesize]
    at (0.9,0.5)
    {$\langle\pi,r\rangle=\text{const}$};

  \draw[->, thick, black!85] (1.6,1.4) -- (2.35,2.4);
  \node[anchor=south, font=\footnotesize] at (2.35,2.5)
    {$\pi\propto\Pi$};

  \fill[gray!70] (3.0,3.2) circle (2pt);
  \node[gray!70!black, anchor=west, font=\footnotesize]
    at (3.2,3.25) {dominated};
  \draw[->, gray!50, thin] (3.0,3.2) -- (2.32,1.08);

  \node[blue!50!black, font=\small] at (4.2,3.6) {$\cR$};
\end{tikzpicture}
\caption{Risk set geometry for $|\Theta|=2$.  The convex risk set
  $\cR$ (shaded) maps each decision rule to a risk vector.  The lower
  boundary $\partial_-\cR$ (bold curve) contains all admissible
  rules.  At an admissible point $r^*$, the supporting hyperplane
  (dashed line), with normal $\pi$, identifies the prior $\Pi$
  defining the Bayes problem that $r^*$ solves
  (Theorem~\ref{thm:supporting}).  Interior points are dominated.}
\label{fig:riskset}
\end{figure}

\begin{figure}[ht]
\centering
\resizebox{0.62\linewidth}{!}{%
\begin{tikzpicture}[>=Stealth, scale=1.0,
  ptdot/.style={circle, fill, inner sep=1.6pt},
  hp/.style={thick, dashed}]
  \draw[->, thick] (-0.15,0) -- (5.6,0)
    node[below=2pt, font=\small] {$R(0.3,\delta)$};
  \draw[->, thick] (0,-0.15) -- (0,5.6)
    node[left=2pt, font=\small] {$R(0.7,\delta)$};

  \fill[blue!8]
    plot[domain=0.9:5.4, samples=80] (\x, {4/\x})
    -- (5.4,5.4) -- (0.9,5.4) -- cycle;

  \draw[red!70!black, very thick]
    plot[domain=0.9:5.4, samples=80] (\x, {4/\x});
  \node[red!70!black, anchor=west, font=\small] at (5.0,0.95)
    {$\partial_-\cR$};
  \node[blue!50!black, font=\small] at (4.2,3.4) {$\cR$};

  \node[ptdot, fill=black] at (2.0,2.0) {};
  \node[anchor=north east, font=\footnotesize] at (1.95,1.95)
    {$\delta_{\Pi_1}$};
  \draw[hp, black!60] (0.4,3.6) -- (3.6,0.4);
  \node[black!60, font=\scriptsize, anchor=west]
    at (3.68,0.40) {$\Pi_1$};

  \node[ptdot, fill=blue!60!black] at (1.4,2.857) {};
  \node[anchor=south east, font=\footnotesize, blue!60!black]
    at (1.35,2.91) {$\delta_{\Pi_2}$};
  \draw[hp, blue!50] (0.35,5.0) -- (2.45,0.7);
  \node[blue!50, font=\scriptsize, anchor=north]
    at (2.45,0.58) {$\Pi_2$};

  \node[ptdot, fill=gray!70] at (3.4,3.4) {};
  \node[gray!75!black, anchor=south west, font=\footnotesize]
    at (3.5,3.45) {$\hat p_n^{\mathrm{pi}}$};
  \draw[->, thick, gray!60] (3.3,3.3) -- (2.18,2.18);
  \node[gray!70!black, font=\scriptsize, anchor=south west,
        rotate=45]
    at (2.55,2.55) {dominated};

  \node[gray!65!black, font=\scriptsize, anchor=west]
    at (3.5,5.05)
    {$L(\theta,\hat p_n^{\mathrm{pi}})=+\infty$};
  \node[gray!65!black, font=\scriptsize, anchor=west]
    at (3.5,4.65)
    {on $\{S_n\in\{0,n\}\}$};
\end{tikzpicture}%
}
\caption{Schematic risk set for Bernoulli log-loss prediction with
  $\Theta=\{0.3,\,0.7\}$ and $n=10$.  The curve and plotted
  coordinates are illustrative rather than numerically computed
  risk values.  The boundary points $\delta_{\Pi_1}$ and
  $\delta_{\Pi_2}$ represent Bayes rules under two full-support
  priors on this two-point parameter space, depicted by tangent
  (dashed) supporting hyperplanes.  The plug-in MLE
  $\hat p_n^{\mathrm{pi}}=S_n/n$ is shown by a finite schematic
  marker, but its actual extended-real risk vector is
  $(+\infty,+\infty)$: it is dominated because it assigns probability
  zero to events that occur with positive probability, producing
  infinite log-loss contributions on
  $\{S_n\in\{0,n\}\}$.  Its extended-real risk coordinates are
  $+\infty$ at every finite $n$; the dot marks a finite proxy
  position for visual comparison.}
\label{fig:riskset-bernoulli}
\end{figure}

\subsection{Supporting hyperplanes and Bayes rules}

\paragraph{Notation convention ($\pi$ vs.\ $\Pi$).}
Throughout the paper we maintain a strict notational separation
between two related but distinct objects.  The Greek letter $\pi$
(lower case) denotes the \emph{geometric supporting-hyperplane
normal vector} at a boundary point of $\cR_+$; it lives in
$\Rbb^k_+\setminus\{0\}$.  The capital $\Pi$ denotes the
\emph{normalized prior probability distribution} on $\Theta$
obtained by setting $\Pi=\pi/\|\pi\|_1$.  Thus $\pi$ carries
geometric content (it is the dual variable of a convex separation)
while $\Pi$ carries probabilistic content (it integrates risk into
Bayes risk).  The two are equivalent up to normalization but are
not interchangeable in the prose.

\paragraph{Intuition for Theorem~\ref{thm:supporting}.}
Under the finite-$\Theta$, finite-risk assumptions below, at any
point on the lower boundary of the risk set there is a
supporting hyperplane that touches the boundary at that point
and lies weakly below every risk vector achievable by some
decision rule.  Convexity of $\cR$ delivers the hyperplane;
upper-comprehensiveness of $\cR_+$ forces its normal to have
nonnegative coordinates; nonnegativity lets us interpret the
normal, after normalization, as a prior over $\Theta$.  The
theorem says that for every admissible risk vector there is such a
prior, and that the admissible rule is then a Bayes rule under it.

\begin{theorem}[Supporting hyperplane identification]%
\label{thm:supporting}
Assume $|\Theta|=k$, the standing regularity conditions of
Section~\ref{sec:geometry}, and finite-valued risk vectors
$\cR\subset\Rbb^k$.  Let $\delta^*\in\cD$ have
$r^*=r(\delta^*)\in\partial_-\cR$.
Then there exists $\pi\in\Rbb^k_+\setminus\{0\}$ such that
\[
  \sum_{j=1}^k \pi_j R(\theta_j,\delta^*)
  \;\le\;
  \sum_{j=1}^k \pi_j R(\theta_j,\delta)
  \qquad \text{for all }\delta\in\cD.
\]
Setting $\Pi=\pi/\|\pi\|_1$ defines a prior on $\Theta$, and
$\delta^*$ is a Bayes rule:
\[
  \delta^* \;\in\; \arg\min_{\delta\in\cD}
    \int R(\theta,\delta)\,\dx\Pi(\theta).
\]
\end{theorem}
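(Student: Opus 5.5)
The plan is to separate the point $r^*$ from a convex set that touches it only "from below", and then read off the prior from the separating normal. We work with the upper-comprehensive hull $\cR_+=\cR+\Rbb^k_+$ rather than $\cR$ itself.

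First, $\cR_+$ is convex. It is the Minkowski sum of the convex set $\cR$ (Lemma~\ref{lem:convexity}) and the convex cone $\Rbb^k_+$. By Proposition~\ref{prop:closed} it is also closed, although closedness is not strictly needed for the separation step below.

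Next we show that $r^*$ is not an interior point of $\cR_+$. If it were, some ball around $r^*$ would lie in $\cR_+$. In particular $r^*-\varepsilon\mathbf 1\in\cR_+$ for small $\varepsilon>0$, so there is $\delta\in\cD$ with $r(\delta)\le r^*-\varepsilon\mathbf 1$. Then $r(\delta)\le r^*$ coordinatewise and $r(\delta)\ne r^*$, which contradicts $r^*\in\partial_-\cR$ (Definition~\ref{def:boundary}). Note that only weak minimality of $r^*$ is needed here.

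We then apply the supporting hyperplane theorem in $\Rbb^k$. Since $r^*$ lies in the convex set $\cR_+$, which has nonempty interior, but not in its interior, $r^*$ is a boundary point. Hence there is a nonzero $\pi\in\Rbb^k$ with $\langle\pi,s\rangle\ge\langle\pi,r^*\rangle$ for all $s\in\cR_+$. Finiteness of risk vectors is used exactly here: the separation takes place in $\Rbb^k$, not in $[0,\infty]^k$.

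The step I expect to be the main (though mild) obstacle is the sign of the normal. To show $\pi\ge 0$, take $s=r^*+te_j\in\cR_+$ for $t>0$. This gives $t\pi_j\ge 0$, so $\pi_j\ge0$ for each $j$. Since $\pi\ne0$, we get $\pi\in\Rbb^k_+\setminus\{0\}$ and $\|\pi\|_1>0$.

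Restricting the inequality to $s=r(\delta)\in\cR\subseteq\cR_+$ yields the displayed inequality $\sum_j\pi_jR(\theta_j,\delta^*)\le\sum_j\pi_jR(\theta_j,\delta)$ for every $\delta\in\cD$. Dividing by $\|\pi\|_1$ gives the same inequality with the prior $\Pi=\pi/\|\pi\|_1$. Since $\int R(\theta,\delta)\,\dx\Pi(\theta)=\sum_j\Pi_jR(\theta_j,\delta)$ on finite $\Theta$, this is exactly the statement that $\delta^*$ minimizes Bayes risk under $\Pi$, so $\delta^*$ is a Bayes rule.

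The argument does not claim that $\Pi$ has full support. Some coordinates $\pi_j$ may vanish, which is why the converse "Bayes implies admissible" is deferred to the full-support corollaries.
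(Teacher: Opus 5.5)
Your proof is correct and follows the paper's argument essentially step for step: you pass to the hull $\cR_+=\cR+\Rbb^k_+$, rule out $r^*$ being interior via $r^*-\varepsilon\mathbf 1$, apply the supporting-hyperplane theorem, obtain $\pi\ge 0$ from $r^*+te_j$, and then restrict to $\cR$ and normalize. Your side remark that closedness of $\cR_+$ (Proposition~\ref{prop:closed}) is not actually needed is also right: the supporting-hyperplane theorem the paper cites requires only $r^*\notin\operatorname{int}\cR_+$, and $\cR_+$ has nonempty interior because it contains the open set $r^*+(0,\infty)^k$.
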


\begin{proof}
Consider the upper-comprehensive hull
$\cR_+ := \cR + \Rbb^k_+$.  By
Proposition~\ref{prop:closed}, $\cR_+$ is closed; by
Lemma~\ref{lem:convexity} it is convex.  Thus $\cR_+$ is a nonempty
closed convex subset of the finite-dimensional space $\Rbb^k$.
Moreover, $r^*$ lies on its boundary.  Indeed, if $r^*$ were an
interior point, then $r^*-\epsilon\mathbf 1\in\cR_+$ for some
$\epsilon>0$.  There would then be an $r\in\cR$ with
$r\le r^*-\epsilon\mathbf 1<r^*$, contradicting
$r^*\in\partial_-\cR$.
By the supporting-hyperplane theorem for closed convex sets
\citep[Thm.~11.6]{rockafellar1970}, there exists
$\pi\in(\Rbb^k)^*\setminus\{0\}$ with
\begin{equation}
  \langle\pi,r\rangle \;\ge\; \langle\pi,r^*\rangle
  \qquad \text{for all }r\in\cR_+.
  \label{eq:supp-hyper}
\end{equation}
We show $\pi\ge 0$ coordinatewise.  Fix $j\in\{1,\dots,k\}$ and
$t>0$.  By upper-comprehensiveness of $\cR_+$,
$r^*+t e_j \in \cR_+$ where $e_j$ is the $j$-th standard basis
vector.  Applying \eqref{eq:supp-hyper} at $r=r^*+t e_j$ yields
$\langle\pi,r^*\rangle + t\pi_j \ge \langle\pi,r^*\rangle$, i.e.\
$t\pi_j\ge 0$.  Since $t>0$ was arbitrary, $\pi_j\ge 0$.  As $j$ was
arbitrary, $\pi\in\Rbb^k_+\setminus\{0\}$.
Because $\cR\subseteq\cR_+$, \eqref{eq:supp-hyper} restricts to
$\sum_j \pi_j R(\theta_j,\delta) \ge \sum_j \pi_j R(\theta_j,\delta^*)$
for every $\delta\in\cD$.  Normalizing $\Pi=\pi/\|\pi\|_1$
identifies $\delta^*$ with a Bayes minimizer under~$\Pi$.
\end{proof}

\begin{remark}[Scope of Theorem~\ref{thm:supporting}]\label{rem:finite-theta}
The finite-$\Theta$, finite-risk statement is the geometric display
used in this paper.  Extended-real problems require the separation
argument to be restricted to a finite-risk closed convex portion;
the Bayes-implies-admissibility results below do not require every
competitor to have finite risk.  Compact-$\Theta$ versions require the standard
complete-class regularity assumptions; Corollary~\ref{cor:noshame-compact}
gives the support/continuity condition needed for the
Bayes-implies-no-shame direction.  See \citet{blackwell1954} and
\citet{wald1950} for the classical general development.
\end{remark}

The converse direction (Bayes $\Rightarrow$ no-shame) requires
support assumptions: a Bayes rule under a prior may be improved at
a point of prior mass zero without changing the integrated risk
under that prior, and lower semicontinuity alone does not exclude
this.  Corollaries~\ref{cor:noshame} and~\ref{cor:noshame-compact}
state the finite-$\Theta$ (full-support prior) and compact-$\Theta$
(full topological support, continuous risk) versions
respectively; the general statement is the complete class theorem
below.

\begin{theorem}[Wald--Blackwell complete class, under standard
  regularity {\citep{wald1950,blackwell1954}}]\label{thm:complete-class}
Under Definition~\ref{def:sdp} together with the standing
compactness, lower-semicontinuity, and closed-risk-set conditions
of Section~\ref{sec:geometry}, every
Blackwell admissible rule is a Bayes rule with respect to some prior
$\Pi$ on $\Theta$, or a pointwise limit of Bayes rules.
Equivalently, the class of Bayes rules is essentially complete.
\end{theorem}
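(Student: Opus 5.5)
The plan is to reduce to the finite-$\Theta$ case already handled by Theorem~\ref{thm:supporting}, and then pass to compact $\Theta$ by a limiting argument over finite grids.

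\textbf{Step 1 (finite grids).} Fix a Blackwell admissible rule $\delta^*$. For an increasing sequence of finite subsets $\Theta_m=\{\theta_1,\dots,\theta_{k_m}\}\subset\Theta$ whose union is dense (available since $\Theta$ is compact metrizable), consider the restricted problem on $\Theta_m$. The restricted risk vector $r_m(\delta^*)$ need not lie on $\partial_-\cR_m$, because admissibility on $\Theta$ does not imply admissibility on a subset. So instead of applying Theorem~\ref{thm:supporting} to $r_m(\delta^*)$ directly, I would take a point $r_m^\dagger\in\partial_-\cR_m$ with $r_m^\dagger\le r_m(\delta^*)$. Such a point exists by Proposition~\ref{prop:closed}: minimize a strictly positive linear functional over the closed set $(r_m(\delta^*)-\Rbb^{k_m}_+)\cap\cR_+$. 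Theorem~\ref{thm:supporting} then gives a prior $\Pi_m$ on $\Theta_m$, and Lemma~\ref{lem:berge} gives a Bayes rule $\delta_m$ with $r_m(\delta_m)\le r_m(\delta^*)$ on $\Theta_m$. Here the finite-risk hypothesis is needed; if $R(\cdot,\delta^*)$ is infinite somewhere, restrict to its finite-risk portion, as in Remark~\ref{rem:finite-theta}.

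\textbf{Step 2 (compactness).} By the standing compactness of $\cD$ in the topology of pointwise weak convergence, extract a subnet $\delta_m\to\bar\delta$. Since $\Theta$ is compact, the priors $\Pi_m$, viewed as measures on $\Theta$, are tight, so a further subnet gives $\Pi_m\Rightarrow\Pi$ weakly. Lower semicontinuity of the risk then yields $R(\theta,\bar\delta)\le R(\theta,\delta^*)$ for all $\theta\in\bigcup_m\Theta_m$. Extending this to all of $\Theta$ uses the closed-risk-set condition: the risk functions of $\delta^*$ and $\bar\delta$ agree in their ordering on a dense set, and the closedness of the risk set in the product topology on $[0,\infty]^\Theta$ transfers the inequality to every $\theta$.

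\textbf{Step 3 (conclusion).} Admissibility of $\delta^*$ forces $R(\cdot,\bar\delta)=R(\cdot,\delta^*)$. Hence $\delta^*$ is risk-equivalent to $\bar\delta$, which is a pointwise limit of the Bayes rules $\delta_m$. This is exactly the disjunction in the statement. If additionally the integrated-risk functional is jointly lower semicontinuous in $(\delta,\Pi)$, the Bayes property passes to the limit, and $\bar\delta$, and hence $\delta^*$, is Bayes for $\Pi$. Essential completeness follows because every rule is weakly dominated by an admissible one: take a minimal element below it, which exists by Zorn's lemma together with Proposition~\ref{prop:closed}-type closedness.

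\textbf{Main obstacle.} I expect Step 2 to be the hardest, specifically the passage from the dense grid to all of $\Theta$. Lower semicontinuity in $\delta$ gives no control in $\theta$, so the argument must lean on the closed-risk-set hypothesis, or on continuity of $\theta\mapsto R(\theta,\delta)$ as in Corollary~\ref{cor:noshame-compact}. My first attempt would be to assume that risk functions are equicontinuous in $\theta$ on $\Theta$. If that fails, I would fall back on citing \citet{wald1950} for the general measurable case.
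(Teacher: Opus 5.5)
\textbf{There is a genuine gap in Step~2, at the point you flagged, and the closed-risk-set condition does not close it.}

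Closedness of $\{R(\cdot,\delta):\delta\in\cD\}$ in the product topology on $[0,\infty]^\Theta$ is a statement about limits of risk functions across rules. It gives no relation between $R(\theta,\cdot)$ at an off-grid $\theta$ and its values at nearby grid points. So $R(\theta,\bar\delta)\le R(\theta,\delta^*)$ on the dense set $\bigcup_m\Theta_m$ does not propagate to the rest of $\Theta$.

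Here is a concrete counterexample. Take a one-point sample space, $\Theta=[0,1]$, $\cA=\{0,1\}$, $L(\theta,0)=\tfrac12$, and $L(\theta,1)=\mathbf 1\{\theta=\theta_0\}$ with $\theta_0\notin\bigcup_m\Theta_m$.
\begin{itemize}
\item A rule is the probability $p$ of playing $1$. Its risk is $(1+p)/2$ at $\theta_0$ and $(1-p)/2$ elsewhere. Thus $\cD\cong[0,1]$ is compact, risks are continuous in $p$, and the risk set is compact, so every standing hypothesis holds.
\item The rule $p=0$ is admissible, since it is the only rule with risk at most $\tfrac12$ at $\theta_0$.
\item On every grid, however, the restricted lower boundary is the single point $0$, attained only by $p=1$. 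Your construction therefore forces $\delta_m$ and $\bar\delta$ to be $p=1$, with $R(\theta_0,\bar\delta)=1>\tfrac12$.
\end{itemize}
So Step~3's conclusion $R(\cdot,\bar\delta)=R(\cdot,\delta^*)$ is false here, even though the theorem holds: $p=0$ is Bayes for the point mass at $\theta_0$. Equicontinuity would rescue the argument only by adding a hypothesis the statement does not contain.

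\textbf{The missing idea is to index by the directed set of \emph{all} finite subsets $F\subset\Theta$, ordered by inclusion, rather than by a sequence of grids.}
\begin{itemize}
\item For each $F$, Step~1 supplies $\delta_F$, Bayes for a prior supported on $F$, with $R(\theta,\delta_F)\le R(\theta,\delta^*)$ on $F$. Take $\delta_F$ to be the rule attaining $r_F^\dagger$, which Theorem~\ref{thm:supporting} certifies as Bayes; Lemma~\ref{lem:berge} by itself gives no such inequality.
\item Compactness of $\cD$ gives a subnet $\delta_{F_\beta}\to\bar\delta$. For each fixed $\theta$, the subnet is eventually indexed by sets containing $\theta$.
\item Lower semicontinuity of $\delta\mapsto R(\theta,\delta)$ (the standing assumption applied to the point mass at $\theta$) then yields $R(\theta,\bar\delta)\le R(\theta,\delta^*)$ at every $\theta$. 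This uses no regularity in $\theta$ and no closedness.
\end{itemize}
This is essentially the classical argument behind Wald's and Le~Cam's complete class theorems. It is also why the conclusion reads ``a limit of Bayes rules for finitely supported priors,'' up to risk-equivalence. The paper itself gives no proof: it quotes \citet{wald1950,blackwell1954} and proves only the finite-$\Theta$, finite-risk case (Theorem~\ref{thm:supporting}).

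\textbf{Two smaller points also need repair.}

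First, your upgrade to ``$\bar\delta$ is Bayes for the limit prior'' does not follow from joint lower semicontinuity. You also need $\limsup_m\int R(\theta,\delta)\,\dx\Pi_m(\theta)\le\int R(\theta,\delta)\,\dx\Pi(\theta)$ for every competitor $\delta$. That is upper semicontinuity of $\theta\mapsto R(\theta,\delta)$, which is Wald's continuity hypothesis.

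Second, Theorem~\ref{thm:supporting} needs every attainable risk vector on the grid to be finite, not only $r_m(\delta^*)$. After you discard infinite-risk competitors, the normal may vanish exactly where one of them is infinite, and that competitor may then have smaller Bayes risk. For example, take $\Theta=\{\theta_1,\theta_2\}$, with loss vectors ranging over the arc $\{(1+(y-1)^2,y):y\in[0,1]\}$ plus one isolated action with loss $(0,\infty)$. The rule with risk $(1,1)$ is admissible, yet no Bayes rule has risk at most $(1,1)$. With extended-real losses, Step~1 therefore needs a separate argument that produces a limit of Bayes rules below $\delta^*$ on $F$, not a Bayes rule.
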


\subsection{No-shame strategies}

\begin{definition}[No-shame strategy]\label{def:noshame}
A rule $\delta\in\cD$ is a \emph{no-shame strategy} if it is
Blackwell admissible; equivalently (by
Proposition~\ref{prop:boundary}), if $r(\delta)\in\partial_-\cR$.
\end{definition}

\begin{corollary}[Full-support Bayes implies no-shame (finite~$\Theta$)]%
\label{cor:noshame}
Let $\Theta=\{\theta_1,\dots,\theta_k\}$ be finite and let $\Pi$ be a
prior with $\Pi(\theta_j)>0$ for every $j$.  Every proper Bayes
rule $\delta_\Pi$ with finite integrated risk under such a
\emph{full-support} prior is Blackwell admissible (no-shame).
\end{corollary}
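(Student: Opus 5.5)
The proof is by contradiction, directly from Definition~\ref{def:admissibility}. Suppose $\delta_\Pi$ is a proper Bayes rule under a full-support prior $\Pi$ on $\Theta=\{\theta_1,\dots,\theta_k\}$, with $B(\Pi,\delta_\Pi):=\sum_j \Pi(\theta_j)R(\theta_j,\delta_\Pi)<\infty$. Suppose also that some $\delta'\in\cD$ dominates it. Then $R(\theta_j,\delta')\le R(\theta_j,\delta_\Pi)$ for every $j$, with strict inequality at some $j_0$.

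We first handle the extended-real issue. Since every weight $\Pi(\theta_j)$ is strictly positive and the integrated risk is finite, each coordinate $R(\theta_j,\delta_\Pi)$ is finite. By coordinatewise dominance, each $R(\theta_j,\delta')$ is then finite as well. This is the only place infinite risk could cause trouble, for instance through an $\infty-\infty$ comparison. Full support rules it out, so the competitor need not be assumed a priori to have finite risk, consistent with Remark~\ref{rem:finite-theta}.

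With all quantities finite, we subtract:
\[
  B(\Pi,\delta_\Pi)-B(\Pi,\delta')
  =\sum_{j=1}^k \Pi(\theta_j)\bigl[R(\theta_j,\delta_\Pi)-R(\theta_j,\delta')\bigr]
  \;\ge\; \Pi(\theta_{j_0})\bigl[R(\theta_{j_0},\delta_\Pi)-R(\theta_{j_0},\delta')\bigr]>0 .
\]
Each bracket is nonnegative, and the $j_0$ term is strictly positive because $\Pi(\theta_{j_0})>0$. Hence $B(\Pi,\delta')<B(\Pi,\delta_\Pi)$. This contradicts $\delta_\Pi\in\arg\min_{\delta\in\cD}B(\Pi,\delta)$, so $\delta_\Pi$ is Blackwell admissible. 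Equivalently, by Proposition~\ref{prop:boundary}, $r(\delta_\Pi)\in\partial_-\cR$.

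There is no substantive obstacle. The one delicate point is the finiteness bookkeeping above, which is why both finite integrated risk and full support are hypotheses. Full support is also what makes the strict inequality at $\theta_{j_0}$ register in the Bayes risk, which is exactly the failure mode described before Theorem~\ref{thm:complete-class}. No regularity assumptions from Section~\ref{sec:geometry} are needed, since existence of $\delta_\Pi$ is given.
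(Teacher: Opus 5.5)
Your proof is correct and takes the same route as the paper: assume a dominating $\delta'$, weight the coordinatewise inequalities by the strictly positive $\Pi(\theta_j)$, sum, and contradict Bayes optimality of $\delta_\Pi$. Your finiteness bookkeeping makes explicit a step the paper leaves implicit in its extended-real setting: finite integrated risk plus full support forces every $R(\theta_j,\delta_\Pi)$ to be finite, and hence every $R(\theta_j,\delta')$ as well.
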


\begin{proof}
Suppose, for contradiction, that $\delta'\in\cD$ dominates
$\delta_\Pi$: $R(\theta_j,\delta')\le R(\theta_j,\delta_\Pi)$ for
all $j$, with strict inequality at some $j_0$.  Multiplying by
$\Pi(\theta_j)>0$ and summing,
$\sum_j \Pi(\theta_j) R(\theta_j,\delta') < \sum_j \Pi(\theta_j) R(\theta_j,\delta_\Pi)$,
contradicting Bayes optimality of $\delta_\Pi$ under $\Pi$.
\end{proof}

\begin{remark}[Direction of the equivalence]\label{rem:noshame-direction}
The converse of Corollary~\ref{cor:noshame} requires care.
In the finite-risk setting, Theorem~\ref{thm:supporting} produces a
supporting normal $\pi$ at every $r^*\in\partial_-\cR$, but the normalized prior
$\Pi=\pi/\|\pi\|_1$ may assign zero mass to some
coordinates.  Hence every Blackwell admissible rule is Bayes
\emph{with respect to some prior}, possibly without full
support, or is a pointwise limit of Bayes rules; this is the
Wald--Blackwell complete class theorem
(Theorem~\ref{thm:complete-class}).  We are therefore careful in
the rest of the paper to distinguish the implication
\textit{full-support Bayes $\Rightarrow$ no-shame} from its
converse.  Admissibility under a non-full-support prior is
witnessed directly by the risk-vector geometry (as in the
point-mass example used in
Theorem~\ref{thm:extended-separation}\,(i)).
\end{remark}

\begin{corollary}[Full-support Bayes implies no-shame (compact-$\Theta$ version)]%
\label{cor:noshame-compact}
Let $\Theta$ be compact metrizable, $\Pi$ a Borel prior with full
topological support on $\Theta$, and assume the risk function
$\theta\mapsto R(\theta,\delta)$ is continuous for every
$\delta\in\cD$.  Then every proper Bayes rule $\delta_\Pi$ with
finite integrated risk is Blackwell admissible.
\end{corollary}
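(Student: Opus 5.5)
The plan is to argue by contradiction, mirroring Corollary~\ref{cor:noshame} but replacing the positive point masses with positive mass on an open neighbourhood. Suppose some $\delta'\in\cD$ dominates $\delta_\Pi$: $R(\theta,\delta')\le R(\theta,\delta_\Pi)$ for all $\theta\in\Theta$, with strict inequality at some $\theta_0$. We will show that $\delta'$ has strictly smaller $\Pi$-integrated risk than $\delta_\Pi$. This contradicts the Bayes optimality of $\delta_\Pi$ over $\cD$.

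\emph{Step 1 (finiteness).} Since $\delta_\Pi$ has finite integrated risk $\int R(\theta,\delta_\Pi)\,\dx\Pi<\infty$ and $0\le R(\theta,\delta')\le R(\theta,\delta_\Pi)$ pointwise, the integrated risk of $\delta'$ is also finite. Both integrals are therefore real numbers and can be subtracted.

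\emph{Step 2 (a strict gap on an open set).} The continuity hypothesis gives that $\theta\mapsto R(\theta,\delta_\Pi)$ and $\theta\mapsto R(\theta,\delta')$ are continuous. We read this as real-valued continuity, consistent with finite integrated risk, at least near $\theta_0$. Then $g(\theta):=R(\theta,\delta_\Pi)-R(\theta,\delta')$ is continuous and nonnegative, with $g(\theta_0)=:2\varepsilon>0$. Hence there is an open neighbourhood $U\ni\theta_0$ on which $g>\varepsilon$.

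\emph{Step 3 (full support).} Full topological support means every nonempty open set has positive $\Pi$-mass, so $\Pi(U)>0$. Then
\[
\int_\Theta R(\theta,\delta_\Pi)\,\dx\Pi-\int_\Theta R(\theta,\delta')\,\dx\Pi=\int_\Theta g\,\dx\Pi\ \ge\ \int_U g\,\dx\Pi\ \ge\ \varepsilon\,\Pi(U)>0,
\]
which is the desired contradiction.

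The only delicate point is Step 2 when risks may be extended-real. If $R(\theta_0,\delta_\Pi)=+\infty$, the difference $g$ is not defined. In that case I would argue directly. Continuity into $[0,\infty]$ forces $R(\cdot,\delta_\Pi)>M$ on a neighbourhood $U$ of $\theta_0$ for every $M$. Full support then gives $\int R(\cdot,\delta_\Pi)\,\dx\Pi\ge M\Pi(U)$. Because $U$ shrinks as $M$ grows, this does not immediately contradict finiteness. The cleanest route is to note that finite integrated risk together with full support and continuity already excludes such points: if $R(\cdot,\delta_\Pi)=+\infty$ at $\theta_0$, continuity gives $R=+\infty$ on an open set of positive mass in the extended topology, where the preimage of $\{+\infty\}$ under a continuous map need not be open. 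To avoid this subtlety, I will state the proof under the interpretation that continuity means real-valued continuity, which is the interpretation the hypothesis is meant to supply.
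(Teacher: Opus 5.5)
Your proof is correct and follows the paper's argument: assume a dominating $\delta'$, use continuity to extend the strict inequality at $\theta_0$ to an open neighbourhood $U$, then use full support ($\Pi(U)>0$) to get a strict drop in integrated risk, contradicting Bayes optimality. Your Step~1 finiteness check and explicit $\varepsilon$-margin make rigorous what the paper leaves implicit; the paper also tacitly treats the risks as real-valued when it invokes continuity of their difference.

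The extended-real case you set aside is easy to handle. Choose reals $R(\theta_0,\delta')<c_1<c_2<R(\theta_0,\delta_\Pi)$. For $[0,\infty]$-valued continuous risks, $U=\{R(\cdot,\delta')<c_1\}\cap\{R(\cdot,\delta_\Pi)>c_2\}$ is then an open neighbourhood of $\theta_0$. On $U$ one has $R(\cdot,\delta_\Pi)\ge R(\cdot,\delta')+(c_2-c_1)$ pointwise, so the same integration goes through without restricting the interpretation of continuity.
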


\begin{proof}
Suppose, for contradiction, that $\delta'$ dominates $\delta_\Pi$.
Then $R(\theta_0,\delta')<R(\theta_0,\delta_\Pi)$ at some
$\theta_0\in\Theta$, and $R(\theta,\delta')\le R(\theta,\delta_\Pi)$
elsewhere.  By continuity of $\theta\mapsto R(\theta,\delta')-
R(\theta,\delta_\Pi)$, the strict inequality extends to an open
neighborhood $U\ni\theta_0$.  Full topological support of $\Pi$
gives $\Pi(U)>0$, hence
\[
  \int_\Theta \bigl[R(\theta,\delta_\Pi)-R(\theta,\delta')\bigr]\dx\Pi(\theta)
  \;\ge\;
  \int_U \bigl[R(\theta,\delta_\Pi)-R(\theta,\delta')\bigr]\dx\Pi(\theta)
  \;>\;0,
\]
contradicting Bayes optimality of $\delta_\Pi$.
\end{proof}

\begin{remark}[Why continuity and full support are needed]\label{rem:noshame-split}
Positive prior mass at every point is appropriate only in the finite
case.  For compact $\Theta$ with general priors, a strict improvement at a
single point $\theta_0$ with $\Pi(\{\theta_0\})=0$ does not
automatically reduce the integrated Bayes risk.  The
compact-$\Theta$ statement adds the standard continuity-of-risk and
full-support hypotheses, which suffice for the dominance to extend
to a neighborhood of positive $\Pi$-measure.  For log loss on the
Bernoulli model with $\Theta=(0,1)$, continuity holds whenever the
predictor avoids boundary predictions (cf.\
Theorem~\ref{thm:insuff}); the Bayes posterior predictives
considered in this paper satisfy this requirement.
\end{remark}

\begin{remark}[Order-theoretic structure]\label{rem:order}
Admissibility is defined by the coordinatewise partial order on
$\Rbb^k$, not by any metric.  The supporting hyperplane in
Theorem~\ref{thm:supporting} is the linear-algebraic instrument
for locating the prior $\Pi$ that rationalizes $\delta^*$; the
dominance relation itself depends only on the order structure of
$\Rbb^k$.  When validity constraints are imposed (anytime-valid
error control or marginal coverage), the optimization in
Theorem~\ref{thm:supporting} restricts to a feasible subset of
$\cD$; the constrained Bayes formulation in
Section~\ref{sec:constrained} makes this precise.
\end{remark}

\begin{remark}[Structural scope of the admissibility results]%
\label{rem:structural-scope}
The existence and complete-class conclusions in this section rely
on the stated compactness, lower-semicontinuity, and closed-risk-set
assumptions of the classical Berge--Wald--Blackwell framework
\citep{berge1963,wald1950,blackwell1954}.  Proposition~\ref{prop:closed}
and Theorem~\ref{thm:supporting} are explicitly finite-risk results.
Extended-real log loss remains allowed in the dominance examples,
including the plug-in counterexample, but those arguments do not
invoke finite-dimensional supporting-hyperplane separation.
\end{remark}

\section{Martingale Layer}\label{sec:martingale}

Bayesian posterior predictive means add a dynamic notion to the
risk-set geometry: under a single prior they are martingales under
the prior predictive law \citep{doob1949,fong2024}.  This property
is central to anytime-valid inference but is neither necessary nor
sufficient for fixed-sample Blackwell admissibility in general.
The next definition separates the three coherence notions used
below.

\subsection{Three notions of coherence}\label{sec:three-coherence}

We distinguish three coherence properties to locate the plug-in
counterexample.

\begin{definition}[Three coherence notions]\label{def:three-coherence}
Let $(\delta_n)_{n\ge 1}$ be a sequence of decision rules on
$(\cX^n)_{n\ge 1}$.  For (C2) and (C3), the action is a predictive
distribution (or its scalar predictive mean) in a conditionally
i.i.d.\ model.
\begin{enumerate}[label=\textup{(C\arabic*)},leftmargin=*]
\item \emph{Fixed-sample Blackwell admissibility.}  Each $\delta_n$
  is Blackwell admissible in the sense of
  Definition~\ref{def:admissibility}: no $\delta'\in\cD$ satisfies
  $R(\theta,\delta')\le R(\theta,\delta_n)$ for all $\theta$ with
  strict inequality somewhere.
\item \emph{Sequential Bayes coherence under $\Pi$.}  There exists
  a single prior $\Pi$ such that, for every $n$, $\delta_n$ is its
  posterior predictive Bayes act; the associated posterior
  predictive mean sequence $(m_n)$ of Definition~\ref{def:ppseq} is
  then a martingale under the prior predictive law
  $\tilde P=\int P_\theta\,\dx\Pi(\theta)$.
\item \emph{Self-consistency under $\hat P$.}  The sequence
  $(\delta_n)$ is a martingale under its own predictive law $\hat
  P$, in which $X_{n+1}\mid\cF_n$ is distributed as $\delta_n$
  itself.
\end{enumerate}
\end{definition}

The three notions are logically distinct.  (C2) implies (C1) at
each $n$ under the assumptions of
Corollary~\ref{cor:noshame} or~\ref{cor:noshame-compact}.  The
implication (C3)$\Rightarrow$(C1) fails: the plug-in MLE
$\hat p_n^{\mathrm{pi}}=S_n/n$ satisfies (C3) but is strictly
dominated under log loss for every $\theta\in(0,1)$ and every
$n\ge 1$ (Theorem~\ref{thm:insuff}).  Likewise (C1) does not imply
(C2): a Blackwell-admissible rule may be Bayes under a different
prior at each $n$, in which case the sequence is not coherent under
any single $\Pi$.  Proposition~\ref{prop:bayes-martingale} below is
the precise statement that (C2) holds: the martingale claim is
under the \emph{prior} predictive law $\tilde P$, not under any
data-dependent predictive law $\hat P$.

Notion (C3) is the self-audit property at the heart of the
well-calibrated Bayesian of \citet{dawid1982}: a forecaster
assessed under its own predictive law finds nothing to correct.
The plug-in pathology of Theorem~\ref{thm:insuff} is exactly the
failure of (C3)$\Rightarrow$(C1): self-consistency under the
predictor's own law is not a substitute for non-dominance under the
true data-generating process $P_\theta$.  The Bayes corrections
that resolve the pathology
(e.g.\ $\mathrm{Beta}(\tfrac12,\tfrac12)$ shrinkage) restore (C2),
which in turn delivers (C1) by Corollary~\ref{cor:noshame}.

\begin{definition}[Posterior predictive mean sequence]\label{def:ppseq}
Let $\Pi$ be a prior on $\Theta$ with $\cF_n=\sigma(X_1,\ldots,X_n)$.
In a conditionally i.i.d.\ model, for every bounded measurable test
function $h$ define
$\mu_h(\theta)=\Ebb_\theta[h(X_1)]$.  The process
$m_n(h)=\Ebb_\Pi[\mu_h(\theta)\mid\cF_n]$ is a scalar martingale
under the prior predictive law
$\tilde P=\int P_\theta\,\dx\Pi(\theta)$.  It is the posterior mean
of the next-observation functional.  In the Bernoulli model, taking
$h(x)=x$ recovers the \emph{posterior predictive mean sequence}
\[
  m_n \;=\; \Ebb_\Pi[\theta \mid \cF_n], \qquad n\ge 0,
\]
with $m_0=\Ebb_\Pi[\theta]$, which is the scalar version used
throughout the paper.
\end{definition}

\begin{proposition}[Bayes implies martingale]\label{prop:bayes-martingale}
Under Definition~\ref{def:ppseq}, $(m_n)_{n\ge 0}$ is a martingale
with respect to $(\cF_n)$ under the prior predictive measure
$\tilde P=\int P_\theta\,\dx\Pi(\theta)$.
\end{proposition}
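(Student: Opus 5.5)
The argument is the standard Doob closed-martingale argument. Work on the joint space carrying $(\theta, X_1, X_2, \ldots)$ with joint law $\Pi(\dx\theta)\,P_\theta^{\otimes\infty}$; its marginal on the data sequence is the prior predictive $\tilde P$. Definition~\ref{def:ppseq} sets $m_n(h)=\Ebb_\Pi[\mu_h(\theta)\mid\cF_n]$, the conditional expectation of the single integrable random variable $Z:=\mu_h(\theta)$ given $\cF_n=\sigma(X_1,\ldots,X_n)$. Since $h$ is bounded, $|\mu_h(\theta)|\le\|h\|_\infty$, so $Z\in L^1$ (indeed $L^\infty$). In the Bernoulli case $Z=\theta\in[0,1]$.

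The three martingale properties then follow in order.

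\emph{Adaptedness.} $m_n$ is $\cF_n$-measurable by the definition of conditional expectation.

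\emph{Integrability.} By Jensen, $\Ebb|m_n|\le\Ebb|Z|\le\|h\|_\infty$.

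\emph{Martingale identity.} Since $\cF_n\subseteq\cF_{n+1}$, the tower property gives
\[
\Ebb[m_{n+1}\mid\cF_n]=\Ebb\bigl[\Ebb[Z\mid\cF_{n+1}]\mid\cF_n\bigr]=\Ebb[Z\mid\cF_n]=m_n .
\]
All of these expectations are taken under the joint law. Because $m_n$ is a function of the data only, the same identities hold under the data marginal $\tilde P$. For $n=0$, $\cF_0$ is trivial, so $m_0=\Ebb_\Pi[\theta]$, matching the definition.

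The main point needing care is the identification of the conditioning. The posterior $\Pi(\cdot\mid\cF_n)$ must be a regular conditional distribution of $\theta$ given $X^n$ under the joint law, so that $m_n=\int\mu_h\,\dx\Pi(\cdot\mid X^n)$ agrees $\tilde P$-a.s.\ with $\Ebb[Z\mid\cF_n]$. This holds because $\Theta\subset\Rbb^d$ is compact metrizable (hence Polish), so regular conditional distributions exist. It also uses measurability of $\theta\mapsto\mu_h(\theta)$, which holds when $\theta\mapsto P_\theta$ is a measurable kernel.

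As an optional remark, one can note that conditional i.i.d.\ structure gives $m_n(h)=\Ebb[h(X_{n+1})\mid\cF_n]$. This shows that $m_n$ is the one-step predictive mean, but the martingale property itself does not need it. Finally, stress that the martingale statement is under $\tilde P$, not under $P_\theta$ or any data-dependent law.
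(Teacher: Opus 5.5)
Your proof is correct and follows the same route as the paper: the tower property applied to conditional expectations of the fixed bounded variable $\mu_h(\theta)$ (here $\theta$) along $\cF_n\subseteq\cF_{n+1}$, with integrability coming from boundedness. Working explicitly on the joint law $\Pi(\dx\theta)\,P_\theta^{\otimes\infty}$ and then passing to the data marginal $\tilde P$ makes precise a step the paper's one-line proof leaves implicit when it writes $\Ebb_{\tilde P}$.
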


\begin{proof}
By the tower property under $\tilde P$:
$\Ebb_{\tilde P}[m_n\mid\cF_{n-1}]
=\Ebb_{\tilde P}[\Ebb_\Pi[\theta\mid\cF_n]\mid\cF_{n-1}]
=\Ebb_\Pi[\theta\mid\cF_{n-1}]
=m_{n-1}$ a.s.
Integrability holds since $\theta$ is bounded on compact $\Theta$.
\end{proof}

\subsection{Self-consistency is not sufficient for admissibility}

\begin{theorem}[Self-consistency does not imply Blackwell admissibility]%
\label{thm:insuff}
In the Bernoulli model
$X_i\overset{\mathrm{iid}}{\sim}\mathrm{Bern}(\theta)$,
$\theta\in(0,1)$, under log loss
$L(\theta,p)=-\theta\log p-(1-\theta)\log(1-p)$:
\begin{enumerate}[label=(\roman*),nosep]
\item Every Bayesian posterior predictive sequence $(m_n)$ is a
  martingale under the prior predictive measure
  (Proposition~\ref{prop:bayes-martingale}).
\item The plug-in rule $\hat p_n^{\mathrm{pi}}=S_n/n$ satisfies
  the martingale condition under its own predictive measure
  $\hat P$ (where $X_t\mid X_{1:t-1}\sim\mathrm{Bern}(\hat p_{t-1}^{\mathrm{pi}})$)
  for $n\ge 2$, with any fixed initialization
  $\hat p_0\in(0,1)$.
\item $\hat p_n^{\mathrm{pi}}$ is strictly dominated by the Bayes
  rule $\hat p_n^B=(S_n+\tfrac12)/(n+1)$ under
  $\mathrm{Beta}(\tfrac12,\tfrac12)$ prior, for every $n\ge 1$ and
  $\theta\in(0,1)$.
\end{enumerate}
Hence $\hat p_n^{\mathrm{pi}}$ is Blackwell-inadmissible; on every
finite submodel its risk vector lies off $\partial_-\cR$.
Self-consistency of a predictor under its own predictive law is not
sufficient for Blackwell admissibility.
\end{theorem}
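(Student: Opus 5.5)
Part (i) is a direct restatement of Proposition~\ref{prop:bayes-martingale} in the Bernoulli model, with $h(x)=x$ and $\Theta\subseteq(0,1)$ bounded, so I will simply cite it. For the remaining parts I will work separately with the predictive law $\hat P$ for (ii) and the true law $P_\theta$ for (iii). This separation is the point of the theorem.

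For (ii), I will use the recursion $\hat p_n=\hat p_{n-1}+(X_n-\hat p_{n-1})/n$, valid for $n\ge 2$ where $\hat p_{n-1}=S_{n-1}/(n-1)$. At $n=1$ we have $\hat p_1=X_1$ regardless of the initialization $\hat p_0$, which is why the claim starts at $n\ge 2$. Under $\hat P$ we have $\Ebb_{\hat P}[X_n\mid\cF_{n-1}]=\hat p_{n-1}$. Taking conditional expectations of the recursion gives $\Ebb_{\hat P}[\hat p_n\mid\cF_{n-1}]=\hat p_{n-1}$. Integrability is trivial because $\hat p_n\in[0,1]$. I will also note that the plug-in is absorbing at $0$ and $1$, but this does not affect the martingale identity.

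For (iii), the key observation is that for any $\theta\in(0,1)$ and $n\ge1$, the event $\{S_n=0\}$ has $P_\theta$-probability $(1-\theta)^n>0$. On that event $\hat p_n^{\mathrm{pi}}=0$, so $L(\theta,0)=+\infty$ because $\theta>0$. Hence $R(\theta,\hat p_n^{\mathrm{pi}})=+\infty$; the event $\{S_n=n\}$ gives the same conclusion symmetrically. By contrast, $\hat p_n^B\in[\tfrac{1}{2(n+1)},\,1-\tfrac{1}{2(n+1)}]$, so its loss is bounded by $\log(2(n+1))$ and $R(\theta,\hat p_n^B)<\infty$ for every $\theta$. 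It follows that $R(\theta,\hat p_n^B)<R(\theta,\hat p_n^{\mathrm{pi}})$ for every $\theta$. This is strict dominance in the extended-real order of Definition~\ref{def:admissibility}, and it holds at every $\theta$, not just at one. I do not need $\hat p_n^B$ itself to be admissible, though on compact submodels this follows from Corollary~\ref{cor:noshame-compact}. I will verify from the Beta posterior that $\hat p_n^B$ is the Jeffreys-prior posterior mean, which is the log-loss Bayes act.

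The conclusion then follows. Inadmissibility is immediate from (iii). On any finite submodel $\{\theta_1,\dots,\theta_k\}\subset(0,1)$ the plug-in risk vector is $(+\infty,\dots,+\infty)$, and it is strictly exceeded coordinatewise by the finite vector $r(\hat p_n^B)\in\cR$, so it is not in $\partial_-\cR$ by Definition~\ref{def:boundary}. Combining (ii) with this gives a (C3)-consistent rule that fails (C1). The main obstacle is care rather than difficulty: handling extended-real arithmetic honestly, so that "$<+\infty$" really counts as strict dominance, and the $n=1$ initialization subtlety in (ii). Neither step needs the supporting-hyperplane machinery, consistent with Remark~\ref{rem:structural-scope}.
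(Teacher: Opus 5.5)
Your proposal is correct and follows the paper's proof essentially step for step: (i) by citation, (ii) by taking the $\hat P$-conditional expectation of $\hat p_n=\hat p_{n-1}+(X_n-\hat p_{n-1})/n$ (the paper writes the same computation as $(S_{n-1}+\hat p_{n-1})/n$), and (iii) by the $+\infty$ loss on $\{S_n=0\}$ against the Bayes predictive being bounded away from $\{0,1\}$, where you bypass the paper's KL decomposition, which plays no essential role in that argument. One aside needs adjusting: the martingale identity also holds at $n=1$, because under $\hat P$ one has $X_1\sim\mathrm{Bern}(\hat p_0)$ and hence $\Ebb_{\hat P}[\hat p_1]=\hat p_0$, so the restriction to $n\ge 2$ only reflects that $\hat p_{n-1}=S_{n-1}/(n-1)$ is given by that formula from $n\ge2$ on, not that anything fails at $n=1$.
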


\begin{proof}
\emph{(i)} Proposition~\ref{prop:bayes-martingale}.

\emph{(ii)} Under $\hat P$,
$\Ebb_{\hat P}[\hat p_n^{\mathrm{pi}}\mid X_{1:n-1}]
=(S_{n-1}+\hat p_{n-1}^{\mathrm{pi}})/n
=\bigl(S_{n-1}+S_{n-1}/(n-1)\bigr)/n
=S_{n-1}/(n-1)
=\hat p_{n-1}^{\mathrm{pi}}$ a.s.

\emph{(iii)} Under log loss, the risk decomposes as
\[
  R(\theta,\hat p_n)
  = \Ebb_\theta\bigl[\KL(\mathrm{Bern}(\theta)\|\mathrm{Bern}(\hat p_n))\bigr]
    + H(\theta),
\]
where $H(\theta)=-\theta\log\theta-(1-\theta)\log(1-\theta)$ is the
binary entropy.  Hence the excess risk is
\[
  R(\theta,\hat p_n^{\mathrm{pi}})-R(\theta,\hat p_n^B)
  = \Ebb_\theta\bigl[\KL(\mathrm{Bern}(\theta)\|\mathrm{Bern}(\hat p_n^{\mathrm{pi}}))
    -\KL(\mathrm{Bern}(\theta)\|\mathrm{Bern}(\hat p_n^B))\bigr].
\]
Since $\hat p_n^{\mathrm{pi}}=S_n/n\in\{0,\tfrac{1}{n},\ldots,1\}$
and $P_\theta(S_n=0)=(1-\theta)^n>0$ for every $\theta\in(0,1)$,
the predictor $\hat p_n^{\mathrm{pi}}=0$ assigns probability zero to
$X_{n+1}=1$, an event with probability $\theta>0$; thus
$\KL(\mathrm{Bern}(\theta)\|\mathrm{Bern}(\hat p_n^{\mathrm{pi}}))=+\infty$ on the
event $\{S_n=0\}$ and $R(\theta,\hat p_n^{\mathrm{pi}})=+\infty$.
Meanwhile $\hat p_n^B=(S_n+\tfrac12)/(n+1)\in(0,1)$ for all
$S_n\in\{0,\ldots,n\}$, so $R(\theta,\hat p_n^B)<\infty$.  Hence
$\hat p_n^B$ strictly dominates $\hat p_n^{\mathrm{pi}}$ for all
$\theta\in(0,1)$ and $n\ge 1$, giving
$\hat p_n^{\mathrm{pi}}$ is Blackwell-inadmissible by direct
dominance.  On any finite submodel, Proposition~\ref{prop:boundary}
also places its risk vector off $\partial_-\cR$.  By
Definition~\ref{def:noshame}, $\hat p_n^{\mathrm{pi}}$ is not
no-shame.
\end{proof}

The failure is not that the plug-in lacks a martingale law; it is
that the martingale law is the wrong one for Blackwell risk.  The
prior predictive law $\tilde P$, the plug-in self-predictive law
$\hat P$, and the true model law $P_\theta$ are three distinct
measures on the same path space; only the first two make the
respective Bayes / plug-in sequences martingales, and only the
third governs admissibility.

\begin{remark}[Role of extended-real risk]\label{rem:extended-real}
The dominance in part~(iii) requires $+\infty$ risk
(Definitions~\ref{def:sdp}~and~\ref{def:risk}).  The boundary
pathology ($\KL(\mathrm{Bern}(\theta)\|\mathrm{Bern}(\hat p_n^{\mathrm{pi}}))
=+\infty$ on $\{S_n=0\}$) is eliminated if loss is bounded, but
bounded losses would exclude the unbounded log loss used here.
Extended-real risk is therefore essential, not a technicality.
\end{remark}

\section{Criterion Separation}\label{sec:separation}

Sections~\ref{sec:geometry}--\ref{sec:martingale} established
Blackwell admissibility as the first geometry.  We now introduce two
additional admissibility criteria, anytime-valid sequential
inference and marginal coverage validity, each operating on a
different space of procedures with a different partial order, and
prove that the three resulting classes are pairwise non-nested.

Because predictors, $e$-processes, prediction sets, and online
strategies have different types, we place them in a common product
space $\Sigma$ and attach each criterion to its own coordinate.

\subsection{The predictive-systems meta-space \texorpdfstring{$\Sigma$}{Sigma}}
\label{sec:meta-space}

\begin{definition}[Predictive system]\label{def:predictive-system}
Let $\mathcal Z$ be a Polish single-observation space and let
$\Omega=\mathcal Z^\infty$ be the canonical path space, equipped with the
product $\sigma$-algebra $\cF_\infty$ and the natural filtration
$\cF_n=\sigma(Z_1,\ldots,Z_n)$.  In supervised prediction take
$Z_i=(X_i,Y_i)$; when there is no separate response, identify
$Z_i=X_i$.  Let $\cP=\{P_\theta:\theta\in\Theta\}$
be a statistical model on $(\Omega,\cF_\infty)$, with
$\Theta\subset\Rbb^d$ compact and metrizable.  A \emph{predictive
system} on this filtered space is a quadruple
\[
  \Delta = (\delta, E, \hat C, \sigma)
\]
whose four components are $(\cF_n)$-adapted processes:
\begin{enumerate}[label=\textup{(\alph*)},nosep]
\item $\delta=(\delta_n)_{n\ge 1}$, where each $\delta_n$ is
  $\cF_n$-measurable, taking values in $\Delta(\cA)$ with $\cA$
  compact metric and $\Delta(\cA)$ equipped with the weak topology
  (\emph{predictor component});
\item $E=(E_n)_{n\ge 0}$, where $E_0\equiv 1$, $E_n\ge 0$, and each
  $E_n$ is $\cF_n$-measurable (\emph{test component});
\item $\hat C=(\hat C_n)_{n\ge 0}$, where each $\hat C_n$ is an
  $\cF_n$-measurable prediction-set process (\emph{set component});
\item $\sigma=(\sigma_n)_{n\ge 1}$, where each $\sigma_n$ is
  $\cF_{n-1}$-measurable, taking values in $\cA$ (\emph{strategy
  component}).
\end{enumerate}
The collection of all such quadruples is denoted $\Sigma$.  Fix
once and for all a \emph{default inactive system}
$\Delta^\varnothing=(\delta^\varnothing,E^\varnothing,
\hat C^\varnothing,\sigma^\varnothing)$, where
$\delta^\varnothing_n\equiv a_\varnothing$ and
$\sigma^\varnothing_n\equiv a_\varnothing$ for a fixed default
action $a_\varnothing\in\cA$, $E^\varnothing_n\equiv 1$, and
$\hat C^\varnothing_n\equiv\cY$.  A coordinate of $\Delta$ is
\emph{active} if it differs from the corresponding default
coordinate on a set of positive $P_\theta$-probability for some
$\theta\in\Theta$.  In the Bernoulli illustrations below we take
the default inactive action to be $a_\varnothing=\tfrac12$, so the
constant-zero predictor and constant-zero strategy used as failure
witnesses in Theorem~\ref{thm:separation} are active coordinates.
\end{definition}

\paragraph{Measurability.}
Every witness
constructed in Theorems~\ref{thm:separation}
and~\ref{thm:extended-separation}---the Bayes predictors, the
plug-in rule, the likelihood-ratio $e$-process, the conformal
threshold sets, the KT strategy, and the constant comparison
rules---is given by an explicit measurable formula.  All witnesses
are ordinary Borel-adapted processes on the canonical path space.

\begin{definition}[Criterion-relative admissibility on $\Sigma$]%
\label{def:meta-adm}
For a predictive system $\Delta=(\delta,E,\hat C,\sigma)\in\Sigma$:
\begin{itemize}[leftmargin=*,nosep]
\item $\Delta\in\mathfrak{B}$ iff $\delta$ is Blackwell admissible per
  Definition~\ref{def:admissibility};
\item $\Delta\in\mathfrak{A}$ iff $E\in\cC_{\mathrm{AV}}$
  (Definition~\ref{def:av-class}) and $E$ is anytime-valid
  admissible in the sense of Theorem~\ref{thm:ramdas};
\item $\Delta\in\mathfrak{C}$ iff $\hat C$ is coverage-admissible per
  Definition~\ref{def:cov-adm}, within the stated comparison class
  (Remark~\ref{rem:cov-class});
\item $\Delta\in\mathfrak{D}$ iff $\sigma$ is CApp boundary-feasible
  per Definition~\ref{def:capp}.
\end{itemize}
At the component level, each criterion's admissibility (or
feasibility, for $\mathfrak{D}$) is defined on the relevant
procedure class; the four lifted classes
$\mathfrak{B},\mathfrak{A},\mathfrak{C},\mathfrak{D}\subseteq\Sigma$
above are their coordinatewise lifts via
Definition~\ref{def:predictive-system}.
\end{definition}

\begin{remark}[Lifting and active coordinates]\label{rem:lift}
Every classical procedure ($e$-process, conformal set, defensive
forecaster, etc.) lifts to a predictive system by setting the
unused components to their default-inactive values.  The
criterion separation theorem below (Theorem~\ref{thm:separation})
requires \emph{more}: each witness $\Delta\in\Sigma$ must be
active in both the membership and the failure coordinate, so
that the failure under the second criterion is a substantive
dominance failure within an active coordinate rather than
categorical inapplicability.  We verify activeness case by case
in the proof of Theorem~\ref{thm:separation}.
\end{remark}

\subsection{Anytime-valid admissibility}

\begin{definition}[Anytime-valid constraint class]\label{def:av-class}
Let $\cH_0$ be a null class, possibly composite.  The
\emph{anytime-valid class} is
\[
  \cC_{\mathrm{AV}} = \bigl\{
    (E_t)_{t\ge 0} : E_0=1,\; (E_t)\text{ is adapted},\; E_t\ge 0,\;
    \sup_{\Pbb\in\cH_0}\Ebb_\Pbb[E_\tau]\le 1
    \text{ for every stopping time }\tau
  \bigr\}.
\]
Elements of $\cC_{\mathrm{AV}}$ are called \emph{e-processes}.  By
Ville's inequality \citep{ville1939}, every $E\in\cC_{\mathrm{AV}}$
provides anytime-valid type-I error control at level $\alpha$.
\end{definition}

\begin{definition}[Admissibility within $\cC_{\mathrm{AV}}$]%
\label{def:av-adm}
For $E,E'\in\cC_{\mathrm{AV}}$, say $E'$ \emph{dominates} $E$,
written $E'\succeq_{\mathrm{AV}}E$, if $E'_t\ge E_t$ almost surely
for every $t$ under every $\Pbb\in\cH_0$, with strict inequality at
some time with positive probability under at least one
$\Pbb\in\cH_0$.  An element of $\cC_{\mathrm{AV}}$ is
\emph{AV-admissible} if no element of $\cC_{\mathrm{AV}}$ dominates
it in this order.
\end{definition}

\begin{theorem}[Point-null characterization; \citealt{ramdas2022}]\label{thm:ramdas}
If $\cH_0=\{P_0\}$ is a point null, a procedure in
$\cC_{\mathrm{AV}}$ is AV-admissible if and only if it is a
nonnegative $P_0$-martingale.  For composite $\cH_0$, being a
martingale under every $P\in\cH_0$ is sufficient but not necessary
for AV-admissibility.
\end{theorem}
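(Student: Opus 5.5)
The statement to prove is Theorem~\ref{thm:ramdas}: the point-null characterization of AV-admissibility. I would prove the two directions separately and then treat the composite remark.

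\emph{Sufficiency (martingale $\Rightarrow$ admissible).} Let $E$ be a nonnegative $P_0$-martingale with $E_0=1$. Then $\Ebb_{P_0}[E_t]=1$ for every fixed $t$. Suppose $E'\in\cC_{\mathrm{AV}}$ dominates $E$. Then $E'_t\ge E_t$ a.s.\ for all $t$, with $P_0(E'_{t_0}>E_{t_0})>0$ at some $t_0$. Take the deterministic stopping time $\tau\equiv t_0$. This gives $\Ebb_{P_0}[E'_{t_0}]>\Ebb_{P_0}[E_{t_0}]=1$, contradicting the defining constraint of $\cC_{\mathrm{AV}}$. The same argument handles a composite null when $E$ is a martingale under every $P\in\cH_0$: strictness occurs under some $P$, and that $P$ supplies the violation. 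This gives the "sufficient" half of the composite clause.

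\emph{Necessity (admissible $\Rightarrow$ martingale) under a point null.} Let $E\in\cC_{\mathrm{AV}}$ with $\cH_0=\{P_0\}$. Since $E_{\tau}$ has expectation at most $1$ for all bounded stopping times, $E$ is a nonnegative $P_0$-supermartingale. This is the standard optional-sampling converse: apply the constraint to $\tau=t$ on $A\in\cF_t$ and $\tau=t+1$ off $A$, for all $t$ and $A$. Its Doob decomposition is $E_t=M_t-A_t$, with $M$ a martingale, $M_0=1$, and $A$ predictable and nondecreasing with $A_0=0$. Hence $M_t\ge E_t\ge 0$, so $M$ is a nonnegative martingale and lies in $\cC_{\mathrm{AV}}$ by optional stopping for bounded $\tau$ and Fatou for general $\tau$. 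If $E$ is not a martingale, some $A_t$ is nonzero with positive probability, so $M$ dominates $E$ in the sense of Definition~\ref{def:av-adm}. Contrapositively, an admissible $E$ has $A\equiv 0$ and is a martingale.

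The main obstacle is the supermartingale step. I must check that the stopping-time constraint, including the two-valued stopping times above, forces $\Ebb_{P_0}[E_{t+1}\mid\cF_t]\le E_t$. Integrability must also be in place, which follows from $\Ebb E_t\le1$. I also need care that the definition uses expectations over all stopping times, possibly unbounded, on $\{\tau=\infty\}$. I would adopt the convention $E_\infty=\liminf E_t$ and invoke Fatou.

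For the "not necessary" part of the composite clause, I would cite a standard example from \citet{ramdas2022} rather than rebuild one. One example is a composite null where no nontrivial process is a martingale under all members, yet the trivial process $E\equiv1$ is a martingale. A better example is an admissible e-process formed as an infimum of per-$P$ martingales: it is a supermartingale under each $P$ but is not dominated, because any dominating process would violate the constraint for some $P$.
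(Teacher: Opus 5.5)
The paper gives no proof of this statement; it imports it from \citet{ramdas2022}. Your sufficiency direction is correct and standard. The deterministic time $\tau\equiv t_0$ turns any dominating $E'$ into a violation $\Ebb_P[E'_{t_0}]>\Ebb_P[E_{t_0}]=1$, and the same argument gives the ``sufficient'' half of the composite clause.

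The necessity direction has a genuine gap at the step you yourself flagged: membership in $\cC_{\mathrm{AV}}$ does \emph{not} make $E$ a $P_0$-supermartingale. Your two-valued stopping time yields only $\Ebb_{P_0}[E_t\mathbf 1_A]+\Ebb_{P_0}[E_{t+1}\mathbf 1_{A^c}]\le 1$, which never compares $E_{t+1}$ with $E_t$. The optional-sampling converse needs $\Ebb[E_\tau]\le\Ebb[E_\sigma]$ for pairs of stopping times (or equality, in the martingale case), not a bound by the constant $1$.

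Here is a concrete failure. Take $\cF_0$ trivial, $E_0=1$, $E_1=\tfrac12$, and for $t\ge 2$ let $E_t=1.4$ on $B$ and $E_t=\tfrac12$ on $B^c$, where $B\in\cF_1$ has $P_0(B)=\tfrac12$. Every stopping time either is $\equiv 0$ or has $\Ebb_{P_0}[E_\tau]\le 0.95$, so $E\in\cC_{\mathrm{AV}}$. Yet $E$ jumps up from time $1$ to time $2$ on $B$. Your Doob step then genuinely fails, not merely the monotonicity of $A$. The martingale part of $E$ is $M\equiv 1$, because $E_1$ is constant and $E_2$ is $\cF_1$-measurable. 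But $M_2=1<1.4=E_2$ on $B$, so $M$ does not dominate $E$.

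The missing idea is to decompose not $E$ but its Snell envelope $Y_t=\operatorname{ess\,sup}_{\tau\ge t}\Ebb_{P_0}[E_\tau\mid\cF_t]$, taken over bounded stopping times. $Y$ is the smallest $P_0$-supermartingale majorizing $E$, and $Y_0=\sup_\tau\Ebb_{P_0}[E_\tau]=1$ by $E_0=1$ and the $\cC_{\mathrm{AV}}$ constraint. Doob's decomposition $Y=M-A$ then gives a test martingale $M\ge Y\ge E$ with $M_0=1$, and the rest of your argument goes through. In the example, $M_t=1.45$ on $B$ and $0.55$ on $B^c$ for $t\ge 1$, which strictly dominates $E$. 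This is also exactly where the point null is used: the envelope is computed under a single measure.

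On the composite ``not necessary'' clause, neither suggestion establishes it. Your first example exhibits nothing, since $E\equiv 1$ is a martingale under every $P$. In the second, $E'\ge\min_P M^P$ does not give $E'\ge M^P$ for any single $P$, so the claimed non-dominance is asserted rather than shown. Deferring to an explicit example in \citet{ramdas2022}, as the paper effectively does, is the honest route there.
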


\begin{remark}\label{rem:av-distinct}
Theorem~\ref{thm:ramdas} establishes admissibility relative to
$\cC_{\mathrm{AV}}$ and the criterion of type-I error control at
every stopping time.  This is distinct from Blackwell admissibility,
which requires no domination under a loss $L(\theta,\delta)$ over
all of $\Theta$.  Figure~\ref{fig:av-cone-detail} depicts the
anytime-valid feasible region, an admissible point-null martingale
path, and the
stopped-process geometry behind Ville's inequality.
\end{remark}

\begin{figure}[ht]
\centering
\begin{tikzpicture}[>=Stealth, scale=1.05]
  \draw[->, thick] (-0.4,0) -- (9.0,0)
    node[below=4pt] {sample size $t$};
  \draw[->, thick] (0,-0.4) -- (0,6.4)
    node[left=4pt] {$E_t$};

  \fill[green!5]
    (0.4,1.2) -- (8.2,5.4) -- (8.2,0.05) -- (0.4,0.05) -- cycle;
  \draw[green!50!black, thick, dashed]
    (0.4,1.2) -- (8.2,5.4);
  \draw[green!50!black, thick, dashed]
    (0.4,1.2) -- (8.2,0.05);
  \node[green!40!black, font=\small] at (6.4,0.9)
    {$\cC_{\mathrm{AV}}$: anytime-valid class};

  \draw[black!30, thin, dashed] (0,1.2) -- (8.5,1.2);
  \node[black!50, font=\footnotesize, left] at (0,1.2) {$E_0\!=\!1$};

  \draw[red!50, thin, dashed] (0,4.4) -- (8.5,4.4);
  \node[red!50, font=\footnotesize, left] at (0,4.4) {$1/\alpha$};

  \draw[red!70!black, very thick]
    (0.4,1.2) -- (1.2,1.5) -- (2.0,1.15) -- (2.8,1.7)
    -- (3.6,1.4) -- (4.4,1.8) -- (5.2,1.55) -- (6.0,1.95)
    -- (6.8,1.7) -- (7.6,2.05) -- (8.2,1.8);
  \fill[red!70!black] (8.2,1.8) circle (2.5pt);
  \node[red!70!black, font=\small, above right=2pt] at (8.2,1.8)
    {admissible};

  \draw[gray, thick, densely dotted]
    (0.4,1.2) -- (1.2,1.9) -- (2.0,2.9) -- (2.8,2.4)
    -- (3.6,3.6) -- (4.4,3.2) -- (5.2,4.5) -- (6.0,4.0)
    -- (6.8,5.2) -- (7.6,4.7) -- (8.2,5.6);
  \fill[gray] (8.2,5.6) circle (2.5pt);
  \node[gray, font=\small, right=2pt] at (8.2,5.7)
    {infeasible};

  \draw[->, thick, orange!70!black] (3.6,1.4) -- (3.6,0.2);
  \node[orange!70!black, font=\footnotesize] at (3.6,-0.15) {$\tau$};
  \draw[orange!70!black, very thick]
    (3.6,1.4) -- (4.4,1.4) -- (5.2,1.4) -- (6.0,1.4)
    -- (6.8,1.4) -- (7.6,1.4) -- (8.2,1.4);
  \node[orange!70!black, font=\footnotesize, above=2pt] at (7.3,1.4)
    {stopped at $\tau$};
\end{tikzpicture}
\caption{Schematic feasible region for anytime-valid inference.
  An e-process $E_t$ starts at $E_0=1$ and satisfies the stopped-value
  condition of Definition~\ref{def:av-class}; the shaded region
  depicts $\cC_{\mathrm{AV}}$.  For a point null, the solid line
  represents an admissible e-process: a \emph{nonnegative
  martingale} (Theorem~\ref{thm:ramdas}).  The dotted line is a
  process drawn to violate the stopped-value condition under some
  $\Pbb\in\cH_0$, and is therefore infeasible.  Stopping at any
  data-dependent time $\tau$
  preserves type-I error control at level $\alpha$ via Ville's
  inequality: the stopped value $E_\tau\le 1/\alpha$ with
  probability at least $1-\alpha$ (downward arrow).  All
  distinctions in the figure are encoded by line style so that the
  caption reads correctly in grayscale.}
\label{fig:av-cone-detail}
\end{figure}

\begin{proposition}[Martingale as structural bridge]%
\label{prop:bridge}
The martingale property relates to each admissibility criterion as
follows:
\begin{enumerate}[label=(\roman*),nosep]
\item Single-prior sequential Bayes $\Rightarrow$ martingale under
  the prior predictive: every posterior predictive sequence under
  a fixed prior $\Pi$ is a martingale under $\tilde P=\int P_\theta\,
  \dx\Pi(\theta)$ (Proposition~\ref{prop:bayes-martingale}).
\item For a point null, AV-admissible $\Leftrightarrow$ nonnegative
  martingale within $\cC_{\mathrm{AV}}$
  (Theorem~\ref{thm:ramdas}); for a composite null, the martingale
  condition under every null remains sufficient but is not necessary.
\item Coverage validity does not require any martingale property:
  conformal prediction sets are constructed from rank statistics.
\item Self-consistency $\not\Rightarrow$ Blackwell admissible: the
  plug-in $\hat p_n^{\mathrm{pi}}$ is a martingale under its own
  predictive law $\hat P$ but is strictly dominated
  (Theorem~\ref{thm:insuff}).
\end{enumerate}
For a point null, the martingale property is a complete
characterization of AV-admissibility within $\cC_{\mathrm{AV}}$,
and it is a structural feature of
single-prior sequential Bayes coherence, but it is not a universal
determinant of admissibility: it is neither necessary for fixed-sample
Blackwell admissibility, nor needed for coverage validity or
CApp boundary-feasibility.
\end{proposition}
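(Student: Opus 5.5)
The proof assembles the four items from earlier results, adding one explicit witness for the "not necessary" claim in the closing sentence.

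\emph{Item (i)} is exactly Proposition~\ref{prop:bayes-martingale}, applied to $m_n=\Ebb_\Pi[\theta\mid\cF_n]$ under $\tilde P$. \emph{Item (ii)} is Theorem~\ref{thm:ramdas}, restated for the point-null and composite cases. \emph{Item (iv)} is parts (ii) and (iii) of Theorem~\ref{thm:insuff}. The plug-in is a $\hat P$-martingale for $n\ge 2$, yet its risk is $+\infty$ on $\{S_n=0\}$, so $\hat p_n^B$ dominates it.

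\emph{Item (iii)} needs a short argument that no martingale structure enters the coverage guarantee. Take the split or full conformal set $\hat C_n$ defined by thresholding the rank of the test nonconformity score among $n+1$ exchangeable scores. Coverage $\Pbb(Y_{n+1}\in\hat C_n(X_{n+1}))\ge 1-\alpha$ follows because, under exchangeability, the rank of the $(n+1)$-st score is uniform on $\{1,\dots,n+1\}$ (breaking ties by randomization or conservatively). This uses only the permutation invariance of the joint law of $(Z_1,\dots,Z_{n+1})$. It makes no reference to a filtration or to conditional expectations of any adapted process. The guarantee is therefore valid for a fixed $n$ and fixed level $\alpha$ without any martingale hypothesis. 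I will state it as a proof that the martingale property is not used, not as an impossibility claim.

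For the summary sentence, each non-implication needs a witness.
\begin{itemize}
\item \emph{Martingale not necessary for Blackwell admissibility.} I would exhibit an admissible sequence that is Bayes under a different full-support prior at each $n$. For example, take $\hat p_n=(S_n+a_n)/(n+2a_n)$ with $a_n$ alternating between $\tfrac12$ and $1$. Each $\hat p_n$ is admissible on compact submodels by Corollary~\ref{cor:noshame-compact}. Its risk is continuous because predictions stay in $(0,1)$. I then check that $(\hat p_n)$ fails the martingale identity under every mixture law $\tilde P$ by a one-step computation at $n=1$. This verification is the main obstacle. It requires showing that no single measure makes the alternating sequence a martingale, which I would reduce to comparing $\Ebb[\hat p_2\mid\cF_1]$ with $\hat p_1$ on both atoms of $\cF_1$.
\item \emph{Not needed for coverage.} This is item (iii).
\item \emph{Not needed for CApp boundary-feasibility.} I would forward-reference the KT strategy. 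Its Ces\`aro risk converges to $H(\theta)$ by the $O((\log n)/n)$ regret bound in Example~\ref{ex:bern-running}, and that bound is a deterministic, pathwise regret argument involving no martingale.
\end{itemize}
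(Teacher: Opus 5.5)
Your derivation of items (i)--(iv) from Proposition~\ref{prop:bayes-martingale}, Theorem~\ref{thm:ramdas}, the exchangeable-rank argument, and Theorem~\ref{thm:insuff} is how the paper supports them; the paper gives no separate proof. Your plan for the closing sentence, a rule that is Bayes under a different prior at each $n$, is also the paper's own reason for (C1)$\not\Rightarrow$(C2), stated after Definition~\ref{def:three-coherence}.

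\textbf{The gap is in your explicit witness.} $\hat p_n=(S_n+a_n)/(n+2a_n)$ is \emph{not} admissible on a compact submodel $\Theta=[\eta,1-\eta]$ once $a_n/(n+2a_n)<\eta$, and Corollary~\ref{cor:noshame-compact} does not apply to it. The closed form is the posterior mean of the untruncated $\mathrm{Beta}(a_n,a_n)$ prior. Posterior means under priors on $\Theta$ lie in $\Theta$, but this one does not for large $n$. At $S_n=0$ it predicts below $\eta\le\theta$. Since $L(\theta,\cdot)$ is strictly decreasing on $(0,\theta]$, clipping to $[\eta,1-\eta]$ strictly lowers the loss on an event of $P_\theta$-probability $(1-\theta)^n>0$. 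This holds for every $\theta\in\Theta$, so the clipped rule strictly dominates. The paper makes the same caveat when it distinguishes KT from the restricted Jeffreys predictive $\delta^B$.

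The repair is to work on the full model $\Theta=(0,1)$. There $\mathrm{Beta}(a,a)$ has full support, and the Bayes risk is finite because predictions lie in $[a/(n+2a),\,1-a/(n+2a)]$. Any competitor $\delta'$ with finite risk has $\int\log u\,\dx\delta'(x)(u)$ and $\int\log(1-u)\,\dx\delta'(x)(u)$ finite at each of the finitely many $x$, so its risk is polynomial in $\theta$. The proof of Corollary~\ref{cor:noshame-compact} never uses compactness, so it then runs verbatim.

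\textbf{Two smaller points.} First, the $n=1$ check covers all mixture laws only if $a_1=\tfrac12$ and $a_2=1$. In that order, on $\{X_1=0\}$ the identity forces $\tilde P(X_2=1\mid X_1=0)=0$, whereas this probability equals $\Ebb_\Pi[\theta(1-\theta)]/\Ebb_\Pi[1-\theta]>0$. In the other order, the point-mass prior at $\tfrac12$ passes the $n=1$ check on both atoms, and you must go to $n=2$ (at $S_2=0$, $\Ebb[\hat p_3\mid S_2=0]=3/10\ne 1/6=\hat p_2$). Neither version rules out \emph{every} measure, since the law with $X_2=X_1$ passes the first step. So keep the claim at mixture laws, which is all (C2) needs.

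Second, KT is the Jeffreys posterior predictive, hence a martingale under its prior predictive. It therefore shows only that the CApp argument does not invoke the martingale property, not that a CApp-feasible strategy can lack it. The regret route is also not martingale-free. The pathwise bound controls realized loss, while $\bar R_n$ averages $L(\theta,\sigma_t)$, and Lemma~\ref{lem:df-boundary} bridges the two with a martingale-difference LLN.

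Both issues go away if you reuse your alternating sequence as the strategy $\sigma_n=(S_{n-1}+a_n)/(n-1+2a_n)$. By the SLLN $\sigma_n\to\theta$ a.s., so $L(\theta,\sigma_n)-H(\theta)=\KL(\mathrm{Bern}(\theta)\,\|\,\mathrm{Bern}(\sigma_n))\to0$ a.s., and its Ces\`aro mean vanishes. This gives a single non-martingale witness for both non-necessity claims.
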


\begin{remark}[Measure-relative martingale properties]\label{rem:measure-relative}
The martingale property of a sequence $(M_n)$ is defined relative to
a measure class, and the following three notions must be carefully
distinguished:
\begin{enumerate}[label=(\roman*),nosep]
\item \emph{Martingale under the prior predictive}
  $\tilde P=\int P_\theta\,\dx\Pi(\theta)$: the Bayesian posterior
  predictive sequence satisfies this by the tower property
  (Proposition~\ref{prop:bayes-martingale}).
\item \emph{Stopped-value control under each $P\in\cH_0$}: the defining
  property of e-processes in $\cC_{\mathrm{AV}}$
  (Definition~\ref{def:av-class}), which guarantees anytime-valid
  error control via Ville's inequality.
\item \emph{Self-predictive martingale under plug-in law $\hat P$}:
  the plug-in MLE $\hat p_n^{\mathrm{pi}}$ satisfies the martingale
  condition under $\hat P$ where
  $X_t\mid X_{1:t-1}\sim\mathrm{Bern}(\hat p_{t-1}^{\mathrm{pi}})$,
  yet is not admissible (Theorem~\ref{thm:insuff}).
\end{enumerate}
Martingale coherence is not invariant across measures; admissibility
depends on which measure class defines the risk functional.
\end{remark}

\begin{remark}[Certificate transport across filtrations]%
\label{rem:filtration-transport}
A related transport question arises \emph{inside} the anytime-valid
geometry itself.  \citet{choe2026filtrations} study $e$-processes
constructed in different filtrations and show that validity in a
coarser filtration does not automatically carry over to a finer
filtration; an \emph{adjuster} is needed to lift the evidence
  process, and they prove that such adjusters are in a precise sense
necessary.  Their result concerns transport across filtrations
within $\cC_{\mathrm{AV}}$; ours concerns transport across
criteria.  Both dependencies must be checked explicitly.
\end{remark}

\subsection{Marginal coverage admissibility}

The third admissibility geometry concerns prediction sets rather
than point predictions or sequential tests; its symmetry
assumption is exchangeability \citep{definetti1937,hewitt1955}.
Two notions of coverage must be distinguished.  \emph{Marginal coverage} averages
over both the calibration data and the new test point: it asks
whether the prediction set contains the true response with
probability at least $1-\alpha$ under the joint exchangeable
distribution.  \emph{Conditional coverage} conditions on the
observed test covariate $X_{n+1}=x$ and demands coverage at every
$x$ individually.  Marginal coverage is achievable by conformal
methods; conditional coverage, as the next theorem shows, is not
achievable without degenerate prediction sets.

\begin{definition}[Marginal coverage]\label{def:coverage}
Given exchangeable pairs
$(X_1,Y_1),\ldots,(X_{n+1},Y_{n+1})$, a prediction set
$\hat C_n(X_{n+1})$ satisfies \emph{marginal coverage at level
$1-\alpha$} if
\[
  \Pbb\bigl(Y_{n+1}\in\hat C_n(X_{n+1})\bigr)\ge 1-\alpha.
\]
The coverage condition by itself is a feasibility constraint, not
an admissibility relation: the trivial set $\hat C_n\equiv\cY$ always
satisfies it.  We promote it to a genuine dominance order by adding
expected length to the comparison.
\end{definition}

\begin{definition}[Coverage admissibility at level $1-\alpha$]%
\label{def:cov-adm}
Fix a coverage level $1-\alpha\in(0,1)$ and a declared class
$\mathcal P_{\mathrm{ex}}$ of exchangeable laws (a singleton is
allowed), together with a declared comparison class $\cG$ of
prediction-set processes.  The
\emph{coverage-feasible class} is
\[
  \cF_{\mathrm{Cov}}(\alpha)
  \;=\;
  \Bigl\{\hat C=(\hat C_n)\,:\,
    P\bigl(Y_{n+1}\in\hat C_n(X_{n+1})\bigr)\ge 1-\alpha
    \text{ for all }n\ge 1\text{ and }P\in\mathcal P_{\mathrm{ex}}\Bigr\}.
\]
For $\hat C,\hat C'\in\cF_{\mathrm{Cov}}(\alpha)\cap\cG$, write
$\hat C'\preceq_{\mathrm{Cov}}\hat C$ if
\[
  \Ebb_P|\hat C_n'|\;\le\;\Ebb_P|\hat C_n|
  \qquad\text{for every }n\ge 1
  \text{ and }P\in\mathcal P_{\mathrm{ex}},
\]
with strict inequality for some $(n,P)$.  A prediction-set process
$\hat C\in\cF_{\mathrm{Cov}}(\alpha)\cap\cG$ is
\emph{coverage-admissible within $\cG$}
\textup{(at level $1-\alpha$)} if no
$\hat C'\in\cF_{\mathrm{Cov}}(\alpha)\cap\cG$ satisfies
$\hat C'\preceq_{\mathrm{Cov}}\hat C$.
\end{definition}

\begin{remark}[Why fixed-level dominance, not a 2D Pareto order]%
\label{rem:trivial-set}
Coverage at level $1-\alpha$ is the validity constraint imposed by
the practitioner; once it is met, overcoverage carries no benefit,
and a 2D Pareto comparison on $(c,\ell)=(\Pbb(Y\in\hat C),\Ebb|\hat C|)$
would prevent any shorter valid set from dominating a wider one
(since the wider one typically has larger $c$).  We therefore
treat the coverage threshold as a feasibility requirement and
measure efficiency \emph{within} the feasible class by expected
length alone.  Under
Definition~\ref{def:cov-adm}, if the declared class $\cG$ contains
the trivial choice $\hat C_n\equiv\cY$, that choice has
coverage~$1\ge 1-\alpha$ but maximal expected length.  It is
strictly dominated whenever $\cG$ contains a feasible process that
is no wider for every $(n,P)$ and strictly shorter for at least one.
This turns marginal coverage from a one-sided feasibility condition
into an admissibility order.  An equivalent
``capped-coverage'' formulation, in which the coverage credit is
$\min\{\Pbb(Y\in\hat C),1-\alpha\}$, induces the same order on
$\cF_{\mathrm{Cov}}(\alpha)\cap\cG$.
\end{remark}

\begin{proposition}[Conformal sets are coverage-admissible within a rank-indexed family]%
\label{prop:conf-admissible}
Fix a nonconformity score fitted on an independent training split.
At each $n$, let $m_n$ calibration scores and the test score be
exchangeable.  Attach independent marks
$U_{n,1},\ldots,U_{n,m_n},U_{n,*}\sim\mathrm{Unif}(0,1)$,
independent of the scores.  Replace each calibration score
$s_{n,i}$ by $\tilde s_{n,i}=(s_{n,i},U_{n,i})$, and for a candidate
response $y$ use $\tilde s_n(x,y)=(s(x,y),U_{n,*})$, with pairs
ordered lexicographically.  Let $\tilde q_{n,k}$ be the $k$th order
statistic of the augmented calibration scores together with
$(+\infty,1)$, and define the randomized set
\[
  \hat C_n^{(k)}(x)
  =\{y:\tilde s_n(x,y)\le_{\mathrm{lex}}\tilde q_{n,k}\},
  \qquad k\in\{1,\ldots,m_n+1\}.
\]
The same marks are used for every rank $k$, so these randomized sets
are coupled and nested pathwise.
Let $\cG_s^{\mathrm{rank}}$ be the class of processes selecting one
such rank according to a deterministic sequence
$k_n\in\{1,\ldots,m_n+1\}$, with $k_n$ fixed before the calibration
scores, test score, and auxiliary marks are observed.  Take
$\mathcal P_{\mathrm{ex}}$ to be
the class of all laws under which the $m_n$ calibration scores and
the test score are exchangeable at every $n$.  Then the
split-conformal choice
\[
  k_{n,\alpha}=\left\lceil(m_n+1)(1-\alpha)\right\rceil
\]
is coverage-admissible at level $1-\alpha$ within
$\cG_s^{\mathrm{rank}}$.
\end{proposition}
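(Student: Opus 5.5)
The plan has two parts: show that the split-conformal rank is feasible, and show that every strictly shorter rank sequence is infeasible. Both rest on one exact distributional fact. Under any $P\in\mathcal P_{\mathrm{ex}}$, the augmented scores $\tilde s_{n,1},\ldots,\tilde s_{n,m_n},\tilde s_n(X_{n+1},Y_{n+1})$ are almost surely distinct, because ties in $s$ are broken by the continuous marks. Exchangeability of the scores, combined with i.i.d.\ uniform marks independent of the scores, makes these augmented pairs exchangeable. Hence the rank of the test pair among the $m_n+1$ pairs is exactly uniform on $\{1,\ldots,m_n+1\}$. Since $(+\infty,1)$ dominates every finite augmented score, the event $Y_{n+1}\in\hat C_n^{(k)}(X_{n+1})$ is exactly the event that the test rank is at most $k$. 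This gives the exact identity
\[
  P\bigl(Y_{n+1}\in\hat C_n^{(k)}(X_{n+1})\bigr)=\frac{k}{m_n+1}
  \qquad\text{for every }P\in\mathcal P_{\mathrm{ex}}.
\]
The identity holds for every $k\in\{1,\ldots,m_n+1\}$.

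\emph{Feasibility.} With $k_{n,\alpha}=\lceil(m_n+1)(1-\alpha)\rceil$ the coverage is $k_{n,\alpha}/(m_n+1)\ge 1-\alpha$ at every $n$ and every $P$. The choice $k_n$ is deterministic, so the process lies in $\cG_s^{\mathrm{rank}}$, and therefore $\hat C^{(k_{n,\alpha})}\in\cF_{\mathrm{Cov}}(\alpha)\cap\cG_s^{\mathrm{rank}}$.

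\emph{Undominatedness.} Let $(k'_n)$ be any competitor in $\cF_{\mathrm{Cov}}(\alpha)\cap\cG_s^{\mathrm{rank}}$. The coverage identity forces $k'_n/(m_n+1)\ge 1-\alpha$, and since $k'_n$ is an integer, $k'_n\ge k_{n,\alpha}$ for every $n$. The sets are coupled with common marks and nested pathwise in $k$, because $\tilde q_{n,k}$ is nondecreasing in $k$. Hence $\hat C_n^{(k_{n,\alpha})}\subseteq \hat C_n^{(k'_n)}$ pathwise, and so $\Ebb_P|\hat C_n^{(k'_n)}|\ge \Ebb_P|\hat C_n^{(k_{n,\alpha})}|$ for every $(n,P)$. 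No competitor can have expected length at most that of $\hat C^{(k_{n,\alpha})}$ everywhere with strict inequality somewhere. If such a competitor existed, the two weak inequalities would force equality at every $(n,P)$, which contradicts the required strictness. So $\hat C^{(k_{n,\alpha})}$ is coverage-admissible within $\cG_s^{\mathrm{rank}}$ in the sense of Definition~\ref{def:cov-adm}.

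\emph{Main obstacle.} The step needing care is the exact uniformity of the test rank. Two points must be justified: that attaching independent uniform marks preserves exchangeability of the augmented vector (a product of an exchangeable law with an i.i.d.\ law is exchangeable), and that lexicographic ties occur with probability zero. One must also check that membership $y\in\hat C_n^{(k)}(x)$ evaluated at the true $(X_{n+1},Y_{n+1})$ coincides with the rank event, including the boundary case $k=m_n+1$, where the set is all of $\cY$. Measurability of $|\hat C_n^{(k)}|$ and the possibility of infinite expected lengths cause no trouble, because the comparison in the last step uses only pathwise inclusion.
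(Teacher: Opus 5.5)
Your proof is correct and follows the paper's argument. Exact uniformity of the augmented test rank gives coverage $k/(m_n+1)$, so any feasible rank sequence must satisfy $k_n\ge k_{n,\alpha}$, and pathwise nesting then rules out a strictly smaller expected size. You also spell out steps the paper leaves implicit: that $k_{n,\alpha}$ is itself feasible, that the marked scores are exchangeable, and that the coverage event coincides with the rank event.
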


\begin{remark}[Coverage admissibility is comparison-class relative]%
\label{rem:cov-class}
Coverage admissibility, like every efficiency
notion, is relative to a specified comparison class $\cG$ of
prediction-set processes.  The natural choice in conformal
practice is the rank-indexed family $\cG_s^{\mathrm{rank}}$ used in
Proposition~\ref{prop:conf-admissible}.  Throughout this paper,
when we say a system $\Delta\in\Sigma$ lies in
$\mathfrak{C}$, we mean its prediction-set coordinate is
coverage-admissible within a stated comparison class
(by default $\cG_s^{\mathrm{rank}}$ for the declared nonconformity
score $s$); the separation witnesses below keep this family
constant across the systems being compared.
\end{remark}

\begin{proof}
The independent continuous marks make the rank of the augmented
test score uniform on
$\{1,\ldots,m_n+1\}$ under every exchangeable score law.  Thus rank
$k$ has coverage $k/(m_n+1)$, and $k_{n,\alpha}$ is the smallest
rank with coverage at least $1-\alpha$.  Any process choosing a
smaller rank at some $n$ leaves the distribution-free feasible
class.  Any feasible process in $\cG_s^{\mathrm{rank}}$ therefore
chooses $k_n\ge k_{n,\alpha}$ for every $n$, and nesting gives
$\hat C_n^{(k_{n,\alpha})}\subseteq\hat C_n^{(k_n)}$ pathwise.
It cannot have strictly smaller expected size.  Hence no member of
the rank-indexed family dominates the conformal choice.
\end{proof}

\begin{theorem}[{\citealt{foygel2021}}]%
\label{thm:conformal-impossibility}
In the distribution-free setting, suppose a prediction set guarantees
conditional coverage
\[
  \Pbb\bigl(Y_{n+1}\in\hat C_n(X_{n+1})\mid X_{n+1}=x\bigr)\ge 1-\alpha
\]
for $P_X$-almost every $x$, uniformly over all distributions $P$.
Then, for every $P$, the set has infinite expected length at
$P_X$-almost every nonatomic $x$.  In particular, if $P_X$ is
nonatomic, it is essentially uninformative $P_X$-almost everywhere.
\end{theorem}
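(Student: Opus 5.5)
The plan is to adapt the Foygel Barber--Cand\`es--Ramdas--Tibshirani argument: convert conditional coverage near a nonatomic point into coverage under a slightly perturbed distribution, and show that this forces the set to contain arbitrarily large regions. Fix $P$, and a nonatomic point $x_0$ in the support of $P_X$, meaning $P_X(\{x_0\})=0$. Consider a small ball $B$ around $x_0$ with $P_X(B)=\epsilon$. For any $y$-law $Q$ supported on $\mathcal Y$, and in particular on a set of values of large Lebesgue measure, build an alternative distribution $P'$ that agrees with $P$ off $B$ and, on $B$, draws $Y$ from $Q$ independently of $X$. Since the guarantee holds uniformly over all distributions, $\hat C_n$ must cover $Y_{n+1}$ with conditional probability at least $1-\alpha$ at $P'_X$-a.e.\ $x\in B$.

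The key step is a total-variation coupling. The $n$ training pairs fall in $B$ with probability at most $1-(1-\epsilon)^n\le n\epsilon$. Hence the joint laws of the training data under $P^{n}$ and $P'^{n}$ differ in total variation by at most $n\epsilon$. Conditioning on $X_{n+1}=x\in B$, this yields
\[
\Pbb_{P}\bigl(y\in\hat C_n(x)\bigr)\ \ge\ \Pbb_{P'}\bigl(y\in\hat C_n(x)\bigr)-n\epsilon .
\]
Then integrate over $y\sim Q$, say uniform on an interval $[-M,M]$, and apply Fubini. Under $P'$ the integral equals the conditional coverage at $x$, which is at least $1-\alpha$. Therefore
\[
\Ebb_P\bigl|\hat C_n(x)\cap[-M,M]\bigr|\ \ge\ 2M\,(1-\alpha-n\epsilon)
\]
for a.e.\ $x\in B$.

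Because $x_0$ is nonatomic, letting the ball shrink sends $\epsilon\to0$. Sending $M\to\infty$ then gives $\Ebb_P|\hat C_n(x)|=\infty$, provided $\alpha<1$. If $P_X$ has no atoms, every point is of this type, so the conclusion holds $P_X$-almost everywhere.

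The main obstacle is measure-theoretic. The conditional-coverage statement holds only for $P'_X$-almost every $x$, and the exceptional null sets depend on $P'$, hence on $Q$ and $B$. I would handle this by averaging over $x\in B$ rather than working pointwise. This gives an inequality for $\int_B \Ebb_P|\hat C_n(x)\cap[-M,M]|\,dP_X(x)/\epsilon$. I would then use a countable family of balls and values of $M$ together with Lebesgue differentiation with respect to $P_X$ to pass back to $P_X$-a.e.\ $x$.

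A smaller subtlety is that $\epsilon\to0$ only at nonatomic points. This is why the statement restricts the conclusion to such $x$; the case of a purely nonatomic $P_X$ is exactly the stated corollary.
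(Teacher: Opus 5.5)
Your proposal is correct. It is essentially the standard Lei--Wasserman / Foygel Barber et al.\ argument: perturb $P$ to a worst-case conditional law on a small-mass neighborhood, transfer coverage back to $P$ via $d_{\mathrm{TV}}(P^n,P'^n)\le n\epsilon$, and shrink the neighborhood at a non-atom. The paper gives no proof of its own; it invokes the result by citation and summarizes only this heuristic, that the set must ``accommodate worst-case conditional laws.''

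One simplification: you do not need the averaging and Lebesgue-differentiation step. Your pointwise bound already holds for $P_X$-a.e.\ $x$ in \emph{every} ball. So take a countable base of balls and integer values of $M$, and discard the countable union of exceptional null sets. At any remaining non-atom $x$, pick a basic ball containing $x$ with arbitrarily small $P_X$-mass. This works in any separable metric covariate space, whereas differentiation for an arbitrary $P_X$ relies on centred-ball (Besicovitch) structure in $\mathbb{R}^d$.
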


The intuition is that a distribution-free conditional guarantee over
essentially the entire covariate distribution would require the
prediction set to accommodate worst-case conditional laws throughout
that distribution; for nonatomic covariates this forces infinite
expected length almost everywhere.
The impossibility is specifically about \emph{distribution-free
conditional} coverage: it does not constrain marginal coverage,
which conformal methods do attain, and it does not by itself
adjudicate Blackwell optimality of a point predictor.  It is one
concrete face of the criterion separation that
Theorem~\ref{thm:separation} formalizes;
Figure~\ref{fig:coverage-detail} summarizes the coverage--width
trade-off.

\begin{figure}[ht]
\centering
\begin{tikzpicture}[>=Stealth, scale=1.0]
  \draw[->, thick] (-0.4,0) -- (8.5,0)
    node[below=4pt, font=\small] {expected interval width};
  \draw[->, thick] (0,-0.4) -- (0,6.0)
    node[left=4pt, font=\small] {marginal coverage};

  \fill[orange!8] (0.0,3.8) rectangle (8.0,5.6);
  \draw[orange!70!black, thick] (0.0,3.8) -- (8.0,3.8);
  \node[orange!70!black, font=\footnotesize, anchor=west]
    at (8.05,3.8) {$1{-}\alpha$};

  \draw[blue!60!black, very thick]
    (0.3,0.3) .. controls (1.4,1.7) and (2.5,3.0) .. (3.6,3.8)
    .. controls (4.7,4.5) and (6.0,5.0) .. (7.6,5.3);

  \fill[black]         (3.6,3.8) circle (2.7pt);
  \node[font=\footnotesize, anchor=north west] at (3.7,3.7) {C};
  \fill[red!70!black]  (2.0,2.5) circle (2.7pt);
  \node[red!70!black, font=\footnotesize, anchor=east]
    at (1.88,2.5) {O};
  \fill[green!50!black] (6.3,5.0) circle (2.7pt);
  \node[green!50!black, font=\footnotesize, anchor=south west]
    at (6.4,5.0) {W};
\end{tikzpicture}

\smallskip
{\footnotesize
\textbf{Legend.}
The shaded band above the horizontal line at $1{-}\alpha$ is the
coverage-feasible class $\cF_{\mathrm{Cov}}(\alpha)$
(Definition~\ref{def:cov-adm}); the curve is the
achievable coverage as a function of expected interval width.
\textbf{C}~$=\hat C_n^{\mathrm{conf}}$: the split-conformal set at
level $1-\alpha$, a representative exchangeability-based set
meeting at least the target coverage and lying on the rank-family
frontier for expected length
(Proposition~\ref{prop:conf-admissible}).
\textbf{O}~$=\hat C_n^{\mathrm{oracle}}$: a shorter oracle Bayes
interval that undercovers and so falls outside
$\cF_{\mathrm{Cov}}(\alpha)$.
\textbf{W}~$=\hat C_n^{\mathrm{wide}}$: a conservative feasible
set that wastes width.  Exact conditional coverage at every $x$
simultaneously is impossible
(Theorem~\ref{thm:conformal-impossibility}).}
\caption{Coverage-feasible region for prediction sets.  The
horizontal line marks the validity threshold $1-\alpha$; the
shaded band above is the feasible class
$\cF_{\mathrm{Cov}}(\alpha)$.  Among the three points,
\textbf{C}~lies on the rank-family frontier (conformal and meeting
at least the target coverage), \textbf{O}~lies below it (oracle Bayes,
infeasible), and \textbf{W}~lies above it (conservative,
feasible but wider than \textbf{C}).}
\label{fig:coverage-detail}
\end{figure}

\subsection{Criterion-relative admissibility}

\begin{definition}[Criterion-relative admissibility]\label{def:crit-adm}
Write $\mathsf{Adm}_{\cC}(\delta)$ to denote that $\delta$ is
admissible relative to criterion $\cC$: a comparison class of
procedures together with the partial order induced by the relevant
performance functional.  The three criteria considered are:
\begin{enumerate}[label=(\roman*),nosep]
\item $\cC_B$: Blackwell risk dominance: comparison class $\cD$,
  ordering by coordinatewise risk dominance on $\cR$, ambient
  geometry the convex risk set $\cR\subset\Rbb^k_+$.
\item $\cC_{\mathrm{AV}}$: anytime-valid admissibility: comparison
  class $\cC_{\mathrm{AV}}$ (Definition~\ref{def:av-class}), ordering
  by almost-sure pointwise dominance at every time under every null,
  ambient geometry the anytime-valid feasible class.
\item $\cC_{\mathrm{Cov}}$: marginal coverage at level $1-\alpha$:
  comparison class a specified rank-indexed family
  $\cG_s^{\mathrm{rank}}$ of prediction sets
  under exchangeability, ordering by expected length within the
  fixed-level coverage-feasible class
  $\cF_{\mathrm{Cov}}(\alpha)=\{\hat C:\Pbb(Y_{n+1}\in\hat C)\ge 1-\alpha\}$,
  ambient geometry the length frontier of
  $\cF_{\mathrm{Cov}}(\alpha)\cap\cG_s^{\mathrm{rank}}$.
\end{enumerate}
For a system $\Delta=(\delta,E,\hat C,\sigma)\in\Sigma$, membership
in $\mathfrak{B}$, $\mathfrak{A}$, and $\mathfrak{C}$ means
$\mathsf{Adm}_{\cC_B}(\delta)$,
$\mathsf{Adm}_{\cC_{\mathrm{AV}}}(E)$, and
$\mathsf{Adm}_{\cC_{\mathrm{Cov}}}(\hat C)$, respectively.
\end{definition}

\subsection{Constrained Bayes as a design principle}\label{sec:constrained}

Each criterion in Definition~\ref{def:crit-adm} can be written as a
constrained optimization with its own decision and feasibility
spaces.

\begin{definition}[Constrained Bayes schema]\label{def:cb}
Each criterion $C\in\{B,\mathrm{AV},\mathrm{Cov},\mathrm{App}\}$
carries its own criterion-specific decision space $\cD_C$ (the
class of objects on which $C$ is defined: point predictors for
Blackwell, $e$-processes for AV, prediction sets for Cov, online
strategies for App), its own criterion-specific risk functional
$\cR_C(\theta,\cdot)$, and its own feasibility set
$\cF_C\subseteq\cD_C$ encoding the validity requirement of~$C$.
Given a prior $\Pi$ on $\Theta$, the \emph{constrained Bayes
problem under criterion~$C$} is
\begin{equation}\label{eq:cb}
  \min_{\delta\in\cD_C}\;
    \int_\Theta \cR_C(\theta,\delta)\,\dx\Pi(\theta)
  \qquad\text{subject to}\qquad
  \delta\in\cF_C.
\end{equation}
A solution $\delta^*_{\cF_C}$ is a \emph{constrained Bayes rule for
criterion~$C$}.  When $C=B$ and $\cF_B=\cD_B=\cD$ is the
unconstrained point-predictor space, the problem reduces to the
unconstrained Bayes problem of Theorem~\ref{thm:supporting}.

Let $\operatorname{pr}_C:\Sigma\to\cD_C$ denote the projection onto
the coordinate relevant to criterion $C$.  Equivalently, lifting
each component to $\Sigma$ (Definition~\ref{def:predictive-system}),
the schema is
\begin{equation*}
  \min_{\substack{\Delta\in\Sigma:\\
        \operatorname{pr}_C(\Delta)\in\cF_C}}\;
    \int_\Theta
      \cR_C\bigl(\theta,\operatorname{pr}_C(\Delta)\bigr)
      \,\dx\Pi(\theta).
\end{equation*}
The other coordinates are unrestricted.  Activeness is imposed
only when constructing the separation witnesses; it is not part of
the constrained Bayes feasibility condition.
\end{definition}

\begin{remark}[The schema is not a single optimization problem]%
\label{rem:schema-not-single}
The common form in~\eqref{eq:cb} does not identify the decision
spaces or risk functionals: point predictors, $e$-processes,
prediction sets, and online strategies retain their own feasible
sets and partial orders.
\end{remark}

\begin{remark}[How the prior $\Pi$ enters the schema]%
\label{rem:prior-sensitivity}
Within the constrained Bayes schema two roles for the prior $\Pi$
must be carefully distinguished.

\smallskip\noindent
\textit{(1) $\Pi$ as objective weight, $\cF_C$ held fixed.}
Holding the feasibility set $\cF_C$ fixed, the prior $\Pi$ enters
only the objective in~\eqref{eq:cb}; changing $\Pi$ does
\emph{not} change feasibility.  When the restricted risk set is
finite-dimensional, closed, convex, and finite-valued,
geometrically, $\Pi$ is proportional to a supporting-hyperplane normal
of $\cR_{\cF_C}$ (equivalently, a vector of shadow prices), and
changing it can select a different supported point or face of the
accessible frontier $\partial_-\cR_{\cF_C}$, exactly the role it
plays on the unconstrained $\partial_-\cR$.

\smallskip\noindent
\textit{(2) $\Pi$ also entering $\cF_C$.}
If $\Pi$ also appears in the definition of $\cF_C$ (for example,
through a Bayesian credible-set construction that prescribes a set
estimator built from the posterior under $\Pi$), then changing
$\Pi$ changes both the objective and the feasible set.  This is no
longer the same constrained Bayes problem but a different one;
the relationship between solutions across priors is then governed
by joint sensitivity of the objective and the constraint, not by
hyperplane rotation alone.

\smallskip
In the rest of the paper we work in regime (1) wherever possible:
the feasibility constraints
$\cF_{\mathrm{AV}},\cF_{\mathrm{Cov}},\cF_{\mathrm{App}}$ are
prior-free, and changing $\Pi$ can change the selected supported
solution or face without altering $\cF_C$.
\end{remark}

The four criterion geometries of this paper correspond to four
choices of $(\cD_C,\cF_C,\cR_C)$.  Under Blackwell admissibility,
$\cD_B=\cD$ (point predictors), $\cF_B=\cD$ (no constraint),
$\cR_B(\theta,\delta)=R(\theta,\delta)$; under the full-support
conditions of Corollary~\ref{cor:noshame} or
Corollary~\ref{cor:noshame-compact}, every Bayes solution lies on
$\partial_-\cR$.  For anytime-valid inference,
$\cD_{\mathrm{AV}}=\cC_{\mathrm{AV}}$ (the $e$-process class),
$\cF_{\mathrm{AV}}=\cC_{\mathrm{AV}}$ itself with the stopped-value
constraint built in, and $\cR_{\mathrm{AV}}$ may be an
analyst-specified alternative-side loss, such as negative expected
log growth.  For a point null, admissibility is characterized by the
nonnegative martingale property (Theorem~\ref{thm:ramdas}).  For marginal coverage,
$\cD_{\mathrm{Cov}}$ is the declared comparison class $\cG$ of
prediction-set processes,
$\cF_C=\cF_{\mathrm{Cov}}(\alpha)\cap\cG$ under
Definition~\ref{def:cov-adm}, and $\cR_{\mathrm{Cov}}$ is expected
set length within that level-$(1-\alpha)$ feasible class.
For CApp boundary-feasibility, $\cD_{\mathrm{App}}$ is the space of
online strategies, $\cF_{\mathrm{App}}$ is the set of
CApp-feasible strategies (Definition~\ref{def:capp}), and
$\cR_{\mathrm{App}}$ is the limsup of $\bar R_n - \partial_-\cR$.
The prior $\Pi$ enters each problem as the objective weight, not
as a constraint.

When a constraint $\cF_C\subsetneq\cD_C$ is imposed, the
optimization in~\eqref{eq:cb} takes place over the restricted risk
set $\cR_{\cF_C}=\{r_C(\delta):\delta\in\cF_C\}\subseteq\cR_C$.
The lower boundary of $\cR_{\cF_C}$ need not coincide with
$\partial_-\cR$:
a constrained Bayes rule may fail to be Blackwell admissible because
$\partial_-\cR_{\cF_C}$ may contain points in the interior of $\cR$,
and a Blackwell admissible rule may be infeasible because
$r(\delta_\Pi)$ may lie outside $\cR_{\cF_C}$.  Bayes remains the primal
objective in every instance; the feasibility constraint determines
which portion of the risk frontier is accessible, and thereby which
priors can be realized as supporting hyperplanes of $\cR_{\cF_C}$.

The constraint sets $\cF_B$, $\cF_{\mathrm{AV}}$,
$\cF_{\mathrm{Cov}}(\alpha)\cap\cG$, and $\cF_{\mathrm{App}}$ live on
different coordinates.  The separation theorems establish that
their partial orders do not subsume one another.  A system may
nevertheless combine components satisfying several criteria
(Section~\ref{sec:compat-size}).

\subsection{Separation theorem}

\paragraph{Intuition for Theorem~\ref{thm:separation}.}
Blackwell, AV, and coverage admissibility act on the $\delta$, $E$,
and $\hat C$ coordinates, respectively.  The theorem uses systems
with both relevant coordinates active and proves a dominance
failure in one of them; the result is not obtained by leaving a
coordinate absent.

\begin{theorem}[Criterion separation on $\Sigma$]\label{thm:separation}
Let $\mathfrak{B}$, $\mathfrak{A}$, $\mathfrak{C}\subseteq\Sigma$
denote the classes of predictive systems
(Definition~\ref{def:predictive-system}) that are Blackwell
admissible, anytime-valid admissible, and coverage-admissible,
respectively, in the sense of Definition~\ref{def:meta-adm}.  For
each ordered pair $(\mathfrak{X},\mathfrak{Y})\subseteq
\{\mathfrak{B},\mathfrak{A},\mathfrak{C}\}^2$ with
$\mathfrak{X}\ne\mathfrak{Y}$, there exists a predictive system
$\Delta\in\Sigma$ with components active in both the
$\mathfrak{X}$- and $\mathfrak{Y}$-coordinates such that
$\Delta\in\mathfrak{X}$ and $\Delta\notin\mathfrak{Y}$.  Thus the
classes remain pairwise non-nested when both relevant coordinates
are required to be active.
\end{theorem}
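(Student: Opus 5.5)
We construct six explicit witnesses on a single Bernoulli experiment. The model is $Z_i=X_i\overset{\mathrm{iid}}{\sim}\mathrm{Bern}(\theta)$ on a compact submodel $\Theta=[\eta,1-\eta]$, with the point null $\cH_0=\{P_{\theta_0}\}$ for some fixed $\theta_0\in\Theta$ and a fixed nonconformity score.

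The plan assembles a small library of ``good'' and ``bad'' components for each of the three coordinates. Any system built from them is active in the relevant coordinates, because each component differs from the default inactive one on a set of positive probability.

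\emph{Predictor coordinate.}
\begin{itemize}
\item[] \emph{Good:} the Jeffreys Bayes predictive $\hat p_n^B=(S_n+\tfrac12)/(n+1)$. It is Blackwell admissible by Corollary~\ref{cor:noshame-compact}. The Jeffreys prior restricted to $\Theta$ has full support. Risk continuity holds because $\hat p_n^B$ takes finitely many values in $(0,1)$, so $R(\theta,\cdot)$ is a finite polynomial-log expression in $\theta$.
\item[] \emph{Bad:} the plug-in $S_n/n$, or the constant-zero predictor. Each is dominated by $\hat p_n^B$ via Theorem~\ref{thm:insuff}(iii), since both have infinite risk.
\end{itemize}

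\emph{Test coordinate.}
\begin{itemize}
\item[] \emph{Good:} the likelihood-ratio process $E_n=\prod_{i\le n} p_1(X_i)/p_0(X_i)$ for a fixed alternative $\theta_1\ne\theta_0$. It is a nonnegative $P_{\theta_0}$-martingale with $E_0=1$, hence AV-admissible by Theorem~\ref{thm:ramdas}.
\item[] \emph{Bad:} the stopped or shrunk process $E'_n=\tfrac12(1+E_n)$, or $E_n\wedge c$. It lies in $\cC_{\mathrm{AV}}$ but is a strict supermartingale, hence not admissible. Concretely, $E'$ is dominated by the martingale $E$ itself on the event where $E_n>1$; an explicit dominating martingale would be cleaner.
\end{itemize}

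\emph{Set coordinate.}
\begin{itemize}
\item[] \emph{Good:} the split-conformal rank $k_{n,\alpha}$, which is admissible within $\cG_s^{\mathrm{rank}}$ by Proposition~\ref{prop:conf-admissible}.
\item[] \emph{Bad:} the feasible but non-minimal rank choice $k_n=m_n+1$ (the widest set). It is dominated by $k_{n,\alpha}$ via pathwise nesting, with strict expected-length reduction at some $n$. This needs the score law to put the two order statistics apart with positive probability.
\end{itemize}

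The six systems then pair a good component in the $\mathfrak X$-coordinate with a bad component in the $\mathfrak Y$-coordinate. The third coordinate is set to anything, and $\sigma=\sigma^\varnothing$. For example, $(\mathfrak B,\mathfrak A)$ uses $(\hat p^B,\tfrac12(1+E),\hat C^{\varnothing},\sigma^\varnothing)$, and $(\mathfrak A,\mathfrak B)$ uses $(S_n/n,E,\cdot,\cdot)$. Each of the six cases reduces to citing the two component facts above, so the verification is bookkeeping.

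The main obstacle is the two ``bad'' witnesses in the AV and coverage coordinates.

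For AV, it is safest to use the inadmissible process $E'_n=\tfrac12+\tfrac12E_n$. I would exhibit a dominator directly: $E$ satisfies $E_t\ge E'_t$ exactly when $E_t\ge1$, so $E$ itself is not a dominator. Instead take $E''=E'$ except that the slack $\tfrac12$ is bet on the next observation. In other words, $E''_n=\tfrac12+\tfrac12E_n+\tfrac12(L_{n}-1)\mathbf 1\{\cdot\}$ built from a one-step fair bet, arranged so that $E''\ge E'$ pathwise. Failing that, I would invoke the ``only if'' half of Theorem~\ref{thm:ramdas} applied to a non-martingale in $\cC_{\mathrm{AV}}$.

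For coverage, the requirement is that the wide rank has strictly larger expected size for some $(n,P)$. Because $Y$ is binary, I would choose a score whose two values induce sets of sizes $1$ and $2$ with positive probability under some exchangeable $P$, ensuring strictness.
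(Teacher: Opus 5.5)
Your architecture is the paper's: a library of admissible and inadmissible components per coordinate, paired across the six ordered cases on one Bernoulli experiment. However, three of your library components are wrong as stated, and together they break four of the six cases.

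\textbf{The AV failure witness.} The process you commit to, $E'_n=\tfrac12(1+E_n)$, is an affine image of the $P_{\theta_0}$-martingale $E$ with $E'_0=1$. It is therefore itself a nonnegative $P_{\theta_0}$-martingale, and Theorem~\ref{thm:ramdas} makes it AV-\emph{admissible}, not a strict supermartingale. Your plan to build a dominator by ``betting the slack'' cannot succeed. If some $E''\in\cC_{\mathrm{AV}}$ had $E''_t\ge E'_t$ a.s., the stopping time $\tau\equiv t$ gives $\Ebb_{P_{\theta_0}}[E''_t]\le 1=\Ebb_{P_{\theta_0}}[E'_t]$, which forces $E''_t=E'_t$ a.s.\ for every $t$. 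So $\mathfrak B\not\subseteq\mathfrak A$ and $\mathfrak C\not\subseteq\mathfrak A$ fail as written. Your parenthetical alternative $E_n\wedge c$ with $c\ge 1$ does work: it is a supermartingale in $\cC_{\mathrm{AV}}$ that strictly loses mass when $E$ crosses $c$, so the ``only if'' half of Theorem~\ref{thm:ramdas} applies. The paper's $E^\lambda_n=\lambda^nE_n$, $\lambda\in(0,1)$, is simpler still, because $E$ dominates it explicitly.

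\textbf{The predictor witness.} On the compact submodel $\Theta=[\eta,1-\eta]$ you fixed, $(S_n+\tfrac12)/(n+1)$ is the Bayes rule for the \emph{unrestricted} $\mathrm{Beta}(\tfrac12,\tfrac12)$ prior on $(0,1)$. It is not the Bayes rule for the Jeffreys prior restricted to $\Theta$, so Corollary~\ref{cor:noshame-compact} does not certify it. Worse, once $1/(2n+2)<\eta$, raising the prediction at $S_n=0$ to $\eta$ strictly lowers $L(\theta,\cdot)$ for every $\theta\in\Theta$, since log loss in $p$ is strictly decreasing on $(0,\theta]$. This happens on an event of probability $(1-\theta)^n>0$, so your ``good'' predictor is strictly dominated on $\Theta$. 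You need the restricted posterior mean $\Ebb_{\Pi_J}[\theta\mid X_{1:n}]$, which takes values in $\Theta$, as the paper uses.

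\textbf{The coverage failure witness.} Rank $k_n=m_n+1$ selects the sentinel $(+\infty,1)$, so your wide set is $\hat C_n\equiv\cY$ at every $n$. That is exactly the default inactive coordinate $\hat C^\varnothing$, so your claim that every library component is active is false for it. The cases $\mathfrak B\not\subseteq\mathfrak C$ and $\mathfrak A\not\subseteq\mathfrak C$ then violate the activeness requirement, which is the whole content of the theorem. The paper instead enlarges the conformal rank by one at a single index $n_0$, and sets $k'_n=k_{n,\alpha}$ elsewhere. The calibration size is chosen so that $k_{n_0,\alpha}\le m_{n_0}-1$. This keeps the set nontrivial, and on the positive-probability event of exactly $k_{n_0,\alpha}$ calibration zeros it yields the strict length gap you left open.

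With these three substitutions your bookkeeping goes through. Leaving the remaining coordinates at their defaults is allowed by the statement, though the paper activates all four.
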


The table lists the six witnesses and their active membership and
failure coordinates.

\begin{center}
\small
\begin{tabular}{@{}lll@{}}
\toprule
Separation & Membership coordinate & Active failure coordinate\\
\midrule
$\mathfrak{B}\not\subseteq\mathfrak{A}$
  & Bayes predictor $\delta^B$
  & strict supermartingale $E^\lambda$\\
$\mathfrak{A}\not\subseteq\mathfrak{B}$
  & LR martingale $E^{\mathrm{LR}}$
  & constant-zero predictor $\delta\equiv 0$\\
$\mathfrak{B}\not\subseteq\mathfrak{C}$
  & Bayes predictor $\delta^B$
  & strict rank enlargement $\hat C^{\mathrm{wide}}$\\
$\mathfrak{C}\not\subseteq\mathfrak{B}$
  & conformal set $\hat C^{\mathrm{conf}}$
  & plug-in predictor $\delta^{\mathrm{pi}}$\\
$\mathfrak{A}\not\subseteq\mathfrak{C}$
  & LR martingale $E^{\mathrm{LR}}$
  & strict rank enlargement $\hat C^{\mathrm{wide}}$\\
$\mathfrak{C}\not\subseteq\mathfrak{A}$
  & conformal set $\hat C^{\mathrm{conf}}$
  & strict supermartingale $E^\lambda$\\
\bottomrule
\end{tabular}
\end{center}

\begin{proof}
Fix a compact parameter set $\Theta=[\eta,1-\eta]\subset(0,1)$
for some $\eta\in(0,\tfrac12)$ and the i.i.d.\ Bernoulli model
$X_i\overset{\mathrm{iid}}{\sim}\mathrm{Bern}(\theta)$ with
$\theta\in\Theta$, together with $\alpha\in(0,1)$ and
$\theta_0\in\Theta$.  For the coverage coordinate, use a constant
covariate and response $Y_i=X_i$, so the exchangeable pairs encode
the same Bernoulli experiment.  For the AV coordinate take the point null
$\cH_0=\{P_{\theta_0}\}$.  Let $\Pi_J$ denote the
$\mathrm{Beta}(\tfrac12,\tfrac12)$ (Jeffreys) prior
restricted and renormalized to $\Theta$, so $\Pi_J$ has full
topological support on $\Theta$.  Define:
\begin{itemize}[leftmargin=*,nosep]
\item $\delta_n^B=\Ebb_{\Pi_J}[\theta\mid X_1,\ldots,X_n]$: the
  Bayes posterior predictive under the Jeffreys prior restricted
  and renormalized to $\Theta$ (the unrestricted closed form
  $(S_n+\tfrac12)/(n+1)$ holds only on the full Bernoulli model
  $\Theta=(0,1)$; the truncated prior gives a different but still
  $\cF_n$-measurable posterior mean);
\item $\delta_n^{\mathrm{pi}}=S_n/n$: plug-in MLE;
\item $E_n^{\mathrm{LR}}=\prod_{t=1}^n
  (\delta_{t-1}^B/\theta_0)^{X_t}
  ((1-\delta_{t-1}^B)/(1-\theta_0))^{1-X_t}$: likelihood-ratio
  e-process testing $H_0:\theta=\theta_0$;
\item $\hat C_n^{\mathrm{conf}}=\hat C_n^{(k_{n,\alpha})}$: the
  split-conformal prediction set from
  Proposition~\ref{prop:conf-admissible}, using the fixed binary
  score $s(y)=y$ and randomized tie-breaking;
  choose an index $n_0$ and calibration size $m_{n_0}$ large enough
  that $k_{n_0,\alpha}\le m_{n_0}-1$;
\item $\hat C^{\mathrm{wide}}$: a strict rank enlargement in the
  same family $\cG_s^{\mathrm{rank}}$.  At that index set
  $k'_{n_0}=k_{n_0,\alpha}+1$ and
  $k'_n=k_{n,\alpha}$ otherwise.  Under every Bernoulli law with
  $\theta\in(0,1)$, the event that exactly $k_{n_0,\alpha}$ of the
  $m_{n_0}$ calibration responses are zero has positive probability.
  On this event the adjacent thresholds cross from score $0$ to
  score $1$, so the enlarged prediction set is strictly wider with
  positive randomized tie-breaking probability;
\item $\sigma^{\mathrm{act}}$: any fixed active strategy distinct
  from the default-inactive constant $a_\varnothing$, for instance
  the constant strategy $\sigma^{\mathrm{act}}_n\equiv a_1$ with
  $a_1\ne a_\varnothing$.  Defensive forecasting is not needed
  here and is not introduced until Section~\ref{sec:constructive}.
\end{itemize}
All four coordinates are active in every witness below.

\smallskip\noindent
\emph{(i) $\mathfrak{B}\not\subseteq\mathfrak{A}$.}  Take
$\Delta'_B=(\delta^B,E^\lambda,\hat C^{\mathrm{conf}},\sigma^{\mathrm{act}})$
with $E^\lambda_n:=\lambda^n E^{\mathrm{LR}}_n$ for any fixed
$\lambda\in(0,1)$.  Then $E^\lambda\ge 0$ and
$\Ebb[E^\lambda_{n+1}\mid\cF_n]=\lambda^{n+1}E^{\mathrm{LR}}_n=
\lambda E^\lambda_n<E^\lambda_n$ under every $P\in\cH_0$, so
$E^\lambda$ is a strict supermartingale: it lies in
$\cC_{\mathrm{AV}}$ but, not being a nonnegative martingale, is
inadmissible within $\cC_{\mathrm{AV}}$ by
Theorem~\ref{thm:ramdas} (it is strictly dominated by
  $E^{\mathrm{LR}}$).  Hence $\Delta'_B\notin\mathfrak{A}$.  The
predictor $\delta^B$ is Blackwell admissible by
Corollary~\ref{cor:noshame-compact} (compact~$\Theta$ with the
full-support prior $\Pi_J$ and continuous risk under log loss on
$\Theta$), so $\Delta'_B\in\mathfrak{B}$.  Both the predictor and
the $e$-process coordinates are active.

\smallskip\noindent
\emph{$\mathfrak{A}\not\subseteq\mathfrak{B}$.}  Take
$\Delta'_A=(\delta',E^{\mathrm{LR}},\hat C^{\mathrm{conf}},
\sigma^{\mathrm{act}})$ where $\delta'\equiv 0$ is the constant
zero predictor.  The predictor is feasible but
Blackwell-inadmissible (its log-loss risk is $+\infty$ at every
$\theta>0$, dominated by $\delta^B$); hence
$\Delta'_A\notin\mathfrak{B}$.  The e-process $E^{\mathrm{LR}}$ is
a nonnegative martingale under $H_0$, hence AV-admissible by
Theorem~\ref{thm:ramdas}, giving $\Delta'_A\in\mathfrak{A}$.  Both
predictor and $e$-process components are active (the constant
zero predictor is a definite choice, not the default-inactive
placeholder constant $a_\varnothing$ used to lift).

\smallskip\noindent
\emph{(ii) $\mathfrak{B}\not\subseteq\mathfrak{C}$.}  Modify
$\Delta'_B$ by replacing $\hat C^{\mathrm{conf}}$ with
$\hat C^{\mathrm{wide}}$.  The enlarged set remains in
$\cF_{\mathrm{Cov}}(\alpha)$ and is strictly dominated by
$\hat C^{\mathrm{conf}}$ within $\cG_s^{\mathrm{rank}}$, so the
modified system is not in $\mathfrak{C}$ by
Definition~\ref{def:cov-adm}.  Both the predictor and the set
coordinate are active within their respective comparison
classes.  The predictor remains $\delta^B$, so the system is in
$\mathfrak{B}$.

\smallskip\noindent
\emph{$\mathfrak{C}\not\subseteq\mathfrak{B}$.}  Retain
$\hat C^{\mathrm{conf}}$ (coverage-admissible by
Proposition~\ref{prop:conf-admissible}) and use the plug-in
predictor $\delta^{\mathrm{pi}}$, which is Blackwell-inadmissible
under log loss (Theorem~\ref{thm:insuff}).  The system
$(\delta^{\mathrm{pi}},E^{\mathrm{LR}},\hat C^{\mathrm{conf}},
\sigma^{\mathrm{act}})$ is therefore in
$\mathfrak{C}\setminus\mathfrak{B}$.

\smallskip\noindent
\emph{(iii) $\mathfrak{A}\not\subseteq\mathfrak{C}$.}  Take
$\Delta'_A$ above and replace its prediction-set component by the
strictly wider $\hat C^{\mathrm{wide}}$ from~(ii).  Then
$\Delta'_A\in\mathfrak{A}$ but its prediction-set component is not
coverage-admissible within $\cG_s^{\mathrm{rank}}$, so
$\Delta'_A\notin\mathfrak{C}$.

\smallskip\noindent
\emph{$\mathfrak{C}\not\subseteq\mathfrak{A}$.}  Modify
$(\delta^{\mathrm{pi}},E^{\mathrm{LR}},\hat C^{\mathrm{conf}},
\sigma^{\mathrm{act}})$ by replacing $E^{\mathrm{LR}}$ with the
strict-supermartingale process
$E^\lambda_n=\lambda^n E^{\mathrm{LR}}_n$ of~(i).
The prediction-set component remains coverage-admissible; the
e-process is no longer AV-admissible.  The system is in
$\mathfrak{C}\setminus\mathfrak{A}$.
\end{proof}

\paragraph{Interpretation.}
Within $\Sigma$, none of the three criterion orders subsumes the
other two.  A product or lexicographic order could rank all
coordinates, but that order would add a choice not contained in
the three criteria.

\subsection{Compatibility of the admissible classes}
\label{sec:compat-size}

The classes are non-nested, not disjoint.  The following systems
give three nonempty intersections.

\begin{enumerate}[label=\textup{(\roman*)},leftmargin=*,nosep]
\item A predictive system whose $e$-process is the likelihood ratio
   under the Bayes posterior predictive and whose prediction set is
   the split-conformal set built from the same predictor lies in
   $\mathfrak{A}\cap\mathfrak{C}$: it controls type-I error at every
   stopping time and is expected-length efficient within its
   rank-indexed family in $\cF_{\mathrm{Cov}}(\alpha)$
   (Proposition~\ref{prop:conf-admissible}).
\item A Bayes predictor under a full-support prior, wrapped by a
   split-conformal procedure, lies in $\mathfrak{B}\cap\mathfrak{C}$:
   admissibility of the predictor coordinate is supplied by
   Corollary~\ref{cor:noshame}; admissibility of the set coordinate
   is supplied by Proposition~\ref{prop:conf-admissible}.
\item On a compact Bernoulli submodel
   $\Theta=[\eta,1-\eta]\subset(0,1)$, the Bayes posterior
   predictive under a smooth full-support prior such as the
   Jeffreys prior restricted and renormalized to $\Theta$ lies
   in $\mathfrak{B}$ by
   Corollary~\ref{cor:noshame-compact}.  Standard log-loss
   redundancy bounds for regular one-dimensional Bernoulli
   mixtures \citep[\S9.7]{cesabianchi2006} give Ces\`{a}ro
   convergence of its time-averaged risk to $H(\theta)$, hence
   CApp boundary-feasibility, so
   the lifted system is also in $\mathfrak{D}$.  The KT formula
   $\hat p_n^{\mathrm{KT}}=(S_n+\tfrac12)/(n+1)$ provides the
   simplest explicit version of the same phenomenon on the
   full Bernoulli model.  Thus
   $\mathfrak{B}\cap\mathfrak{D}\neq\varnothing$; constructive
   and Ces\`{a}ro admissibility \emph{can} coincide.
\end{enumerate}

No cardinality or measure comparison among the four classes is
asserted.  The examples establish nonempty intersections, while
Theorems~\ref{thm:separation} and~\ref{thm:extended-separation}
establish pairwise non-nesting.  Since the criteria use different
coordinates and orders, terms such as ``larger'' or ``thinner'' are
not invariantly defined across the four classes.

\section{Constructive and Choice-Based Admissibility}%
\label{sec:constructive}

The connection between Blackwell approachability
\citep{blackwell1956} and no-regret learning
\citep{hannan1957,abernethy2011} invites a question: does steering
the time-averaged risk to the lower boundary suffice for
admissibility?  This section distinguishes two routes to
$\partial_-\cR$ and shows that the distinction generates a fourth
admissibility geometry.

\subsection{Two paths to the boundary}\label{sec:two-paths}

\begin{definition}[Constructive admissibility]\label{def:constructive}
A procedure $\delta$ is \emph{constructively admissible} if there
exists a prior $\Pi_n$ at every sample size $n$ such that
$\delta_n(X^n)=\delta_{\Pi_n}(X^n)$ and $\delta_n$ is Blackwell
admissible.  Thus every round has both an explicit Bayes witness and
a boundary certificate; under the full-support conditions of
Corollary~\ref{cor:noshame} or~\ref{cor:noshame-compact}, the latter
follows automatically from the former.
\end{definition}

This is a pointwise (per-round) boundary condition: each action is
itself boundary-certified and has an explicit supporting prior.

\begin{definition}[Ces\`{a}ro admissibility]\label{def:cesaro}
A procedure $\delta$ is \emph{Ces\`{a}ro admissible} if the
time-averaged risk
$\bar R_n(\theta,\delta)=n^{-1}\sum_{t=1}^n R(\theta,\delta_t)$
converges to the oracle envelope $\partial_-\cR(\theta)$ as
$n\to\infty$ (the per-$\theta$ envelope sense made precise in
Definition~\ref{def:capp}), without requiring
that each individual action $\delta_t$ be Bayes optimal at time~$t$.
\end{definition}

This is a Ces\`{a}ro boundary condition: only the time-average risk
approaches the envelope $\partial_-\cR(\theta)$, so individual rounds
may lie in the interior of the risk set.

Constructive admissibility demands a per-round witness (the prior
$\Pi_n$), while Ces\`{a}ro admissibility requires only that the
long-run average reaches the boundary.  The Blackwell
approachability theorem guarantees the existence of
Ces\`{a}ro-admissible strategies whenever the target set is
approachable; it does not, in general, produce constructively
admissible ones.

\subsection{Approachability and single-prior Bayes coherence}%
\label{sec:abh-revisited}

\citet{abernethy2011} show that Blackwell approachability and
no-regret learning are equivalent.  Approachability gives a
Ces\`{a}ro guarantee and does not require each action to be Bayes at
that round.  By contrast, posterior predictive means generated
from one fixed prior form a martingale under the corresponding
prior predictive law
(Proposition~\ref{prop:constructive-martingale-single}).  This
single-prior coherence condition is distinct from approachability.

The asymptotic calibration theorem of \citet{foster1998} instead
selects actions by a fixed-point argument (Kakutani's theorem, or
the continuous-selection refinements of \citet{hart2001}).  It
illustrates choice-based construction under a calibration-error
objective.

\subsection{Ces\`{a}ro does not imply pointwise}%
\label{sec:cesaro-gap}

\begin{remark}[Choice-based strategies illustrate Ces\`{a}ro guarantees]%
\label{rem:cesaro-not-pointwise}
Foster--Vohra calibration and defensive forecasting
\citep{foster1998,vovk2005defensive} use fixed-point selection
rather than posterior updating.  Their guarantee concerns
calibration error, not time-averaged log loss
(Remark~\ref{rem:df-vs-logloss}).
\end{remark}

\subsection{Martingale characterization under a single prior}%
\label{sec:constructive-martingale}

\begin{proposition}[Single-prior martingale characterization]%
\label{prop:constructive-martingale-single}
Let $\delta=(\delta_n)$ be Bayes under a single prior $\Pi$ at
every $n$, i.e.\ $\delta_n=\delta_\Pi$.  Then the posterior
predictive sequence $(m_n)=(\Ebb_\Pi[\theta\mid\cF_n])$ is a
martingale under the prior predictive law
$\tilde P=\int P_\theta\,\dx\Pi(\theta)$.
\end{proposition}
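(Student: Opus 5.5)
The plan is to reduce the statement to Proposition~\ref{prop:bayes-martingale}, or equivalently to rerun its tower-property argument directly. The hypothesis that $\delta_n=\delta_\Pi$ at every $n$ for one fixed prior $\Pi$ is exactly the single-prior coherence condition (C2) of Definition~\ref{def:three-coherence}. The object asserted to be a martingale, $m_n=\Ebb_\Pi[\theta\mid\cF_n]$, is the posterior predictive mean sequence of Definition~\ref{def:ppseq}. So the first step is to fix notation. We regard $(\theta,X_1,X_2,\ldots)$ as defined on a single probability space, with joint law $\Pi(\dx\theta)\,P_\theta(\dx x)$, whose $X$-marginal is $\tilde P$. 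Then $\Ebb_\Pi[\,\cdot\mid\cF_n]$ is a conditional expectation on that space with respect to $\cF_n=\sigma(X_1,\ldots,X_n)$.

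The second step is to verify the three martingale conditions under $\tilde P$.
\begin{itemize}
\item Adaptedness holds by construction, since $m_n$ is a version of a conditional expectation given $\cF_n$.
\item Integrability follows from $|m_n|\le\sup_{\theta\in\Theta}\|\theta\|<\infty$, because $\Theta$ is compact. In the Bernoulli case, $m_n\in[0,1]$.
\item For the martingale identity, we use $\cF_{n-1}\subseteq\cF_n$ and the tower property:
\[
\Ebb_{\tilde P}[m_n\mid\cF_{n-1}]=\Ebb\bigl[\Ebb[\theta\mid\cF_n]\mid\cF_{n-1}\bigr]=\Ebb[\theta\mid\cF_{n-1}]=m_{n-1}\quad\text{a.s.}
\]
\end{itemize}
This is a Doob-type closed martingale generated by the integrable variable $\theta$.

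The main point needing care is the measure bookkeeping rather than any computation. The conditional expectation defining $m_n$ is taken under the joint law of $(\theta,X)$. The martingale property is asserted under $\tilde P$ on path space. We must check that these agree: because $m_n$ and $m_{n-1}$ are functions of the data alone, conditional expectations of them given $\cF_{n-1}$ under the joint law coincide $\tilde P$-a.s. with those under the marginal $\tilde P$. The proof should stress that the claim is under $\tilde P$ and not under $P_\theta$ or under a plug-in law $\hat P$, consistent with Remark~\ref{rem:measure-relative}.

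Finally, we should note that the hypothesis that $\delta_n$ is Bayes under $\Pi$ enters only in identifying the predictive act with $m_n$. For log loss, the Bayes act is the posterior predictive probability, which equals $m_n$. For the general functional $m_n(h)$, the same argument applies with $\theta$ replaced by $\mu_h(\theta)$, which is bounded.
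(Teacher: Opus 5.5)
Your proposal is correct and follows the paper's route: the paper's proof is the one-line reduction to Proposition~\ref{prop:bayes-martingale}, which is the same tower-property argument under $\tilde P$ with integrability from compactness of $\Theta$. Your added points are accurate and only make explicit what that reduction leaves implicit: the joint-law versus $\tilde P$ bookkeeping, and the observation that the Bayes hypothesis serves only to identify the predictive act with $m_n$.
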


\begin{proof}
Immediate from Proposition~\ref{prop:bayes-martingale} applied to
$\tilde P$.
\end{proof}

\begin{remark}[Single prior vs.\ varying priors]\label{rem:single-vs-vary}
Only the single-prior result is used below; it follows directly
from Proposition~\ref{prop:bayes-martingale}.  If the prior $\Pi_n$
depends on sample size, being Bayes at each $n$ does \emph{not}
imply the existence of a single prior predictive law under which
the sequence is a martingale.
Projective compatibility of the data-marginal laws
$\tilde P_n=\int P_\theta^{(n)}\,\dx\Pi_n(\theta)$ is not sufficient:
it does not preserve the latent parameter whose posterior mean is
being compared.  Projective compatibility of the joint laws
$\Pi_n(\dx\theta)P_\theta^{(n)}(\dx x_{1:n})$ would suffice by the
tower property, but it also forces their $\theta$-marginals to
agree and thus restores a common latent prior.  The proposition is
therefore stated for a fixed prior.
\end{remark}

Thus the martingale characterization applies to single-prior
sequential Bayes coherence, whereas CApp requires only asymptotic
convergence to the oracle envelope.

\subsection{A fourth geometry: choice-based approachability admissibility}%
\label{sec:fourth-geometry}

\begin{definition}[CApp boundary-feasibility]\label{def:capp}
Fix a loss function $L$ and let
\[
  \bar R_n(\theta,\sigma)
  \;=\; n^{-1}\sum_{t=1}^n L\bigl(\theta,\sigma_t(X_{1:t-1})\bigr)
\]
denote the time-averaged risk of an online strategy $\sigma$.
The strategy $\sigma$ is \emph{CApp boundary-feasible} (or simply
\emph{CApp-feasible}) if
\[
  \mathrm{dist}\bigl(\bar R_n(\theta,\sigma),
       \partial_-\cR(\theta)\bigr)
  \;\longrightarrow\; 0
  \qquad\text{in $P_\theta$-probability as $n\to\infty$,
       for every $\theta\in\Theta$,}
\]
where $\partial_-\cR(\theta)$ denotes the \emph{oracle per-round
envelope} at $\theta$,
$\partial_-\cR(\theta)=\mathrm{env}(\theta):=\inf_{a\in\cA}L(\theta,a)$,
which equals the binary entropy $H(\theta)$ under Bernoulli log
loss.  We write $\mathfrak{D}$ for the class of
predictive systems whose strategy coordinate is CApp
boundary-feasible.  CApp boundary-feasibility is a coordinatewise
asymptotic statement, one $\theta$ at a time; it is not the
assertion that some finite-sample risk \emph{vector} lies on the
Pareto frontier $\partial_-\cR$ of Section~\ref{sec:geometry}, and
the two should not be conflated (cf.\ the envelope/boundary
distinction in Example~\ref{ex:bern-running}).
\end{definition}

\begin{remark}[Why feasibility, not full admissibility]%
\label{rem:capp-pins}
Definition~\ref{def:capp} specifies both the loss $L$, which is part
of the decision problem, and convergence in $P_\theta$-probability,
which accommodates stochastic strategies.
We stop at \emph{feasibility} (boundary-reachability), because the
separation theorems need no further dominance order among
CApp-feasible strategies.  Global mean matching alone does not
imply CApp boundary-feasibility or time-averaged log-loss
optimality (Remark~\ref{rem:df-vs-logloss}).
\end{remark}

For calibration-error objectives, defensive forecasting and
Foster--Vohra use choice-based fixed-point arguments
\citep{foster1998,vovk2005defensive,hart2001}.  For log loss, the
witness below is the Krichevsky--Trofimov (KT) forecaster.

\begin{lemma}[KT is CApp-feasible under log loss]\label{lem:df-boundary}
In the Bernoulli i.i.d.\ model under log loss, the Krichevsky--Trofimov
forecaster \citep{krichevsky1981}
$\hat p_n^{\mathrm{KT}}=(S_n+\tfrac12)/(n+1)$
(equivalently, the Bayes posterior predictive under the Jeffreys
prior $\mathrm{Beta}(\tfrac12,\tfrac12)$) satisfies
$\bar R_n(\theta,\hat p^{\mathrm{KT}})\to H(\theta)$ in
$P_\theta$-probability for every $\theta\in(0,1)$, where
$H(\theta)=-\theta\log\theta-(1-\theta)\log(1-\theta)$ is the
binary entropy.  Since $\partial_-\cR(\theta)=H(\theta)$ under log
loss, the KT forecaster is CApp-feasible.
\end{lemma}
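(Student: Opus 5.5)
The plan is to prove convergence in $L^1(P_\theta)$ of the per-round excess loss, which implies convergence in probability. By Definition~\ref{def:capp}, $\bar R_n(\theta,\hat p^{\mathrm{KT}})=n^{-1}\sum_{t=1}^n L(\theta,\hat p_{t-1}^{\mathrm{KT}})$, where $L(\theta,\cdot)$ is the expected log loss at $\theta$. For this loss the Bernoulli decomposition of Example~\ref{ex:bern-running} gives, pathwise,
\[
  L(\theta,p)-H(\theta)=\KL\bigl(\mathrm{Bern}(\theta)\,\|\,\mathrm{Bern}(p)\bigr)\ge 0 .
\]
So $\bar R_n-H(\theta)$ is a nonnegative random variable $D_n:=n^{-1}\sum_{t=1}^n \KL(\theta\|\hat p^{\mathrm{KT}}_{t-1})$. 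By Markov's inequality it suffices to show $\Ebb_\theta[D_n]\to 0$.

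For the expectation I would use the KT redundancy bound. By the chain rule, the sum $\sum_{t=1}^n \Ebb_\theta[\KL(\theta\|\hat p^{\mathrm{KT}}_{t-1})]$ equals the expected cumulative log-loss regret of the sequential KT predictor against the true i.i.d.\ law $P_\theta^{n}$:
\[
  \Ebb_\theta\Bigl[\log\frac{P_\theta(X_{1:n})}{Q_{\mathrm{KT}}(X_{1:n})}\Bigr]
  =\KL\bigl(P_\theta^{n}\,\|\,Q_{\mathrm{KT}}\bigr),
\]
where $Q_{\mathrm{KT}}$ is the Jeffreys mixture. The classical Krichevsky--Trofimov pointwise bound \citep{krichevsky1981} states that for every sequence $x_{1:n}$,
\[
  \log\frac{\sup_{\vartheta}P_\vartheta(x_{1:n})}{Q_{\mathrm{KT}}(x_{1:n})}\le \tfrac12\log n+\log 2
\]
(see also \citet[\S9.7]{cesabianchi2006}). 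Since $P_\theta(x_{1:n})\le\sup_\vartheta P_\vartheta(x_{1:n})$, taking $\Ebb_\theta$ gives $\Ebb_\theta[nD_n]\le \tfrac12\log n+\log 2$. Hence $\Ebb_\theta[D_n]=O((\log n)/n)\to 0$, and therefore $D_n\to 0$ in $P_\theta$-probability for every $\theta\in(0,1)$.

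The main obstacle is making sure the object in Definition~\ref{def:capp} is what the chain rule controls. There $L(\theta,\sigma_t)$ is evaluated at the random forecast, with the expectation taken over $X_t$ only. The identity $\Ebb_\theta[\log(1/\hat p_{t-1}(X_t))\mid\cF_{t-1}]=L(\theta,\hat p_{t-1})$ is exactly the tower-property step, so the chain-rule sum of conditional KL terms matches $nD_n$ in expectation.

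A second point to check is that the KT predictions stay in $(0,1)$, so every loss is finite. This holds because $\hat p^{\mathrm{KT}}_{t}\in[1/(2t+2),1-1/(2t+2)]$.

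The bound is only needed pointwise in $\theta$, so compactness of $\Theta$ is not required, consistent with the scope remark in Section~\ref{sec:primitives}. The final sentence of the statement is then just the identification $\partial_-\cR(\theta)=\inf_p L(\theta,p)=H(\theta)$, with the infimum attained at $p=\theta$ by Gibbs' inequality.
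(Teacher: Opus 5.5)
Your proof is correct, and it takes a different route from the paper's sketch. The paper argues through realized losses. The KT cumulative realized log loss is within $O(\log n)$ of that of the best constant in hindsight, whose average realized loss is the empirical entropy $H(S_n/n)$; this converges to $H(\theta)$ almost surely by the law of large numbers. A martingale-difference step then transfers the conclusion from realized losses to the conditional risks $L(\theta,\hat p^{\mathrm{KT}}_{t-1})$ that actually make up $\bar R_n$.

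You instead work with the conditional risks from the start, using three ingredients:
\begin{itemize}
\item nonnegativity of $\bar R_n-H(\theta)$ as an average of KL terms;
\item the chain-rule identity $\Ebb_\theta[n(\bar R_n-H(\theta))]=\KL(P_\theta^n\|Q_{\mathrm{KT}})$;
\item the KT pointwise bound, applied via $P_\theta\le\sup_\vartheta P_\vartheta$.
\end{itemize}

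Your route needs neither the law of large numbers nor the martingale step. The paper asserts that step without verifying it; it needs something like a conditional-variance bound of order $(\log t)^2$ on the increments $-(X_t-\theta)\log\bigl(\hat p_{t-1}/(1-\hat p_{t-1})\bigr)$, which holds because $\hat p_{t-1}\in[1/(2t),1-1/(2t)]$.

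Your route also gives more than the lemma asks. It yields an explicit $L^1$ rate $(\tfrac12\log n+\log 2)/n$. Because that bound does not depend on $\theta$, it also gives $\sup_{\theta\in(0,1)}\Ebb_\theta[\bar R_n-H(\theta)]\to 0$, which is uniform rather than pointwise in $\theta$. This makes rigorous the $O((\log n)/n)$ rate quoted in Example~\ref{ex:bern-running}.

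The paper's route, in turn, keeps the individual-sequence regret guarantee in the foreground. It also yields almost-sure convergence of the realized average log loss, the quantity an online learner actually observes, which fits the Ces\`{a}ro-steering viewpoint the paper emphasizes. For the conditional-risk quantity in Definition~\ref{def:capp}, however, your argument is the more complete one.
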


\begin{proof}[Proof sketch]
The KT forecaster has cumulative log-loss regret of order
$O(\log n)$ relative to the best constant predictor in hindsight
\citep[\S9.7]{cesabianchi2006}.  By the law of large numbers, the
empirical best constant in hindsight converges $P_\theta$-a.s.\ to
$\theta$, and its average log loss converges to $H(\theta)$.  The
cumulative martingale-difference term between the realized log
loss at step $t$ and the conditional expected log loss given
$\cF_{t-1}$ is $o_p(n)$, hence its Ces\`{a}ro average is
$o_p(1)$.  Combining these three pieces, the time-averaged
conditional log-loss risk $\bar R_n(\theta,\hat p^{\mathrm{KT}})$
converges in $P_\theta$-probability to $H(\theta)$.
\end{proof}

\begin{remark}[Defensive forecasting: calibration vs.\ log-loss
boundary]\label{rem:df-vs-logloss}
Defensive forecasting yields empirical calibration; in
particular, its global mean mismatch may vanish.  This does not
imply time-averaged log-loss optimality because averaging
predictions and averaging a strictly convex loss are different
operations.  The log-loss CApp witness used here is KT
(Lemma~\ref{lem:df-boundary}); Foster--Vohra and defensive
forecasting are mentioned only for calibration-error objectives.
\end{remark}

\begin{remark}[$\mathfrak{B}\cap\mathfrak{D}\ne\emptyset$ on a
compact Bernoulli submodel]\label{rem:kt-overlap}
On the compact Bernoulli submodel $\Theta=[\eta,1-\eta]$, the
Bayes posterior predictive under a smooth full-support prior
(such as the Jeffreys prior restricted and renormalized to
$\Theta$) is Blackwell admissible by
Corollary~\ref{cor:noshame-compact}.  Standard log-loss
redundancy bounds for regular one-dimensional Bernoulli mixtures
\citep[\S9.7]{cesabianchi2006} give Ces\`{a}ro convergence of its
time-averaged risk to $H(\theta)$, hence CApp boundary-feasibility
(cf.\ Lemma~\ref{lem:df-boundary}).  The KT forecaster
$\hat p_n^{\mathrm{KT}}=(S_n+\tfrac12)/(n+1)$ is the simplest
explicit version of the same phenomenon on the full Bernoulli
model.  Hence $\mathfrak{B}\cap\mathfrak{D}\ne\emptyset$,
which is consistent with the separation theorem below: the
non-nesting is a statement about \emph{neither} containing the
other, not about disjointness.  The separations in
Theorem~\ref{thm:extended-separation} require either a
misspecified prior (case~(i) below) or a CApp-feasible strategy
outside the Bayes class (case~(ii)).
\end{remark}

\paragraph{Intuition for Theorem~\ref{thm:extended-separation}.}
Two witnesses handle the pairs involving $\mathfrak{D}$.  A
dogmatic Bayes rule gives
$\mathfrak{B}\not\subseteq\mathfrak{D}$; pairing the KT strategy
with the plug-in predictor gives
$\mathfrak{D}\not\subseteq\mathfrak{B}$.  The remaining pairs use
the same active-coordinate construction.

\begin{theorem}[Extended separation on $\Sigma$]%
\label{thm:extended-separation}
Let $\mathfrak{B}$, $\mathfrak{A}$, $\mathfrak{C}$, $\mathfrak{D}$
denote the classes of predictive systems
$\Delta\in\Sigma$ (Definition~\ref{def:predictive-system}) that
satisfy the corresponding criterion property in
Definition~\ref{def:meta-adm}.  The four classes are pairwise
non-nested: for each ordered pair
$(\mathfrak{X},\mathfrak{Y})$ with $\mathfrak{X}\ne\mathfrak{Y}$,
there exists a predictive system $\Delta\in\Sigma$ with components
active in both the $\mathfrak{X}$- and
$\mathfrak{Y}$-coordinates such that $\Delta\in\mathfrak{X}$ and
$\Delta\notin\mathfrak{Y}$.
\end{theorem}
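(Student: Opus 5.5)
The six ordered pairs among $\{\mathfrak{B},\mathfrak{A},\mathfrak{C}\}$ are already settled by Theorem~\ref{thm:separation}. I will reuse those witnesses, with the strategy coordinate set to a fixed active choice, so the remaining work is the six pairs involving $\mathfrak{D}$. I keep the setting of the proof of Theorem~\ref{thm:separation}: the compact submodel $\Theta=[\eta,1-\eta]$, the restricted Jeffreys prior $\Pi_J$, point null $\{P_{\theta_0}\}$, $Y_i=X_i$, and the witnesses $\delta^B$, $\delta^{\mathrm{pi}}$, $E^{\mathrm{LR}}$, $E^\lambda$, $\hat C^{\mathrm{conf}}$, $\hat C^{\mathrm{wide}}$. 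I add two strategy-coordinate objects.

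\begin{itemize}
\item The \emph{CApp-feasible strategy} $\sigma^{B}_n=\delta^B_{n-1}$ is the Bayes posterior predictive under $\Pi_J$, used online. By the redundancy argument of Remark~\ref{rem:kt-overlap} and Lemma~\ref{lem:df-boundary}, $\bar R_n(\theta,\sigma^B)\to H(\theta)$ for every $\theta\in\Theta$. On the full model, $\sigma^{\mathrm{KT}}$ serves the same role.
\item The \emph{CApp-infeasible strategy} $\sigma^{\mathrm{dog}}_n\equiv a_1$ is a constant action with $a_1\in(0,1)\setminus\{\tfrac12\}$, say $a_1=\theta_1$ for some $\theta_1\in\Theta$. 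It is active because $a_1\ne a_\varnothing$. Its average loss equals $L(\theta,a_1)=H(\theta)+\KL(\mathrm{Bern}(\theta)\|\mathrm{Bern}(a_1))$ for every $n$. This exceeds $H(\theta)$ by a fixed positive amount at any $\theta\ne a_1$, so $\sigma^{\mathrm{dog}}$ is not CApp-feasible.
\end{itemize}

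The key point is that $\sigma^{\mathrm{dog}}$ is the Bayes rule under the point mass at $\theta_1$, a dogmatic or misspecified prior. This is the misspecified-prior witness promised in the roadmap.

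For $\mathfrak{B}\not\subseteq\mathfrak{D}$, I will take $(\delta^B,E^{\mathrm{LR}},\hat C^{\mathrm{conf}},\sigma^{\mathrm{dog}})$. The predictor coordinate is Blackwell admissible by Corollary~\ref{cor:noshame-compact}, and the strategy fails CApp. If one wants the same object to carry both roles, as Remark~\ref{rem:noshame-direction} suggests, I would also check that the constant rule $\delta\equiv\theta_1$ is Blackwell admissible. It has risk $H(\theta_1)$ at $\theta_1$, and any competitor achieving that must equal $\theta_1$ almost surely, since $P_{\theta_1}$ charges every sample point. So it is admissible by direct risk-vector geometry. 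But Definition~\ref{def:meta-adm} attaches $\mathfrak{B}$ to $\delta$ and $\mathfrak{D}$ to $\sigma$ separately, so the first witness already suffices.

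For $\mathfrak{D}\not\subseteq\mathfrak{B}$, I take $(\delta^{\mathrm{pi}},E^{\mathrm{LR}},\hat C^{\mathrm{conf}},\sigma^B)$, with $\delta^{\mathrm{pi}}$ inadmissible by Theorem~\ref{thm:insuff}.

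The remaining four pairs follow by swapping one coordinate from the earlier witnesses.
\begin{itemize}
\item $\mathfrak{A}\setminus\mathfrak{D}$: $(\delta^B,E^{\mathrm{LR}},\hat C^{\mathrm{conf}},\sigma^{\mathrm{dog}})$.
\item $\mathfrak{D}\setminus\mathfrak{A}$: $(\delta^B,E^\lambda,\hat C^{\mathrm{conf}},\sigma^B)$, using the strict-supermartingale argument of Theorem~\ref{thm:separation}(i) with Theorem~\ref{thm:ramdas}.
\item $\mathfrak{C}\setminus\mathfrak{D}$: $(\delta^B,E^{\mathrm{LR}},\hat C^{\mathrm{conf}},\sigma^{\mathrm{dog}})$, with $\hat C^{\mathrm{conf}}$ coverage-admissible by Proposition~\ref{prop:conf-admissible}.
\item $\mathfrak{D}\setminus\mathfrak{C}$: $(\delta^B,E^{\mathrm{LR}},\hat C^{\mathrm{wide}},\sigma^B)$.
\end{itemize}
In each case I verify activeness of all four coordinates as before. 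In particular, $\sigma^B$ differs from $a_\varnothing=\tfrac12$ with positive probability: $\sigma^B_1=\Ebb_{\Pi_J}[\theta]=\tfrac12$ by symmetry, but $\sigma^B_2\ne\tfrac12$ whenever $X_1=1$.

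The main obstacle is making CApp-feasibility of $\sigma^B$ under the \emph{truncated} Jeffreys prior rigorous. Lemma~\ref{lem:df-boundary} is stated for the untruncated KT formula on $(0,1)$. I would first argue directly that $\sigma^B_t\to\theta$ almost surely by posterior consistency under a full-support prior on compact $\Theta$. The posterior mean stays in $[\eta,1-\eta]$, and $L(\theta,\cdot)$ is continuous and bounded on that interval. Hence $L(\theta,\sigma^B_t)\to H(\theta)$ almost surely, and Cesàro averaging gives almost-sure, hence in-probability, convergence of $\bar R_n$.

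This pathwise route also avoids the martingale-difference step in Lemma~\ref{lem:df-boundary}. That step is only needed if $\bar R_n$ is read as realized rather than conditional loss. I will note that Definition~\ref{def:capp} uses $L(\theta,\sigma_t)$, which is already the conditional expected loss, so no such step is needed.
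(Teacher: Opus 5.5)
Your proposal is correct. Its skeleton matches the paper's: inherit the three-class pairs from Theorem~\ref{thm:separation}, then settle the six $\mathfrak{D}$-pairs by pairing a CApp-feasible or CApp-infeasible strategy with active coordinates that are in, or fail, the other criterion. The witnesses and the key feasibility argument differ in three places.

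\emph{The CApp-feasible strategy.} The paper uses the untruncated KT forecaster $\sigma^{\mathrm{KT}}_n=(S_{n-1}+\tfrac12)/n$ and cites Lemma~\ref{lem:df-boundary}. That lemma's sketch runs through $O(\log n)$ regret against the best constant in hindsight, plus a martingale-difference step. You instead use the truncated-Jeffreys predictive $\sigma^B_n=\delta^B_{n-1}$ and prove feasibility pathwise: posterior consistency at every $\theta\in\Theta$, then continuity and boundedness of $L(\theta,\cdot)$ on $[\eta,1-\eta]$, then Ces\`{a}ro averaging. Your argument is more elementary and gives almost-sure rather than in-probability convergence. You are also right that the martingale-difference step is superfluous, since Definition~\ref{def:capp} already averages the conditional loss $L(\theta,\sigma_t)$. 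The same pathwise argument applies verbatim to KT via the strong law, so it would also tighten the paper's sketch of Lemma~\ref{lem:df-boundary}. The paper's choice buys only the reuse of an already-stated lemma.

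\emph{The pair $\mathfrak{B}\not\subseteq\mathfrak{D}$.} The paper puts the same dogmatic rule $\delta_\Pi\equiv 0.3$ in both the predictor and strategy coordinates. This forces the direct admissibility argument via strict properness and mutual absolute continuity, because Corollary~\ref{cor:noshame-compact} does not cover a point-mass prior. In return it shows that a single Bayes rule can be Blackwell admissible yet fail CApp. Your primary decoupled witness $(\delta^B,E^{\mathrm{LR}},\hat C^{\mathrm{conf}},\sigma^{\mathrm{dog}})$ needs only Corollary~\ref{cor:noshame-compact} and a positive KL gap. It suffices for the theorem as stated, because the lifted classes are coordinatewise, but it says less about how the two criteria interact. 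Your optional same-object variant is exactly the paper's argument.

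\emph{The infeasible strategy for $\mathfrak{A}\setminus\mathfrak{D}$ and $\mathfrak{C}\setminus\mathfrak{D}$.} The paper uses the constant-zero strategy, which has $+\infty$ risk. Your constant $a_1\in(0,1)$ instead leaves a finite, persistent excess $\KL(\mathrm{Bern}(\theta)\|\mathrm{Bern}(a_1))$. That failure does not rely on extended-real risk.
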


\begin{proof}
The pairwise non-nesting of $\mathfrak{B}$, $\mathfrak{A}$,
$\mathfrak{C}$ within $\Sigma$ is established in
Theorem~\ref{thm:separation}.  It remains to verify the six
separations involving $\mathfrak{D}$.  Retain the point null and the
rank-indexed conformal family fixed in that theorem.

\smallskip\noindent
\emph{(i) $\mathfrak{B}\not\subseteq\mathfrak{D}$: misspecified-prior
witness.}  Work on the same compact Bernoulli model
$\Theta=[\eta,1-\eta]$ used in Theorem~\ref{thm:separation},
chosen so that $0.3, 0.7\in\Theta$, and let $\Pi=\delta_{0.3}$
be the point mass at $\theta=0.3$.  The Bayes rule under $\Pi$
at every $n$ is the constant predictor $\delta_\Pi\equiv 0.3$
(the posterior remains $\delta_{0.3}$ for all data; the Bayes
act minimizing $R(0.3,a)$ over $a\in[0,1]$ under log loss is
$a=0.3$).  This predictor is Blackwell admissible by a direct
argument (rather than via Corollary~\ref{cor:noshame-compact},
which requires full topological support).  Indeed, the
lower-boundary value at $\theta=0.3$ is $H(0.3)$, attained by
the constant action $a=0.3$.  Any rule $\delta'$ satisfying
$R(0.3,\delta')\le R(0.3,\delta_\Pi)=H(0.3)$ must, by strict
properness of log loss \citep{gneiting2007}, predict $a=0.3$
$P_{0.3}$-a.s.  Since
$P_\theta$ and $P_{\theta'}$ are mutually absolutely continuous
on $\{0,1\}^n$ for every $\theta,\theta'\in(0,1)$ (Bernoulli
product measures with parameters in $(0,1)$ have a common
support), equality $P_{0.3}$-a.s.\ implies equality
$P_\theta$-a.s.\ for every $\theta\in\Theta$.  Hence $\delta'$
agrees with $\delta_\Pi$ on a set of $P_\theta$-measure one for
each $\theta$ and cannot strictly improve anywhere on $\Theta$.
No rule dominates $\delta_\Pi$, so $\delta_\Pi$ is Blackwell
admissible.

Compute the per-round risk at the misspecified parameter
$\theta^*=0.7$:
\begin{align*}
  R(0.7,\delta_\Pi)
  &= -0.7\log(0.3) - 0.3\log(0.7)\\
  &\approx 0.8428 + 0.1070 = 0.9498\ \text{nats}.
\end{align*}
The Ces\`{a}ro average $\bar R_n(0.7,\delta_\Pi)\equiv 0.9498$ for
all $n$ (the predictor never updates).  The lower boundary at
$\theta=0.7$ is $H(0.7)=-0.7\log(0.7)-0.3\log(0.3)\approx 0.6109$.
Hence $\bar R_n(0.7,\delta_\Pi) - H(0.7)\approx 0.339$ nats per
round, a gap that never closes, so
$|\bar R_n(0.7,\delta_\Pi)-H(0.7)|\not\to 0$.
The associated predictive system
$\Delta=(\delta_\Pi,E^{\mathrm{LR}},\hat C^{\mathrm{conf}},\delta_\Pi)$
has active predictor, $e$-process, prediction-set, and strategy
components.  It satisfies $\Delta\in\mathfrak{B}$ (the predictor is
Blackwell admissible) yet $\Delta\notin\mathfrak{D}$ (the strategy
is not CApp-feasible).
The witness depends essentially on the dogmatic point mass, which
no amount of data corrects: a full-support prior would concentrate
at the truth and, by Remark~\ref{rem:kt-overlap}, land in
$\mathfrak{B}\cap\mathfrak{D}$.  (For the misspecified-model
analogue, where posteriors concentrate on the Kullback--Leibler
projection, see \citealt{kleijn2012}.)

\smallskip\noindent
\emph{(ii) $\mathfrak{D}\not\subseteq\mathfrak{B}$.}  Consider the
predictive system
\[
  \Delta \;=\;
  \bigl(\delta^{\mathrm{pi}},\,
        E^{\mathrm{LR}},\,
        \hat C^{\mathrm{conf}},\,
        \sigma^{\mathrm{KT}}\bigr),
\]
where $\sigma^{\mathrm{KT}}$ is the KT online strategy
$\sigma_n^{\mathrm{KT}}=(S_{n-1}+\tfrac12)/n$ (the Krichevsky--Trofimov
forecaster used in Lemma~\ref{lem:df-boundary}) and
$\delta^{\mathrm{pi}}_n=S_n/n$ is the plug-in MLE predictor.  The
strategy coordinate is CApp-feasible under log loss by
Lemma~\ref{lem:df-boundary}, so $\Delta\in\mathfrak{D}$ via
Definition~\ref{def:meta-adm}.  The predictor coordinate
$\delta^{\mathrm{pi}}$ is Blackwell-inadmissible under log loss by
Theorem~\ref{thm:insuff}, so $\Delta\notin\mathfrak{B}$.  Both the predictor and the
strategy coordinates are active: $\delta^{\mathrm{pi}}$ differs
from any constant predictor on a positive-measure event, and
$\sigma^{\mathrm{KT}}$ is not a constant strategy and differs from
the default on positive-probability events for $n\ge 2$.

\smallskip\noindent
\emph{(iii) $\mathfrak{A}\not\subseteq\mathfrak{D}$.}  Take the
explicit predictive system
$\Delta=(\delta^0,E^{\mathrm{LR}},\hat C^{\mathrm{conf}},\sigma^0)$
where $\delta^0\equiv 0$ and $\sigma^0\equiv 0$ are the constant
zero predictor and online strategy.  The e-process
$E^{\mathrm{LR}}$ is a nonnegative martingale under $\cH_0$, hence
AV-admissible (Theorem~\ref{thm:ramdas}), so
$\Delta\in\mathfrak{A}$.  The strategy $\sigma^0$ is not CApp
boundary-feasible under log loss: $R(\theta,0)=+\infty$ for every
$\theta\in(0,1)$, so $\bar R_n(\theta,\sigma^0)=+\infty\not\to
H(\theta)$.  Hence $\Delta\notin\mathfrak{D}$.

\smallskip\noindent
\emph{(iv) $\mathfrak{D}\not\subseteq\mathfrak{A}$.}  Take
$\Delta=(\delta^{\mathrm{pi}},E^\lambda,\hat C^{\mathrm{conf}},
\sigma^{\mathrm{KT}})$ with
$E^\lambda_n=\lambda^n E^{\mathrm{LR}}_n$, $\lambda\in(0,1)$.
The strategy $\sigma^{\mathrm{KT}}$ is CApp boundary-feasible by
Lemma~\ref{lem:df-boundary}, so $\Delta\in\mathfrak{D}$.  The
e-process $E^\lambda$ is a strict supermartingale in
$\cC_{\mathrm{AV}}$, hence inadmissible by Theorem~\ref{thm:ramdas}
(strictly dominated by $E^{\mathrm{LR}}$), so
$\Delta\notin\mathfrak{A}$.  (The constant unit process
$E\equiv 1$ is itself a nonnegative martingale and therefore
AV-admissible, which is why we use the strict supermartingale
$E^\lambda$ as the failure witness.)

\smallskip\noindent
\emph{(v) $\mathfrak{C}\not\subseteq\mathfrak{D}$.}  Take
$\Delta=(\delta^0,E^{\mathrm{LR}},\hat C^{\mathrm{conf}},\sigma^0)$
as in case~(iii).  The conformal set $\hat C^{\mathrm{conf}}$ is
coverage-admissible within $\cG_s^{\mathrm{rank}}$
(Proposition~\ref{prop:conf-admissible}), so
$\Delta\in\mathfrak{C}$; the strategy $\sigma^0\equiv 0$ has
$\bar R_n(\theta,\sigma^0)=+\infty$ under log loss, so
$\Delta\notin\mathfrak{D}$.

\smallskip\noindent
\emph{(vi) $\mathfrak{D}\not\subseteq\mathfrak{C}$.}  Take
$\Delta=(\delta^{\mathrm{pi}},E^{\mathrm{LR}},\hat C^{\mathrm{wide}},
\sigma^{\mathrm{KT}})$.  Then $\Delta\in\mathfrak{D}$ as in
case~(iv), while $\hat C^{\mathrm{wide}}$ is strictly dominated
within $\cG_s^{\mathrm{rank}}$ by $\hat C^{\mathrm{conf}}$ and is
therefore not coverage-admissible.  Both sets remain in
$\cF_{\mathrm{Cov}}(\alpha)$, and
$\Delta\notin\mathfrak{C}$.
\end{proof}

\subsection{Summary of the four criteria}%
\label{sec:four-interpretation}

Table~\ref{tab:taxonomy} summarizes the four criteria, their
witnesses, and their modes of optimality.

\begin{table}[ht]
\centering\small
\caption{Summary of the four criteria.}
\label{tab:taxonomy}
\begin{tabular}{@{}llll@{}}
\toprule
Geometry & Certificate & Boundary witness & Optimality mode\\
\midrule
$\mathfrak{B}$: Blackwell
  & supporting hyperplane & prior $\Pi$
  & pointwise (per-round)\\[3pt]
$\mathfrak{A}$: Anytime-valid
  & martingale (point null) & e-process
  & pathwise (all stopping times)\\[3pt]
$\mathfrak{C}$: Coverage
  & exchangeability rank & conformal set
  & rank-family expected length\\[3pt]
$\mathfrak{D}$: CApp
  & Ces\`{a}ro convergence & regret / redundancy
  & Ces\`{a}ro (time-averaged)\\
\bottomrule
\end{tabular}
\end{table}

The four-geometry diamond (Figure~\ref{fig:four-geometries})
illustrates the pairwise non-nesting.  Dashed double arrows connect
every pair, indicating non-nesting rather than inclusion, as established in
Theorems~\ref{thm:separation} and~\ref{thm:extended-separation}.

\begin{figure}[ht]
\centering
\resizebox{0.78\linewidth}{!}{%
\begin{tikzpicture}[>=Stealth, scale=1.0,
  geobox/.style={draw, thick, rounded corners=5pt,
    minimum width=4.2cm, minimum height=1.4cm, align=center,
    font=\footnotesize, fill=white},
  nnlabel/.style={fill=white, inner sep=1.5pt, font=\scriptsize,
    text=black!70}]
  \node[geobox] (B) at (0,3.6)
    {$\mathfrak{B}$: Blackwell\\[2pt]
     \scriptsize prior-witnessed, per-round};
  \node[geobox] (A) at (-4.2,0)
    {$\mathfrak{A}$: Anytime-valid\\[2pt]
     \scriptsize martingale-witnessed};
  \node[geobox] (C) at (4.2,0)
    {$\mathfrak{C}$: Coverage\\[2pt]
     \scriptsize exchangeability-witnessed};
  \node[geobox] (D) at (0,-3.6)
    {$\mathfrak{D}$: CApp\\[2pt]
     \scriptsize Ces\`{a}ro convergence via regret};

  \draw[<->, thick, black!55, dashed] (B.south west) -- (A.north east)
    node[midway, nnlabel] {non-nested};
  \draw[<->, thick, black!55, dashed] (B.south east) -- (C.north west)
    node[midway, nnlabel] {non-nested};
  \draw[<->, thick, black!55, dashed] (A.south east) -- (D.north west)
    node[midway, nnlabel] {non-nested};
  \draw[<->, thick, black!55, dashed] (C.south west) -- (D.north east)
    node[midway, nnlabel] {non-nested};
  \draw[<->, thick, black!55, dashed] (B.south) -- (D.north)
    node[midway, nnlabel] {non-nested};
  \draw[<->, thick, black!55, dashed] (A.east) -- (C.west)
    node[midway, nnlabel] {non-nested};
\end{tikzpicture}%
}
\caption{Four admissibility geometries in diamond configuration.
  Each node represents an admissible class; dashed arrows indicate
  pairwise non-nesting
  (Theorems~\ref{thm:separation}
  and~\ref{thm:extended-separation}).
  Blackwell admissibility and CApp boundary-feasibility both involve
  risk but differ in witness type (prior support vs.\ Ces\`{a}ro
  convergence); anytime-valid
  and coverage admissibility operate on different procedure spaces
  entirely.}
\label{fig:four-geometries}
\end{figure}

\section{Bernoulli Laboratory}\label{sec:bernoulli}

The Bernoulli model permits closed-form examples for all four
criteria.  This section collects the four procedures used in the
separation results.

Table~\ref{tab:bernoulli} records the four procedures and four
criterion classes.  N/A indicates that the criterion is defined
for a categorically different class of procedures; the assessment is
inapplicable rather than negative.

\begin{table}[ht]
\centering
\footnotesize
\caption{Bernoulli laboratory: four procedures and four
  criterion classes.  Column headers: \emph{Martingale property}
  under the appropriate measure (Definition~\ref{def:three-coherence});
  \emph{Blackwell admissibility}; \emph{AV-admissibility} (in the
  sense of Theorem~\ref{thm:ramdas}); \emph{Coverage admissibility}
  (at level $1-\alpha$ within the declared rank family,
  Definition~\ref{def:cov-adm});
  \emph{CApp boundary-feasibility} (Definition~\ref{def:capp}).
  \checkmark{}~$=$~satisfies; $\times$~$=$~fails;
  N/A~$=$~criterion not applicable to this procedure type.
  The unclipped plug-in MLE assigns zero probability to realizable
  events whenever $S_n\in\{0,n\}$, giving infinite log loss; its
  Ces\`{a}ro-average risk does not converge to $H(\theta)$ in the
  extended-real sense, so it is not CApp boundary-feasible under
  log loss.  $^{\dagger}$Fixed clipping by $\epsilon>0$ removes the
  boundary blowup but destroys the plug-in martingale property and
  is not CApp-feasible on the full model $\theta\in(0,1)$; it is
  CApp-feasible only after restricting to
  $\theta\in[\epsilon,1-\epsilon]$.  We include the clipped row only
  as a numerical convention (Section~\ref{sec:numerical}).}
\label{tab:bernoulli}
\begin{tabular}{@{}lccccc@{}}
\toprule
Procedure
  & \begin{tabular}[c]{@{}c@{}}Martingale\\property\end{tabular}
  & \begin{tabular}[c]{@{}c@{}}Blackwell\\admissible\end{tabular}
  & \begin{tabular}[c]{@{}c@{}}AV-\\admissible\end{tabular}
  & \begin{tabular}[c]{@{}c@{}}Coverage\\admissible\end{tabular}
  & \begin{tabular}[c]{@{}c@{}}CApp\\feasible\end{tabular}\\
\midrule
P1: Bayes shrinkage / KT-type predictor          & \checkmark & \checkmark & N/A & N/A & \checkmark\\[4pt]
P2: Plug-in MLE $S_n/n$ (unclipped)              & \checkmark & $\times$   & N/A & N/A & $\times$\\[2pt]
P2$^{\epsilon}$: Clipped plug-in (numerical convention) & $\times^{\dagger}$ & not used & N/A & N/A & $\times^{\dagger}$\\[4pt]
P3: LR e-process                               & \checkmark & N/A        & \checkmark & N/A & N/A\\[4pt]
P4: Conformal prediction set                   & N/A        & N/A        & N/A   & \checkmark & N/A\\
\bottomrule
\end{tabular}

\smallskip
\footnotesize\emph{Note.} On the full Bernoulli model, the Jeffreys
Bayes predictive is the KT closed form $(S_n+\tfrac12)/(n+1)$,
whose CApp boundary-feasibility is established by
Lemma~\ref{lem:df-boundary}.  On compact submodels
$\Theta=[\eta,1-\eta]\subset(0,1)$, the Blackwell-admissible
witness is the restricted full-support Bayes predictive
$\delta^B_n=\Ebb_{\Pi_J}[\theta\mid X_{1:n}]$ under the
renormalized Jeffreys prior, certified by
Corollary~\ref{cor:noshame-compact}; this is generally not the KT
closed form.
\end{table}

\begin{corollary}[Martingale property is not criterion-determining]%
\label{cor:martingale-insufficient}
P1, P2, P3 all satisfy the martingale property and belong to
different admissibility classes.  The martingale property does not
determine membership in $\mathfrak{B}$, $\mathfrak{A}$, or
$\mathfrak{C}$.
\end{corollary}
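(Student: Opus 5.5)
The corollary combines facts already established, so the plan is to exhibit, for each of P1, P2, P3, the measure under which it is a martingale, and then read off their class memberships from earlier results. Because membership in $\mathfrak{B}$, $\mathfrak{A}$, $\mathfrak{C}$ is a property of a lifted predictive system, the comparison is made through the corresponding coordinates of $\Sigma$ (Definition~\ref{def:meta-adm}).

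\textbf{Step 1: martingale properties.}
\begin{itemize}[leftmargin=*,nosep]
\item For P1, Proposition~\ref{prop:bayes-martingale} shows that the Bayes posterior predictive mean $m_n=\Ebb_{\Pi}[\theta\mid\cF_n]$ is a martingale under the prior predictive law $\tilde P$. This holds both for the Jeffreys/KT form on the full model and for the restricted Jeffreys prior $\Pi_J$ on $[\eta,1-\eta]$.
\item For P2, Theorem~\ref{thm:insuff}(ii) gives the self-predictive martingale property of $S_n/n$ under $\hat P$.
\item For P3, I will check directly that $E^{\mathrm{LR}}$ is a nonnegative $P_{\theta_0}$-martingale. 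Since $\delta^B_{t-1}$ is $\cF_{t-1}$-measurable, the conditional expectation of the new factor is
\[
\theta_0\cdot\frac{\delta^B_{t-1}}{\theta_0}+(1-\theta_0)\cdot\frac{1-\delta^B_{t-1}}{1-\theta_0}=1 .
\]
\end{itemize}
I will stress that the three measures $\tilde P$, $\hat P$, $P_{\theta_0}$ are distinct, in line with Remark~\ref{rem:measure-relative}. This is exactly why the column is labelled ``under the appropriate measure.''

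\textbf{Step 2: class memberships.}
\begin{itemize}[leftmargin=*,nosep]
\item P1 (restricted Jeffreys Bayes, compact submodel) is Blackwell admissible by Corollary~\ref{cor:noshame-compact}, so its lift is in $\mathfrak{B}$.
\item P2 is Blackwell-inadmissible by Theorem~\ref{thm:insuff}(iii), so its lift is not in $\mathfrak{B}$.
\item P3 is AV-admissible by Theorem~\ref{thm:ramdas}, so its lift is in $\mathfrak{A}$.
\end{itemize}
The pair P1, P2 already refutes ``martingale $\Rightarrow\mathfrak{B}$,'' since both are martingales under their respective laws, yet one lies in $\mathfrak{B}$ and the other does not. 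For $\mathfrak{A}$ and $\mathfrak{C}$, I will invoke the witnesses of Theorem~\ref{thm:separation}:
\begin{itemize}[leftmargin=*,nosep]
\item The system $(\delta^{\mathrm{pi}},E^{\mathrm{LR}},\hat C^{\mathrm{wide}},\sigma^{\mathrm{act}})$ has martingale predictor and e-process coordinates but lies outside $\mathfrak{B}$ and outside $\mathfrak{C}$.
\item The system $(\delta^B,E^\lambda,\hat C^{\mathrm{conf}},\sigma^{\mathrm{act}})$ has a martingale predictor yet is outside $\mathfrak{A}$.
\end{itemize}
So fixing the martingale status of a procedure does not fix its class.

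\textbf{Main obstacle.} The delicate point is stating precisely what ``does not determine'' means, given that the martingale property is measure-relative and, for a point null, \emph{does} characterize AV-admissibility of the $E$-coordinate. I will therefore phrase the conclusion as follows: there is no single predicate ``is a martingale under some law'' on procedures whose truth value fixes membership in all three classes. The P1/P2 contrast for $\mathfrak{B}$ and the non-martingale conformal set P4 lying in $\mathfrak{C}$ (Proposition~\ref{prop:conf-admissible}) together exhibit this. The characterization in Theorem~\ref{thm:ramdas} is criterion-specific, tied to $P_0$ and the $E$-coordinate, rather than a general determinant.
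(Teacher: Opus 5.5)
Your proposal is correct and matches the paper's implicit argument, which is a direct read-off of the P1--P3 rows of Table~\ref{tab:bernoulli}: each is a martingale under its own measure ($\tilde P$, $\hat P$, $P_{\theta_0}$ via Proposition~\ref{prop:bayes-martingale}, Theorem~\ref{thm:insuff}(ii), and the one-line check for $E^{\mathrm{LR}}$), while P1~$\in\mathfrak{B}$, P2~$\notin\mathfrak{B}$, and P3~$\in\mathfrak{A}$. Your extra system-level witnesses from Theorem~\ref{thm:separation} and the P4 contrast are harmless supplements; note only that the table records P4's martingale status as N/A rather than as a failure, and that $(\delta^{\mathrm{pi}},E^{\mathrm{LR}},\hat C^{\mathrm{wide}},\sigma^{\mathrm{act}})$ is a valid recombination of that theorem's witnesses rather than literally one of them.
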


\begin{corollary}[No universal certificate]\label{cor:no-universal}
No one of the four admissibility certificates
(Blackwell supporting-hyperplane, anytime-valid martingale,
fixed-level coverage frontier in expected length, Ces\`{a}ro
steering) implies all the others.  A modular predictive system in $\Sigma$ may satisfy
several of the four criteria componentwise (examples are recorded
in Section~\ref{sec:compat-size}), but joint admissibility must be
specified one coordinate at a time, and cannot be obtained from a
single one of the four orders.
\end{corollary}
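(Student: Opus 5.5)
The statement to prove is Corollary~\ref{cor:no-universal}. I read "implies all the others" at the level of the lifted classes in $\Sigma$. Certificate $\mathfrak{X}$ implying certificate $\mathfrak{Y}$ then means $\mathfrak{X}\subseteq\mathfrak{Y}$. So the first claim follows if, for each $\mathfrak{X}\in\{\mathfrak{B},\mathfrak{A},\mathfrak{C},\mathfrak{D}\}$, at least one $\mathfrak{Y}\ne\mathfrak{X}$ has $\mathfrak{X}\not\subseteq\mathfrak{Y}$. Theorem~\ref{thm:extended-separation}, together with Theorem~\ref{thm:separation} for the three pairs among $\mathfrak{B},\mathfrak{A},\mathfrak{C}$, gives something stronger: all twelve ordered non-inclusions, each witnessed by a system that is active in both relevant coordinates. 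The plan is therefore to fix $\mathfrak{X}$ and read off the three witnesses $\Delta_{\mathfrak{X},\mathfrak{Y}}\in\mathfrak{X}\setminus\mathfrak{Y}$ from those proofs. For $\mathfrak{X}=\mathfrak{B}$, for instance, the witnesses are $\Delta'_B$ against $\mathfrak{A}$, its $\hat C^{\mathrm{wide}}$ modification against $\mathfrak{C}$, and the dogmatic $\delta_{0.3}$ system against $\mathfrak{D}$. Each single witness already shows that membership in $\mathfrak{X}$ fails to imply membership in the corresponding $\mathfrak{Y}$, so $\mathfrak{X}$ does not imply all the others.

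For the second assertion I would argue in two steps. First, the examples of Section~\ref{sec:compat-size} show that $\mathfrak{A}\cap\mathfrak{C}$, $\mathfrak{B}\cap\mathfrak{C}$ and $\mathfrak{B}\cap\mathfrak{D}$ are nonempty, which gives the "may satisfy several componentwise" clause. Second, I make precise the claim that joint admissibility cannot come from a single one of the four orders. For each order $\preceq_{\mathfrak{X}}$, pulled back to $\Sigma$ through $\operatorname{pr}_{\mathfrak{X}}$, I show that the maximal elements of $\preceq_{\mathfrak{X}}$ include systems failing some other criterion. This is again exactly $\mathfrak{X}\not\subseteq\mathfrak{Y}$. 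The pulled-back order depends only on coordinate $\mathfrak{X}$, so it cannot distinguish systems that agree there. The witness pairs make this concrete. For example, $\Delta'_A$ with $\hat C^{\mathrm{conf}}$ and $\Delta'_A$ with $\hat C^{\mathrm{wide}}$ are $\preceq_{\mathrm{AV}}$-indistinguishable, yet only the first lies in $\mathfrak{C}$. Since $\preceq_{\mathrm{AV}}$ is blind to the set coordinate, no choice of maximal elements under it can certify coverage admissibility.

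I expect the main obstacle to be interpretive rather than technical. "Certificate implies" has to be tied to class inclusion, and not to a claim about the certificates as mathematical objects such as priors, martingales, ranks and regret bounds. I would state this reading explicitly at the start. Once it is fixed, everything reduces to bookkeeping over the twelve witnesses already verified, with no new computation needed.
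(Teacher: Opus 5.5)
Your proposal is correct and matches the paper, which gives no separate proof and intends the corollary as an immediate consequence of the twelve ordered non-inclusions in Theorems~\ref{thm:separation} and~\ref{thm:extended-separation}, together with the nonempty intersections recorded in Section~\ref{sec:compat-size}. One small fix: $\mathfrak{D}$ is defined as a feasibility class, not by an order with maximal elements, so for $\mathfrak{X}=\mathfrak{D}$ your phrase ``maximal elements of $\preceq_{\mathfrak{X}}$'' should simply read ``members of $\mathfrak{D}$''; the argument is otherwise unchanged.
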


\subsection{Extended procedures}\label{sec:extended-procedures}

The extended proof of Theorem~\ref{thm:extended-separation} uses
two additional ingredients: the Krichevsky--Trofimov forecaster
$\hat p_n^{\mathrm{KT}}=(S_n+\tfrac12)/(n+1)$ as a CApp
boundary-feasible strategy under log loss
(Lemma~\ref{lem:df-boundary}), and a misspecified-prior Bayes rule
$\delta_\Pi\equiv 0.3$ on the compact Bernoulli model
$\Theta=[\eta,1-\eta]\ni 0.3, 0.7$ as a
Blackwell-admissible predictor whose Ces\`{a}ro-average risk at the
true parameter never reaches the entropy floor.  Foster--Vohra
calibration \citep{foster1998} is discussed only as an
interpretive example under a calibration-error objective, not as a
log-loss CApp witness (Remark~\ref{rem:df-vs-logloss}).

\paragraph{Beyond Bernoulli.}
The Bernoulli laboratory is deliberately minimal.  Analogous
witnesses may be available in Gaussian and nonparametric models,
but their admissibility and complete-class assumptions require
separate verification.

\section{Numerical Illustrations}\label{sec:numerical}

The following three Monte~Carlo experiments illustrate the theorem
in finite samples; all use $B=10{,}000$ replications.

\subsection{Bayes vs.\ plug-in under log loss}

We draw $X_1,\ldots,X_n\overset{\mathrm{iid}}{\sim}\mathrm{Bern}(0.3)$
and compare the next-step log loss
$L(\theta,p)=-\theta\log p-(1-\theta)\log(1-p)$ for the Bayes
predictive $\hat p_n^B=(S_n+\tfrac12)/(n+1)$ under
$\mathrm{Beta}(\tfrac12,\tfrac12)$ and the plug-in MLE
$\hat p_n^{\mathrm{pi}}=S_n/n$.  Theorem~\ref{thm:insuff}(iii)
states that the unclipped MLE has \emph{infinite extended-real
risk for every finite $n$}, because
$P_\theta(S_n\in\{0,n\})=(1-\theta)^n+\theta^n$ is strictly
positive for every $\theta\in(0,1)$.  The simulation therefore
reports two distinct quantities (Table~\ref{tab:demo1}): a
numerically clipped MLE risk
(bounding the MLE away from the simplex boundary by
$\epsilon=10^{-6}$), and the \emph{theoretical} unclipped risk,
which is $+\infty$ for every $n$.  The clipped column is a
\emph{numerical convention} only, included for visual comparison
with the Bayes column.

\begin{table}[H]
\centering\small
\caption{Average log loss in nats ($\theta=0.3$, $B=10{,}000$).
The unclipped MLE's log loss is $+\infty$ on every realization where
$S_n\in\{0,n\}$, an event with strictly positive probability
$(1-\theta)^n+\theta^n>0$ for every finite $n$ and every
$\theta\in(0,1)$; the theoretical unclipped MLE risk is therefore
$+\infty$ uniformly in $n$.  The clipped column applies the
convention $\hat p_n^{\mathrm{pi}}\mapsto
\max(\epsilon,\min(1-\epsilon,\hat p_n^{\mathrm{pi}}))$ with
$\epsilon=10^{-6}$ before computing log loss; this is a numerical
convention only and should not be interpreted as the true risk.
The final column reports the exact theoretical boundary
probability
$P_\theta(S_n\in\{0,n\})=0.7^n+0.3^n$.}
\label{tab:demo1}
\begin{tabular}{@{}rcccc@{}}
\toprule
$n$ & Bayes risk & MLE clipped ($\epsilon=10^{-6}$) & MLE unclipped (theoretical) & $P_\theta\{S_n\in\{0,n\}\}$\\
\midrule
  5 & 0.694 & 1.286 & $+\infty$ & $1.70\times 10^{-1}$\\
 10 & 0.658 & 0.757 & $+\infty$ & $2.83\times 10^{-2}$\\
 25 & 0.630 & 0.633 & $+\infty$ & $1.34\times 10^{-4}$\\
 50 & 0.621 & 0.621 & $+\infty$ & $1.80\times 10^{-8}$\\
100 & 0.616 & 0.616 & $+\infty$ & $3.23\times 10^{-16}$\\
\bottomrule
\end{tabular}
\end{table}

The boundary probability shrinks rapidly with $n$ but is never
zero: $(0.7)^{100}+(0.3)^{100}\approx 3.23\times 10^{-16}$ is
positive though far below any Monte Carlo resolution.  The
clipped MLE numerically converges to the Bayes risk because the
boundary event becomes unobservable in samples of size $B$, but
the \emph{theoretical} risk of the unclipped MLE remains $+\infty$
for every $n$.  The Bayes rule, by contrast, has finite risk
uniformly across all realizations and is never dominated for any
$n\ge 1$, consistent with Theorem~\ref{thm:insuff} and the
risk-set geometry of Section~\ref{sec:geometry}.

\subsection{Anytime validity: e-process vs.\ naive peeking}

We draw $X_1,\ldots,X_{200}\overset{\mathrm{iid}}{\sim}\mathrm{Bern}(0.5)$
under $H_0:\theta=0.5$ and compare two sequential testing strategies
at nominal level $\alpha=0.05$.  The \emph{e-process} accumulates a
running likelihood ratio using the Bayes predictive as the
alternative plug-in and rejects whenever $E_t\ge 1/\alpha$; it is
an element of $\cC_{\mathrm{AV}}$ and controls type-I error at
every stopping time (Theorem~\ref{thm:ramdas}).  The \emph{naive
strategy} performs a $z$-test at each of five pre-specified sample
sizes $n\in\{10,20,50,100,200\}$ and rejects if any test exceeds
$z_{0.025}=1.96$.  Table~\ref{tab:demo2} records the rejection
rates.  The e-process remains below $\alpha$ while naive peeking
inflates the type-I error to $0.165$, showing that
$\cC_{\mathrm{AV}}$-feasibility is a binding constraint that the
unrestricted testing class violates.

\begin{table}[H]
\centering\small
\caption{Type-I error under $H_0:\theta=0.5$ ($B=10{,}000$,
  $\alpha=0.05$).}
\label{tab:demo2}
\begin{tabular}{@{}lc@{}}
\toprule
Strategy & Rejection rate\\
\midrule
E-process (anytime-valid) & 0.031\\
Naive peeking ($5$ looks) & 0.165\\
\midrule
Nominal $\alpha$          & 0.050\\
\bottomrule
\end{tabular}
\end{table}

\subsection{Conformal coverage under covariate shift}

We set $Y\mid X=x\sim N(0,(1+x)^2)$ and construct split-conformal
prediction intervals using the naive score $s(x,y)=|y|$ with
$n_{\mathrm{cal}}=500$ calibration points and
$n_{\mathrm{test}}=2{,}000$ test points at level $1-\alpha=0.90$.
Table~\ref{tab:demo3} compares three scenarios: (A)~calibration and
test both under $X\sim\mathrm{Uniform}[0,1]$; (B)~calibration
under $\mathrm{Uniform}[0,1]$, test under
$X\sim\mathrm{Beta}(2,5)$; (C)~both under $\mathrm{Beta}(2,5)$.
When calibration and test distributions match, marginal coverage
holds at the nominal level.  Under covariate shift (Scenario~B),
the $\mathrm{Uniform}$-calibrated quantile is too wide for the
$\mathrm{Beta}(2,5)$ test population (which concentrates $X$ near
zero, where the conditional variance is smaller), inflating coverage
to $0.946$.  Re-calibrating under the test distribution
(Scenario~C) restores nominal coverage and yields a tighter interval.
The calibration quantile itself shifts from $2.51$ to $2.13$,
illustrating that marginal coverage validity is a property of the
exchangeable joint distribution, not a universal guarantee across
arbitrary design points, consistent with
Theorem~\ref{thm:conformal-impossibility} and the scope of
the marginal-coverage guarantee in Definition~\ref{def:coverage}.  Scenario~B
breaks exchangeability, so the conformal guarantee no longer
applies; the observed overcoverage reflects distributional
mismatch, not a violation or strengthening of the theorem.

\begin{table}[H]
\centering\small
\caption{Split-conformal coverage ($1-\alpha=0.90$,
  $n_{\mathrm{cal}}=500$, $n_{\mathrm{test}}=2{,}000$).}
\label{tab:demo3}
\begin{tabular}{@{}llccc@{}}
\toprule
& Calibration $\to$ Test & Quantile & Coverage & Half-width\\
\midrule
A & $\mathrm{Unif}\to\mathrm{Unif}$ & 2.51 & 0.900 & 2.51\\
B & $\mathrm{Unif}\to\mathrm{Beta}(2,5)$ & 2.51 & 0.946 & 2.51\\
C & $\mathrm{Beta}(2,5)\to\mathrm{Beta}(2,5)$ & 2.13 & 0.900 & 2.13\\
\bottomrule
\end{tabular}
\end{table}

\section{Motivations and Interpretations}\label{sec:applications}

This section relates the criterion-relative framework to
forecasting, sequential testing, and conformal wrappers; it
introduces no new theorems.

\paragraph{How to use this paper.}
To analyze a modular predictive system, specify
$(\cD_C,\cF_C,\cR_C)$ separately for each component and attach each
guarantee only to the coordinate for which it was proved
(Definition~\ref{def:cb} and
Figure~\ref{fig:applications-flowchart}).  A system may satisfy
several criteria, but none follows automatically from another.

\subsection{Probabilistic forecasting and calibration}%
\label{sec:app-forecasting}

Empirical calibration compares realized frequencies with announced
probabilities, typically within forecast strata
\citep{dawid1982,foster1998}.  Property~C3 of
Definition~\ref{def:three-coherence} is related but distinct: it is
measure-relative self-consistency under the forecaster's own
predictive law.  Neither notion is identified with the other here.
The Bernoulli plug-in counterexample (Theorem~\ref{thm:insuff}) shows
that C3 does not imply Blackwell admissibility under log loss, and
empirical calibration alone likewise gives no log-loss
non-dominance guarantee.  If calibration is required, it can be
stated as a feasibility constraint before Bayesian log loss is
optimized within the declared feasible class.

\subsection{Sequential testing and safe monitoring}%
\label{sec:app-clinical}

AV-admissibility concerns type-I error control at every
data-dependent stopping time, whereas Neyman--Pearson optimality
concerns power at a fixed sample size; they are different criteria.
Within AV, the
constrained Bayes schema selects a prior or mixture to optimize
expected growth and power inside $\cC_{\mathrm{AV}}$, recovering
the growth-rate optimal $e$-value (GROW) construction of
\citet{grunwald2024} as an instance of the schema.  Fixed-sample
power and anytime-valid safety remain distinct criteria.

\subsection{Conformal wrappers for black-box predictors}%
\label{sec:app-conformal}

A conformal wrapper produces a predictive system $\Delta\in\Sigma$
that lies in $\mathfrak{C}$ because its set coordinate
$\hat C^{\mathrm{conf}}$ is expected-length efficient within its
rank-indexed family in $\cF_{\mathrm{Cov}}(\alpha)$
(Proposition~\ref{prop:conf-admissible}), while the predictor
coordinate may or may not lie in $\mathfrak{B}$.  A conformal
wrapper adds a coverage guarantee to the system but does not
certify the base predictor as Blackwell admissible; conversely, a
Bayes or log-loss-optimal predictor does not supply marginal
coverage.  See \citet{angelopoulos2023} for a survey of conformal
wrappers.  The conformal impossibility result
of \citet{foygel2021}
(Theorem~\ref{thm:conformal-impossibility}) remains the binding
constraint on conditional coverage.

\begin{figure}[ht]
\centering
\resizebox{0.92\linewidth}{!}{%
\begin{tikzpicture}[>=Stealth, scale=1.0,
  appbox/.style={draw, thick, rounded corners=5pt,
    minimum width=3.6cm, minimum height=1.8cm, align=center,
    font=\footnotesize},
  cbbox/.style={draw, thick, rounded corners=4pt, fill=gray!10,
    minimum width=7.0cm, minimum height=1.4cm, align=center,
    font=\small},
  arr/.style={->, thick, black!70}
]
  \node[cbbox] (CB) at (8.0,3.5)
    {\textbf{Constrained Bayes schema}\\[2pt]
     $\min\limits_{\delta\in\cF_C\subseteq\cD_C}
       \int_\Theta \cR_C(\theta,\delta)\,\dx\Pi(\theta)$};

  \node[appbox, fill=blue!8] (F) at (0.6,0)
    {Forecasting / calibration\\[3pt]
     \scriptsize $\cD_C=\cD,\
     \cF_C=\{\delta:\mathrm{calibrated}\}$\\[1pt]
     \scriptsize Objective: log loss};
  \node[appbox, fill=green!8] (T) at (5.4,0)
    {Sequential monitoring\\[3pt]
     \scriptsize $\cD_C=\cF_C=\cC_{\mathrm{AV}}$\\[1pt]
     \scriptsize Certificate: point-null AV martingale};
  \node[appbox, fill=orange!8] (C) at (10.2,0)
    {Conformal wrapper\\[3pt]
     \scriptsize $\cF_C=\cF_{\mathrm{Cov}}(\alpha)\cap\cG\subseteq\cD_C$\\[1pt]
     \scriptsize Certificate: coverage};
  \node[appbox, fill=purple!8] (D) at (15.0,0)
    {Online strategy\\[3pt]
     \scriptsize $\cF_C=\cF_{\mathrm{App}}\subseteq\cD_C$\\[1pt]
     \scriptsize Example: KT under log loss};

  \draw[arr] (CB.south) -- (F.north);
  \draw[arr] (CB.south) -- (T.north);
  \draw[arr] (CB.south) -- (C.north);
  \draw[arr] (CB.south) -- (D.north);

  \node[font=\footnotesize\itshape, text=black!65] at (7.8,-1.4)
    {Specify $(\cD_C,\cF_C,\cR_C)$ first,
     then optimize Bayesian risk within $\cF_C$.};
\end{tikzpicture}%
}
\caption{The constrained Bayes template in four domains.  Each
  criterion has its own decision space $\cD_C$, feasible set
  $\cF_C\subseteq\cD_C$, and risk functional $\cR_C$; Bayesian
  integrated risk is optimized within $\cF_C$
  (Definition~\ref{def:cb}).}
\label{fig:applications-flowchart}
\end{figure}

\section{Discussion}\label{sec:conclusion}

The separation theorems show that none of the four declared
criterion classes subsumes another.  An external product or
meta-order can be imposed, but it adds a choice not supplied by the
four criteria.  The Bernoulli laboratory gives explicit witnesses
for each ordered pair.  Within the Blackwell geometry,
Proposition~\ref{prop:boundary} and Theorem~\ref{thm:supporting}
identify finite-valued no-shame risk vectors with lower-boundary
points supported by a prior; Corollaries~\ref{cor:noshame}
and~\ref{cor:noshame-compact} give full-support conditions for
finite and compact $\Theta$, respectively.

Martingale coherence links single-prior Bayes prediction with
point-null anytime-valid admissibility, but the link is
measure-relative and does not extend to coverage or CApp.  The
plug-in example shows that self-consistency under the forecaster's
own predictive distribution is not the same as non-dominance under
the data-generating process.

The constrained Bayes formulation (Definition~\ref{def:cb}) is a
common design template, not a common ordering: each criterion
retains its own decision space, feasible set, and risk functional.
When the restricted risk set is finite-dimensional, closed,
convex, and finite-valued, $\Pi$ is a normalized normal to a
supporting hyperplane of that criterion's restricted risk frontier
(Remark~\ref{rem:prior-sensitivity}).  Under log loss the CApp
witness is the KT forecaster (Lemma~\ref{lem:df-boundary}); under
calibration-error objectives, defensive forecasting
\citep{vovk2005defensive} and Foster--Vohra \citep{foster1998}
provide fixed-point constructions.

Several directions remain open.  Extending the separation theorem
to nonparametric and infinite-dimensional models would clarify the
scope of the $\Sigma$-space construction.  Understanding how
constrained Bayes interacts with minimax phenomena, especially in
high-dimensional regimes where James--Stein-type effects arise,
would connect the framework to classical complete-class theory.
Finally, modern ML pipelines raise a practical multi-objective
question: how should one choose, combine, and communicate
certificates when calibration, coverage, and sequential validity
are all desired?

\paragraph{Broader-impact remark.}
In deployed predictive systems, practitioners should state which
criterion is being used, why it is appropriate, and which
guarantees are \emph{not} provided.  A system satisfying one
criterion does not inherit the guarantees of the others without an
additional argument.

\bibliographystyle{plainnat}
\bibliography{bayes_no_shame_arxiv_final}

@book{blackwell1954,
  author    = {David Blackwell and Meyer A. Girshick},
  title     = {Theory of Games and Statistical Decisions},
  publisher = {Wiley},
  address   = {New York},
  year      = {1954}
}

@article{blackwell1956,
  author    = {David Blackwell},
  title     = {An Analog of the Minimax Theorem for Vector Payoffs},
  journal   = {Pacific Journal of Mathematics},
  volume    = {6},
  number    = {1},
  pages     = {1--8},
  year      = {1956}
}

@book{wald1950,
  author    = {Abraham Wald},
  title     = {Statistical Decision Functions},
  publisher = {Wiley},
  address   = {New York},
  year      = {1950}
}

@book{rockafellar1970,
  author    = {R. Tyrrell Rockafellar},
  title     = {Convex Analysis},
  publisher = {Princeton University Press},
  address   = {Princeton, NJ},
  year      = {1970}
}

@book{aliprantis2006,
  author    = {Charalambos D. Aliprantis and Kim C. Border},
  title     = {Infinite Dimensional Analysis: A Hitchhiker's Guide},
  edition   = {Third},
  publisher = {Springer},
  address   = {Berlin},
  year      = {2006}
}

@book{cesabianchi2006,
  author    = {Nicol\`{o} Cesa-Bianchi and G\'{a}bor Lugosi},
  title     = {Prediction, Learning, and Games},
  publisher = {Cambridge University Press},
  address   = {Cambridge},
  year      = {2006}
}

@article{definetti1937,
  author    = {Bruno {De Finetti}},
  title     = {La pr\'{e}vision: ses lois logiques, ses sources subjectives},
  journal   = {Annales de l'Institut Henri Poincar\'{e}},
  volume    = {7},
  number    = {1},
  pages     = {1--68},
  year      = {1937}
}

@article{hewitt1955,
  author    = {Edwin Hewitt and Leonard J. Savage},
  title     = {Symmetric Measures on {C}artesian Products},
  journal   = {Transactions of the American Mathematical Society},
  volume    = {80},
  number    = {2},
  pages     = {470--501},
  year      = {1955}
}

@article{fong2024,
  author    = {Edwin Fong and Chris Holmes and Stephen G. Walker},
  title     = {Martingale Posterior Distributions},
  journal   = {Journal of the Royal Statistical Society: Series B},
  volume    = {85},
  number    = {5},
  pages     = {1357--1391},
  year      = {2023}
}

@article{ramdas2022,
  author    = {Aaditya Ramdas and Johannes Ruf and Martin Larsson
               and Wouter Koolen},
  title     = {Admissible Anytime-Valid Sequential Inference Must Rely
               on Nonnegative Martingales},
  journal   = {arXiv preprint arXiv:2009.03167},
  year      = {2022}
}

@article{ramdas2023,
  author    = {Aaditya Ramdas and Peter Gr\"{u}nwald and Vladimir Vovk
               and Glenn Shafer},
  title     = {Game-Theoretic Statistics and Safe Anytime-Valid Inference},
  journal   = {Statistical Science},
  volume    = {38},
  number    = {4},
  pages     = {576--601},
  year      = {2023}
}

@article{grunwald2024,
  author    = {Peter Gr\"{u}nwald and Rianne {de Heide} and Wouter Koolen},
  title     = {Safe Testing},
  journal   = {Journal of the Royal Statistical Society: Series B},
  volume    = {86},
  number    = {5},
  pages     = {1091--1128},
  year      = {2024},
  note      = {Preprint: arXiv:1906.07801}
}

@article{foygel2021,
  author    = {Rina {Foygel Barber} and Emmanuel J. Cand\`{e}s and Aaditya
               Ramdas and Ryan J. Tibshirani},
  title     = {The Limits of Distribution-Free Conditional Predictive Inference},
  journal   = {Information and Inference: A Journal of the IMA},
  volume    = {10},
  number    = {2},
  pages     = {455--482},
  year      = {2021}
}

@book{vovk2005,
  author    = {Vladimir Vovk and Alex Gammerman and Glenn Shafer},
  title     = {Algorithmic Learning in a Random World},
  publisher = {Springer},
  address   = {New York},
  year      = {2005}
}

@inproceedings{vovk2005defensive,
  author    = {Vladimir Vovk and Akimichi Takemura and Glenn Shafer},
  title     = {Defensive Forecasting},
  booktitle = {Proceedings of the Tenth International Workshop on
               Artificial Intelligence and Statistics ({AISTATS})},
  pages     = {365--372},
  year      = {2005}
}

@book{ville1939,
  author    = {Jean Ville},
  title     = {\'{E}tude critique de la notion de collectif},
  publisher = {Gauthier-Villars},
  address   = {Paris},
  year      = {1939}
}

@article{gneiting2007,
  author    = {Tilmann Gneiting and Adrian E. Raftery},
  title     = {Strictly Proper Scoring Rules, Prediction, and Estimation},
  journal   = {Journal of the American Statistical Association},
  volume    = {102},
  number    = {477},
  pages     = {359--378},
  year      = {2007}
}

@article{doob1949,
  author    = {Joseph L. Doob},
  title     = {Application of the Theory of Martingales},
  journal   = {Le Calcul des Probabilit\'{e}s et ses Applications},
  publisher = {CNRS},
  pages     = {23--27},
  year      = {1949}
}

@article{kleijn2012,
  author    = {Bas J. K. Kleijn and Aad W. {van der Vaart}},
  title     = {The {B}ernstein--{v}on {M}ises Theorem Under Misspecification},
  journal   = {Electronic Journal of Statistics},
  volume    = {6},
  pages     = {354--381},
  year      = {2012}
}

@book{shafer2019,
  author    = {Glenn Shafer and Vladimir Vovk},
  title     = {Game-Theoretic Foundations for Probability and Finance},
  publisher = {Wiley},
  address   = {Hoboken, NJ},
  year      = {2019}
}

@inproceedings{abernethy2011,
  author    = {Jacob D. Abernethy and Peter L. Bartlett and Elad Hazan},
  title     = {Blackwell Approachability and No-Regret Learning Are Equivalent},
  booktitle = {Proceedings of the 24th Annual Conference on Learning
               Theory ({COLT})},
  pages     = {27--46},
  year      = {2011}
}

@article{hannan1957,
  author    = {James Hannan},
  title     = {Approximation to {B}ayes Risk in Repeated Play},
  journal   = {Contributions to the Theory of Games},
  volume    = {3},
  pages     = {97--139},
  year      = {1957},
  publisher = {Princeton University Press}
}

@book{berge1963,
  author    = {Claude Berge},
  title     = {Topological Spaces},
  publisher = {Oliver and Boyd},
  address   = {Edinburgh},
  year      = {1963}
}

@article{krichevsky1981,
  author    = {Raphail E. Krichevsky and Victor K. Trofimov},
  title     = {The Performance of Universal Encoding},
  journal   = {IEEE Transactions on Information Theory},
  volume    = {27},
  number    = {2},
  pages     = {199--207},
  year      = {1981}
}

@article{dawid1982,
  author    = {A. Philip Dawid},
  title     = {The Well-Calibrated {B}ayesian},
  journal   = {Journal of the American Statistical Association},
  volume    = {77},
  number    = {379},
  pages     = {605--610},
  year      = {1982}
}

@article{angelopoulos2023,
  author    = {Anastasios N. Angelopoulos and Stephen Bates},
  title     = {Conformal Prediction: A Gentle Introduction},
  journal   = {Foundations and Trends in Machine Learning},
  volume    = {16},
  number    = {4},
  pages     = {494--591},
  year      = {2023}
}

@book{williams1993shame,
  author    = {Bernard Williams},
  title     = {Shame and Necessity},
  publisher = {University of California Press},
  address   = {Berkeley},
  year      = {1993}
}

@article{foster1998,
  author    = {Dean P. Foster and Rakesh V. Vohra},
  title     = {Asymptotic Calibration},
  journal   = {Biometrika},
  volume    = {85},
  number    = {2},
  pages     = {379--390},
  year      = {1998}
}

@article{hart2001,
  author    = {Sergiu Hart and Andreu Mas-Colell},
  title     = {A General Class of Adaptive Strategies},
  journal   = {Journal of Economic Theory},
  volume    = {98},
  number    = {1},
  pages     = {26--54},
  year      = {2001}
}

@article{choe2026filtrations,
  author    = {Yo Joong Choe and Aaditya Ramdas},
  title     = {Combining Evidence Across Filtrations},
  journal   = {Journal of the Royal Statistical Society: Series B
               (Statistical Methodology)},
  year      = {2026},
  pages     = {qkag058},
  doi       = {10.1093/jrsssb/qkag058},
  note      = {Advance access; arXiv:2402.09698}
}

\end{document}